\setlist{topsep=0pt, leftmargin=*}
\newcommand{\citep}[1]{\cite{#1}}
\title{\LARGE \bf
Regret Analysis of Multi-task Representation Learning for Linear-Quadratic Adaptive Control
}
\author{Bruce D.\ Lee$^{*,1}$, Leonardo F.\ Toso$^{*,2}$,
Thomas T.C.K.\ Zhang$^{*,1}$, 
James Anderson$^{2}$, and
Nikolai Matni$^{1}$% <-this % stops a space
\thanks{$^*$ $\alpha$-$\beta$ equal contribution}
\thanks{$^1$ Electrical and Systems Engineering, University of Pennsylvania. Emails: \tt\small\{brucele, ttz2, nmatni\}@seas.upenn.edu}%
\thanks{$^{2}$Electrical Engineering, Columbia University. Emails: \tt\small\{lt2879, james.anderson\}@columbia.edu}%
}
\begin{document}

\twocolumn

\maketitle
\thispagestyle{empty}
\pagestyle{empty}

%%%%%%%%%%%%%%%%%%%%%%%%%%%%%%%%%%%%%%%%%%%%%%%%%%%%%%%%%%%%%%%%%%%%%%%%%%%%%%%%
\begin{abstract}
    Representation learning is a powerful tool that enables learning over large multitudes of agents or domains by enforcing that all agents operate on a shared set of learned features. 
    % Enforcing that agents share a common set of features can aid in boosting performance across individual tasks and improving generalization to downstream tasks, compared to requiring each agent fit their individual models without knowledge of their peers.
    However, many robotics or controls applications that would benefit from collaboration operate in settings with changing environments and goals, whereas most guarantees for representation learning are stated for static settings. Toward rigorously establishing the benefit of representation learning in dynamic settings, we analyze the regret of multi-task representation learning for linear-quadratic control. This setting introduces unique challenges. Firstly, we must account for and balance the \emph{misspecification} introduced by an approximate representation. Secondly, we cannot rely on the parameter update schemes of single-task online LQR, for which least-squares often suffices, and must devise a novel scheme to ensure sufficient improvement. We demonstrate that for settings where exploration is ``benign'', the regret of any agent after $T$ timesteps scales as $\tilde\calO(\sqrt{T/H})$, where $H$ is the number of agents. In settings with ``difficult'' exploration, the regret scales as $\tilde\calO(\sqrt{\du \dtheta} \sqrt{T} + T^{3/4}/H^{1/5})$, where $\dx$ is the state-space dimension, $\du$ is the input dimension, and $\dtheta$ is the task-specific parameter count. In both cases, by comparing to the minimax single-task regret $\calO(\sqrt{\dx\du^2}\sqrt{T})$, we see a benefit of a large number of agents. Notably, in the difficult exploration case, by sharing a representation across tasks, the effective task-specific parameter count can often be small $\dtheta < \dx\du$. Lastly, we provide numerical validation of the trends we predict.
\end{abstract}
\section{Introduction}

Many modern applications of robotics and controls involve simultaneous control over a large number of agents. For example, robot fleet learning, in which fleets of robots performing diverse tasks share information to learn more effectively, has demonstrated impressive success in recent years \citep{brohan2022rt, wang2023fleet}. One of the technologies that enables this success is \textit{transfer learning}, in which dynamics models or control policies built upon learned compressed features (also known as \textit{representation learning}) that are broadly useful for ensuing tasks of interest. Existing work which characterizes the generalization capabilities of transfer learning largely considers static environments, where data from an agent's completed task is aggregated with data from other agents to learn the shared features offline, rather than during task execution. However, it is often relevant to have a fleet of agents adapt quickly to a changing environment, e.g.\ a team of drones  flying in close proximity adapting to weather conditions, or a team of legged robots adapting to changing terrain conditions. In such settings, the agents must communicate to adjust their shared features online. 

In this work, we rigorously study such approaches for online fleet learning with dynamical systems in the analytically tractable setting of adaptive linear-quadratic (state-feedback) control. Adaptive linear-quadratic control has emerged as a benchmark for learning to control dynamical systems using online data. This consists of a learner interacting with an unknown linear system
\begin{align}
    \label{eq: single dynamics}
    x_{t+1} = A_\star x_t + B_\star u_t + w_t, \quad t \geq 1,
\end{align} 
with state $x_t$, input $u_t$, and noise $w_t$  assuming values in $\R^{\dx}$, $\R^{\du}$, and $\R^{\dx}$, respectively. The learner is evaluated by its incurred  \emph{regret}, which compares the cost incurred by playing the learner for $T$ time steps against the cost attained by the optimal LQR controller. Prior work typically studies regret of a single dynamical system of the form \eqref{eq: single dynamics}. In this work, we study a setting where there are $H \gg 1$ distinct systems which share an unknown $\dtheta$-dimensional dynamics basis. Each agent aims to minimize their individual linear-quadratic control objective; however, by communicating they may more efficiently learn the shared dynamics basis matrices. The broad questions we address are the following:
\begin{itemize}[leftmargin=*]
    \item What are the requisite algorithmic elements that enable simultaneous online control of \textit{multiple} systems?
    \item What are the concrete benefits of sharing a representation across agents compared with learning individual models for each agent?
\end{itemize}
%Proofs can be found in the extended manuscript \citep{lee2024regret}.
% \href{https://thomaszh3.github.io/research/multitask_sysid_regret.pdf}{[Link]}.

\subsection{Related Work}

\noindent\textbf{Fleet Learning: }
Fleet Policy Learning considers a setting where a dataset is obtained from a diverse collection of robot interactions. It has been studied from the perspective of offline reinforcement learning  \citep{kumar2022pre} and for multi-task behavior cloning \citep{brohan2022rt, brohan2023rt, goyal2023rvt}. The centralized setup considered in this line of work is challenging to scale to many platforms. In particular, data communication and storage can become prohibitive, as can the training of the model.  Frameworks have also proposed and analyzed a weight merging approach where each platform learns a policy, and then communicates the weights to a central server that merges the weights \citep{wang2023fleet}. This work focuses on aggregating more skills by communicating, however the communication can also be used by multiple agents to adapt to a changing environment. This is the framework we analyze in this paper, where agents communicate their estimates for a set of shared parameters. This bears resemblance to certain federated or distributed learning settings with heterogeneous data, where due to privacy or compute constraints agents do not centralize raw data \citep{collins2021exploiting, ma2022state, tan2022fedproto}.

% We consider a related federated or distributed learning setup where weights are iteratively shared and updated, a setting common in distributed optimization and federated learning \Bruce{Add some refs for the big results in federated learning}. 

\noindent\textbf{Multi-Task Learning (in Dynamical Systems):} 
% \textcolor{blue}{Leo: To add a few more references and refine it. To do it on Tuesday 19th.}
Multi-task learning has long been studied in machine learning \citep{baxter2000model}. More recently, multiple works have studied the benefit of a shared representation in iid learning with regard to generalization \citep{du2020few, tripuraneni2020theory} and efficient algorithms \citep{collins2021exploiting, vaswani2024efficient, thekumparampil2021sample, tripuraneni2021provable}. However, data generated from dynamical systems break key assumptions in these works.
With respect to dynamical systems, multiple works consider a parallel setting where all agents share a parameter space and task-specialization comes from perturbations therein, see Model-agnostic meta-learning (MAML) \citep{finn2017model}.\footnote{This is distinct from our setting, where agents share a \textit{representation} function and task-specialization comes from linear functions of the representation.}
Both model-free federated learning of the linear-quadratic regulator with data from heterogeneous systems \cite{wang2023model} and MAML for linear-quadratic control have been considered \cite{toso2024meta}.  However, both of these settings only recover optimality up to a heterogeneity bias. By instead imposing all dynamics matrices share a common basis \cite{modi2021joint}, one can ensure the error decreases to zero as data increases.
% To propose a computationally efficient algorithm to update the shared basis matrices, we may leverage the approach proposed in \cite{zhang2024sampleefficient}, which provides contraction guarantees on the gradient descent updates of subspace distance of such a shared basis using temporally correlated streaming data. 
Analogous multi-task learning over dynamical systems settings have also been considered in imitation learning \citep{zhang2023multi, guo2023imitation}.  Most relevant to our work is \citet{zhang2024sampleefficient}, where the shortcomings of algorithms for iid representation learning are addressed for a related linear system-identification set-up. A component of our algorithm is adapted from their work.

\noindent\textbf{Regret Analysis of Adaptive Control: }
Our setting and analysis builds off recent work that attempts to provide finite sample guarantees for adaptive control by controlling the \emph{regret} of the learning algorithm. While adaptive control has a rich history beginning with autopilot development for high-performance aircraft in the 1950s \citep{gregory1959proceedings}, finite sample regret analysis  of adaptive control arose much later \cite{abbasi2011regret}. Subsequent work \citep{dean2018regret, cohen2019learning, mania2019certainty} has introduced algorithms that yield $\sqrt{T}$ regret, and are computationally feasible. \citet{simchowitz2020naive} establish corresponding lower bounds, indicating that a rate of $\sqrt{\du^2\dx T}$ is optimal for completely unknown systems. Improved regret bounds of $\texttt{poly}(\log T)$ are achievable when either $A^\star$ or $B^\star$ is known \citep{cassel2020logarithmic, jedra2022minimal}. The aforementioned work studies adaptive control in a setting where the noise is zero-mean and stochastic. Alternative formulations of the adaptive LQR problem consider bounded adversarial disturbances \citep{hazan2020nonstochastic, simchowitz2020improper} and settings where there is misspecification between the underlying data generating process and the model class \cite{ghai2022robust, lee2023nonasymptotic}. Our work extends analogous regret analysis to the multi-agent setting.

\subsection{Contribution}

We propose and analyze fleet linear-quadratic adaptive control in a setting where multiple linear systems driven by dynamics in the span of $\dtheta$ common basis matrices can communicate to drastically improve their individual control objectives. We propose such an algorithm and analyze the regret incurred, uncovering an interesting transition distinguishing the difficulty of the problem:
\begin{itemize}
    \item When the system specific parameters are ``benign'' to identify, our proposed  scheme incurs a regret of 
    \begin{align*}
        R_T = \tilde \calO\paren{ \sqrt{\frac{T}{H}}},
    \end{align*}
    where $H$ is the number of communicating agents. When there are many agents, this is drastically lower than the regret $\calO(\sqrt{\dx \du^2 T}$) incurred if each agent had to learn to control its respective system without communication.
    
    \item When the system-specific parameters are challenging to identify, our proposed algorithm incurs a regret of at most \begin{align*}
        R_T = \tilde \calO \paren{\sqrt{\du \dtheta }\sqrt{T} +  \frac{T^{3/4}}{H^{1/5 }}}.
    \end{align*}
    When $T$ is moderate, or if the number of agents $H$ is large, this can demonstrate a marked gain over the single-agent setting. However, when $T$ is large, the $T^{3/4}$ term dominates, which arises due to the mismatch between the difficulty of parameter identification and the misspecification of the learned basis directions.
    
\end{itemize}

In order to establish such guarantees, we propose and analyze a new algorithm that synthesizes tools from regret analysis of misspecified linear system identification and algorithmic analysis of multi-task linear regression. In particular, the multi-agent setting introduces unique challenges:
\begin{itemize}
    \item Due to the approximate representation at any given timestep, the problem is misspecified. Therefore, in addition to the standard explore-commit tradeoff, we must account for improving the representation.
    \item Whereas for prior work in the stochastic single-agent setting least-squares--whose optimization and generalization is well-understood--suffices algorithmically, such an analog is not well-posed for the multiple agent setting.
\end{itemize}
% \Thomas{Will add a line that emphasizes non-trivialness to extend from single-agent since we cannot rely on least-squares.}
We validate our theory with numerical simulations, and demonstrate the value of communicating with similar agents to learn to control more efficiently. 

\noindent \textbf{Notation: } The Euclidean norm of a vector $x$ is denoted $\norm{x}$. For a matrix $A$, the spectral norm is denoted $\norm{A}$, and the Frobenius norm is denoted $\norm{A}_F$. 
%We use $\dagger$ to denote the Moore-Penrose pseudo-inverse. 
The spectral radius of a square matrix is denoted $\rho(A)$. A symmetric, positive semi-definite (psd) matrix $A = A^\top$ is denoted $A \succeq 0$. 
% Similarly $A \succeq B$ denotes that $A-B$ is positive semidefinite. 
The $\scurly{\min, \max}$ eigenvalue of a psd matrix $A$ is denoted $\scurly{\lambda_{\min}(A), \lambda_{\max}(A)}$. For a positive definite matrix $A$, we denote the condition number as $\kappa(A) \triangleq \frac{\lambda_{\max}(A)}{\lambda_{\min}(A)}$.
We denote the normal distribution with mean $\mu$ and covariance $\Sigma$ by  $\calN(\mu, \Sigma)$. 
For $f,g :D \to \mathbb{R}$, we write $f \lesssim g$ if for some $c > 0$, $f(x) \leq c g(x)\,\forall x \in D$. We denote the solutions to the discrete Lyapunov equation by $\dlyap(A,Q)$ and the discrete algebraic Riccati equation by $\dare(A,B,Q,R)$. 
For an integer $n\in \N$, we define the shorthand $[n] \triangleq\{1,\dots,n\}$.
Generally, we use $\scurly{\wedge, \vee}$ to denote a $\scurly{\min, \max}$ over an indicated quantity.

%We use $x \vee y$ to denote the maximum of $x$ and $y$.
\section{Problem Formulation}

\subsection{System and Data assumptions} 

Consider $H$ systems with dynamics defined by 
\begin{align}
    \label{eq: dynamics}
    x_{t+1}^{(h)} = A_\star^{(h)} x_t^{(h)} + B_\star^{(h)} u_t^{(h)} + w_t^{(h)}, \quad t \geq 1,
\end{align}
for $h\in[H]$. We suppose that each rollout starts from initial state $x_1^{(h)}=0$ for $h \in [H]$, and that that the noise $w_t^{(h)}$ has iid elements that are mean zero and $\sigma^2$-sub-Gaussian for some $\sigma^2\in \R$ with $\sigma^2 \geq 1$ \citep{vershynin2018high}. We additionally assume that the noise has identity covariance: $\E \brac{w_t^{(h)}  w_t^{(h),\top}} = I$.\footnote{Noise that enters the process through a non-singular matrix $S$ can be addressed by rescaling the dynamics by $S^{-1}$.} %\Thomas{$\sigma$-subgaussian is a little bit awkward with Id-covariance; is $\sigma$-subgaussian noise only for DFW analysis?} 
We suppose the dynamics matrices admit the  decomposition
\begin{align}
    \label{eq: low dim structure}
    \bmat{A_\star^{(k)} & B_\star^{(k)}} &= \VEC^{-1}\paren{\Phi_\star \theta_\star^{(k)}}, 
\end{align}
where $\Phi_\star\in \R^{\dx(\dx+\du)\times \dtheta}$ is a column-orthonormal matrix that contains an optimal set of $\dtheta$ (vectorized) basis matrices in $\R^{\dx(\dx + \du)}$, and $\theta_\star^{(k)} \in\R^{\dtheta}$ are agent-specific parameters. The operator $\VEC^{-1}$ maps a vector in $\R^{\dx(\dx+\du)}$ into a matrix in $\R^{\dx \times (\dx+\du)}$ by stacking contiguous length-$\dx$ blocks of a vector (top-to-bottom) into columns of a matrix (left-to-right). We can equivalently write this as a linear combination of basis matrices:
\begin{align*}
    \bmat{A_\star^{(k)} & B_\star^{(k)}} = \sum_{i=1}^{\dtheta} \theta_{\star,i}^{(k)} \bmat{\Phi^{A}_{\star,i} & \Phi^{B}_{\star,i}},
\end{align*}
where $\bmat{\Phi^{A}_{\star,i} & \Phi^{B}_{\star,i}} = \VEC^{-1} \Phi_{\star,i}$
and $\Phi_{\star,i}$ is the $i^{\mathrm{th}}$ column of $\Phi_\star$. This decomposition of the data generating process is a natural extension of the low-rank linear representations considered in \cite{du2020few, zhang2023multi, zhang2024sampleefficient} to the setting of multiple related dynamical systems with shared structure determined by $\Phi_\star$. A version of this model for autonomous systems was considered by \cite{modi2021joint} for multi-task system identification.

\subsection{Control Objective}

The goal of the learners is to interact with system \eqref{eq: dynamics} while keeping the total cumulative cost small, where the system specific cumulative cost for system $h$ is defined for matrices $Q \succeq I$ and $R = I$ as\footnote{Generalizing to arbitrary $Q\succ 0$ and $R\succ 0$ can be performed by scaling the cost and changing the input basis.} 
\begin{align*}
    % \label{eq: cost}
    C_T^{(h)} \triangleq \sum_{t=1}^T c_t^{(h)}, \,\mbox{and}\, c_t^{(h)} \triangleq x_t^{(h),\top} Q x_t^{(h)}  + u_t^{(h),\top} R u_t^{(h)}.
\end{align*} 
To define an algorithm that keeps the cost small, we first introduce the infinite horizon LQR cost:
\begin{align}
    \label{eq: lqr cost}
    \calJ^{(h)}(K) \triangleq \limsup_{T\to\infty} \frac{1}{T} \E^K C_T^{(h)},
\end{align}
where the superscript $K$ denotes evaluation under the state-feedback controller $u_t^{(h)} = Kx_t^{(h)}$. To ensure that there exists a controller such that \eqref{eq: lqr cost} is finite, we assume $(A_\star^{(h)}, B_\star^{(h)})$ is stabilizable for all $h \in [H]$. Under this assumption, 
\eqref{eq: lqr cost} is minimized by the LQR controller $K_{\infty}(A_\star^{(h)},B_\star^{(h)})$, where
\begin{align*}
    % $
    K_{\infty}(A,B) &\triangleq - (B^\top P_{\infty}(A,B) B + R)^{-1} B^\top P_{\infty}(A,B) A,\\
    P_{\infty}(A, B) &\triangleq \dare(A,B,Q,R).
    % $
\end{align*}
We define the shorthands $P^{(h)}_\star \triangleq P_{\infty}(A_\star^{(h)}, B_\star^{(h)})$ and $K_\star^{(h)} \triangleq K_\infty(A_\star^{(h)}, B_\star^{(h)})$ for all $h \in [H]$. To characterize the infinite-horizon LQR cost of an arbitrary stabilizing controller $K$, we additionally define the solution $P_K^{(h)}$ to the Lyapunov equation for the closed loop system under an arbitrary $K$ where $\rho(A_\star^{(h)} + B_\star^{(h)} K) < 1$: 
\begin{align*}
    P_K^{(h)} \triangleq \dlyap(A_\star^{(h)} + B_\star^{(h)} K, Q+ K^\top R K).
\end{align*}
For a controller $K$ satisfying $\rho(A_\star^{(h)} + B_\star^{(h)} K) < 1$, $\calJ^{(h)}(K) = \trace(P_K^{(h)})$. We have that $P_{K_\star^{(h)}}^{(h)} = P_\star^{(h)}$. 

The infinite horizon LQR controller provides a baseline level of performance that our learner cannot surpass in the limit as $T \to \infty$. We quantify the performance of our learning algorithm by comparing the cumulative cost $C_T^{(h)}$ to the scaled infinite horizon cost attained by the LQR controller if the system matrices $\bmat{A_\star^{(h)} & B_\star^{(h)}}$ were known:
\begin{align}
    \label{eq: regret}
    \calR_T^{(h)} \triangleq C_T^{(h)} - T \calJ^{(h)}(K_\star^{(h)}).
\end{align}
This metric has previously been considered for adaptive control of a single system \citep{abbasi2011regret}. 
% \Bruce{Make this more crisp}
The above formulation casts the goal of the learner as interacting with each system \eqref{eq: dynamics} to maximize the information required for control while simultaneously regulating each system to minimize $\calR_T^{(h)}$. The learner uses its history of interaction with each system to do so by constructing dynamics models, e.g. by determining estimates $\hat A^{(h)}$
and $\hat B^{(h)}$. It may then use these estimates as part of a \emph{certainty equivalent} (CE) design by synthesizing controllers $\hat K^{(h)} = K_\infty(\hat A^{(h)}, \hat B^{(h)})$. It is known from prior work that if the model estimate is sufficiently close to the true dynamics, then the excess cost of playing the controller $\hat K^{(h)}$ is bounded by its parameter estimation error \citep{mania2019certainty, simchowitz2020naive}.

\begin{lemma}[Theorem 3 of \citep{simchowitz2020naive}]
    \label{lem: CE closeness main body} 
    \sloppy Define $\varepsilon^{(h)} \triangleq \tfrac{\snorm{P_\star^{(h)}}^{-10}}{3000} $.
    % $\varepsilon^{(h)} \triangleq \left(\frac{1}{54 \norm{P_\star^{(h)}}^{5}}\right)^2$.
    If 
    % \begin{align}
    % \label{eq: control safety condition}
    $
        \norm{\bmat{\hat A^{(h)} & \hat B^{(h)}}-\bmat{A_\star^{(h)} & B_\star^{(h)}}}_F^2 \leq \varepsilon^{(h)}, 
    $
    then 
    \begin{align*}
        &\calJ^{(h)}(\hat K^{(h)})  - \calJ^{(h)}(K_\star^{(h)}) \leq  \\
        &142\norm{P_\star^{(h)}}^8 \norm{\bmat{\hat A^{(h)} & \hat B^{(h)}}-\bmat{A_\star^{(h)} & B_\star^{(h)}}}_F^2.
    \end{align*}
    %%$P_{\hat K} \preceq \frac{21}{20} P^\star$, $\norm{\hat K - K^\star}\leq \frac{1}{6 \norm{P^\star}^{3/2}}$, and
\end{lemma}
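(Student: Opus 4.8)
Throughout I suppress the superscript $(h)$ and abbreviate $\Delta \triangleq \bmat{\hat A & \hat B} - \bmat{A_\star & B_\star}$. The plan is to combine two ingredients: an exact second-order formula for the LQR suboptimality gap of any stabilizing controller, and a Riccati perturbation bound that controls the certainty-equivalent controller error $\norm{\hat K - K_\star}$ by the model error $\norm{\Delta}_F$.

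First I would record the standard LQR gap identity: for any $K$ with $\rho(A_\star + B_\star K) < 1$,
\[
    \calJ(K) - \calJ(K_\star) = \trace\!\paren{\Sigma_K\,(K - K_\star)^\top (R + B_\star^\top P_\star B_\star)(K - K_\star)},
\]
where $\Sigma_K \triangleq \dlyap\paren{(A_\star + B_\star K)^\top, I}$ is the stationary state covariance under $K$. This follows from the Lyapunov cost-difference lemma, the first-order term vanishing because $K_\star$ minimizes $\calJ$. Since the identity is quadratic in $K - K_\star$, it reduces the problem to: (i) bounding $\norm{\Sigma_K} \le \norm{P_K}^2$, which follows from the geometric decay $\norm{(A_\star + B_\star K)^t}^2 \le \norm{P_K}\paren{1 - \norm{P_K}^{-1}}^t$ implied by the Lyapunov inequality $P_K \succeq (A_\star + B_\star K)^\top P_K (A_\star + B_\star K) + I$ (using $Q \succeq I$, $R = I$); (ii) bounding $\norm{R + B_\star^\top P_\star B_\star}$ by $\norm{P_\star}$ and $\norm{B_\star}$; and (iii) bounding $\norm{\hat K - K_\star}$.

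The crux is (iii): showing the map $(A,B) \mapsto K_\infty(A,B)$ is locally Lipschitz with constant polynomial in $\norm{P_\star}$, so that $\norm{\hat K - K_\star} \lesssim \norm{P_\star}^{c}\norm{\Delta}_F$ on the relevant neighborhood. I would establish this by perturbation analysis of the discrete algebraic Riccati equation: under $\norm{\Delta}_F^2 \le \varepsilon$ one shows $\hat P = P_\infty(\hat A, \hat B)$ remains in a ball around $P_\star$ on which the Riccati operator is a contraction, yielding linear-in-$\norm{\Delta}_F$ control of $\norm{\hat P - P_\star}$ and hence of $\norm{\hat K - K_\star}$. The threshold $\varepsilon = \norm{P_\star}^{-10}/3000$ is calibrated precisely to guarantee both that $\hat K$ stabilizes the \emph{true} system --- so the gap identity applies and $\norm{\Sigma_{\hat K}} \lesssim \norm{P_\star}^2$ via step (i) with $\norm{P_{\hat K}} \lesssim \norm{P_\star}$ --- and that $\hat P$ lies within the Riccati contraction radius.

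The main obstacle is the bookkeeping, not the ideas: propagating the exact constants and exponents through the Riccati perturbation step, and reducing the auxiliary quantities ($\norm{B_\star}$, $\norm{K_\star}$, the closed-loop decay rate) to a prefactor depending on $\norm{P_\star}$ alone, in order to land on the advertised $142\norm{P_\star}^8$. This requires the standard battery of LQR perturbation inequalities under the normalization $Q \succeq I$, $R = I$, $\E[w w^\top] = I$, which is tedious but routine. Squaring the controller-error bound from (iii) and substituting it, together with (i) and (ii), into the gap identity then yields $\calJ(\hat K) - \calJ(K_\star) \lesssim \norm{P_\star}^8\norm{\Delta}_F^2$, as claimed.
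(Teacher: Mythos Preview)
The paper does not prove this lemma at all; it is quoted verbatim as Theorem~3 of \cite{simchowitz2020naive} and used as a black box, so there is no proof in the paper to compare against. Your sketch --- the quadratic LQR gap identity in $K-K_\star$, bounding the closed-loop covariance via $\norm{P_K}$, and the local Lipschitz bound on $(A,B)\mapsto K_\infty(A,B)$ obtained from Riccati perturbation analysis --- is precisely the route taken in the cited reference, so your approach is correct and essentially the same as the original source.
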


\subsection{Algorithm Description}

\setlength{\textfloatsep}{0pt}
\begin{algorithm}
 \caption{Shared-Representation Certainty-Equivalent Control with Continual Exploration} 
 \label{alg: ce with exploration}
\begin{algorithmic}[t]
\State \textbf{Input: } Stabilizing controllers $K_0^{(h)}$ for $h \in [H]$, initial epoch length $\tau_1$, number of epochs $k_{\fin}$, exploration sequence $\sigma_1^2, \sigma_2^2, \sigma_3^2, \dots \sigma_{k_{\fin}}^2$, state bound $x_b$, controller bound $K_b$, initial representation estimate $\Phi_0$, gradient steps per epoch $N$
\State \textbf{Initialize: } $\hat K_1^{(h)} \gets K_0^{(h)}$,  $\tau_0 \gets 0$, $T \gets \tau_1 2^{k_{\fin}-1}$, $\hat\Phi_1 \gets \Phi_0$.
\For{$k=1,2, \dots, k_{\fin}$}
    % \For{$j=0, \dots, \gamma_{k}-1$} 
            \State \texttt{// Data collection}
            \For{$h=1,\dots, H$ \textbf{(in parallel)}} 
            % \For{$t=\tau_{k-1}, \dots, \tau_k$}  for $h=1,\dots, H$
            \For{$t = \tau_{k-1}, \tau_{k-1} + 1, \dots, \tau_{k}$}
            
            \If{$\snorm{x_t^{(h)}}^2 \geq x_b^2 \log T$ or $\snorm{\hat K_k^{(h)}}\geq K_b$}\label{line: uniform bound}
                \State \textbf{Abort} and play $K_0^{(h)}$ forever
            \EndIf
            \State Play $u_t^{(h)} = \hat K_k^{(h)} x_t^{(h)} + \sigma_k g_t^{(h)}$, 
            
            \qquad \quad where $g_t^{(h)} \iidsim \calN(0,I)$ %and $\sigma_k^2:= \min\curly{1, \sigma_{\mathsf{in}, 1}^2 \tau_k^{-1/2} + \sigma_{\mathsf{in},2}^2 \varepsilon}$ 
        \EndFor 
        \State \texttt{// Task-wise parameter updates}
        \State $\hat \theta_k^{(h)} \gets \texttt{LS}(\hat \Phi_k, x^{(h)}_{\tau_{k-1}:\lceil \frac{3}{2} \tau_{k-1} \rceil}, u_{\tau_{k-1}:\lceil\frac{3}{2}\tau_{k-1} \rceil}^{(h)})$ 
        \State $\bmat{\hat A_{k}^{(h)} & \hat B_{k}^{(h)}} \gets \VEC^{-1} \paren{\hat \Phi_k \hat \theta_{k}^{(h)}} $
        \State $\hat K_{k+1}^{(h)} \gets K_\infty(\hat A_{k}^{(h)}, \hat B_{k}^{(h)})$
        % \State $\overline\Phi^{(h)}_{k+1} \gets \texttt{DFW}(\hat \Phi_{k}, x^{(h)}_{\lceil \frac{3}{2}\tau_{k-1}\rceil:\tau_k }, u_{\lceil\frac{3}{2} \tau_{k-1}\rceil:\tau_k}^{(h)}, N)$
        \EndFor
        \State \texttt{// Representation update}
        \State $\hat\Phi_{k+1} \gets \texttt{DFW}(\hat \Phi_{k}, x^{(1:H)}_{\lceil \frac{3}{2}\tau_{k-1}\rceil:\tau_k }, u_{\lceil\frac{3}{2} \tau_{k-1}\rceil:\tau_k}^{(1:H)}, N)$
        % \State $\hat\Phi_{k+1} \gets \texttt{thin\_QR}\paren{\frac{1}{H} \sum_{h=1}^H \overline\Phi^{(h)}_{k+1}}$  \label{line: DFW}
    \State $\tau_{k+1} \gets 2 \tau_k$
\EndFor
\end{algorithmic}
\end{algorithm}
\setlength{\textfloatsep}{0pt}
\begin{algorithm}[t]
\caption{Least squares: $\texttt{LS}(\hat \Phi, x_{1:t+1}, u_{1:t})$} 
\label{alg: least squares}
\begin{algorithmic}[1]
\State \textbf{Input:} Model structure estimate $\hat \Phi$, state data $x_{1:t+1}$, input data $u_{1:t}$
\State \textbf{Return: } $\hat \theta$, where %and $\Lambda_k$, where 
\begin{align*}
    \hat \theta &= \Lambda^\dagger \paren{\sum_{s=1}^{t} \hat \Phi^\top \paren{ \bmat{x_s \\ u_s} \otimes I_{\dx}} x_{s+1} } \quad \mbox{ and }\\
    \Lambda &= \sum_{s=1}^{t } \hat \Phi^\top \paren{ \bmat{x_s \\ u_s}\bmat{x_s \\ u_s} ^\top \otimes I_{\dx}}\hat \Phi.
\end{align*}
\end{algorithmic}
\end{algorithm}

\setlength{\textfloatsep}{0pt}
\begin{algorithm}[t]
\caption{\texttt{De-bias \& Feature Whiten}: \texttt{DFW}$(\hat \Phi, x_{1:t}^{(1:H)}, u_{1:t}^{(1:H)}, N)$}
\label{alg: dfw}
\begin{algorithmic}[1]
\State \textbf{Input:} Representation estimate $\hat \Phi$, state data $x_{1:t+1}^{(1:H)}$, input data $u_{1:t}^{(1:H)}$, gradient steps $N$, step-size $\eta$
\State Split each trajectory into subtrajectories of length $t_1$ and $t_2$, $N(t_1 + t_2) \leq t$.
% \State Set $\hat \Phi^0 \gets \hat \Phi$
\For{$n=1,\dots, N$}
\For{$h=1,\dots,H$ \textbf{in parallel}}
\State Compute weights 

\qquad $\hat\theta^{(h)}_n \gets \texttt{LS}(\hat \Phi_n, \scurly{x_s^{(h)}, u_s^{(h)}}_{s\in [t_1]})$.
\State Compute local rep.\ update $\overline\Phi_n^{(h)}$ \eqref{eq: dfw rep update} on $s \in [t_2]$.
\EndFor
\State Compute global rep.\ update 

\; $\hat \Phi^{n}, \_ \gets \texttt{thin\_QR}(\frac{1}{H} \sum_{h=1}^H \bar \Phi^{(h)}_{n})$. \label{line: DFW rep averaging}
\EndFor
\State \textbf{Return: }$\hat\Phi_+ \gets \hat \Phi_{N}$
\end{algorithmic}
\end{algorithm}

Our proposed algorithm, \Cref{alg: ce with exploration}, is a CE algorithm similar to those proposed by \citet{ cassel2020logarithmic, lee2023nonasymptotic}, which we extend to the multi-task representation learning setting. The algorithm takes a stabilizing controller $K_0^{(h)}$ for each system $h$ as an input, in addition to an initial epoch length $\tau_1$, an exploration sequence $\sigma_k^2$ for $k \in [k_{\fin}]$, state and controller bounds $x_b$ and $K_b$, an initial representation estimate $\Phi_0$, and a number of gradient steps $N$ to run on the representation per epoch. 
Starting from the initial controllers, \Cref{alg: ce with exploration} follows a doubling epoch approach. During each epoch, each agent plays their current controller with exploratory noise added with scale determined by the exploration sequence. Each agent then uses the collected data to estimate its dynamics $\bmat{\hat A^{(h)} & \hat B^{(h)}}$ by running least-squares (\Cref{alg: least squares}), fixing the current representation estimate $\hat\Phi$.\footnote{This procedure throws away data from previous epochs, and does not allow updating the model at arbitrary times. This eases the analysis, but may be undesirable. Such undesirable characteristics have been removed in single task expected regret analysis \citep{jedra2022minimal}.} This is used to synthesize a new CE controller $\hat K^{(h)} = K_\infty(\hat A^{(h)}, \hat B^{(h)})$. At the end of each epoch, the agents engage in a round of $N$ representation updates (\Cref{alg: dfw}), in which they update their estimate for the shared basis using local data and communicate to take the average of their estimates. To analyze expected regret it is necessary to prevent catastrophic failures even under unlikely failure events. For this reason, the algorithm checks the state and controller norm against the supplied bounds $x_b$ and $K_b$ at the start of each interaction round, and aborts the CE scheme if either is too large. 

% \Bruce{Thomas write this bit up through the informal theorem statement. Goal for the informal theorem statement is to introduce $\rho$ and $C_{\mathsf{contract}}$. } 
A key subtlety and contribution of our algorithm comes in how the parameters are updated (\Cref{alg: ce with exploration} and \ref{alg: dfw}). In the single-agent setting, the optimal dynamics matrix $\bmat{\hat A & \hat B}$ with respect to the current data batch follows by least squares, such that with a doubling epoch the parameter error approximately halves \citep{simchowitz2020naive}.
However, due to the multi-agent structure of our setting, least squares is no longer implementable, let alone optimal. This motivates the need for an alternative subroutine that ensures a reduction in the representation error between epochs. Subroutines satisfying this are remote in the literature, especially since existing linear representation learning (or bilinear matrix sensing) algorithms heavily rely on the assumption that the data (or sensing matrix) across all tasks is iid isotropic Gaussian $x_i^{(h)} \iidsim \calN(0,I)$ \citep{collins2021exploiting, thekumparampil2021sample, tripuraneni2021provable}, which is violated in our setting where states  distribution from different systems converge to their respective stationary distributions. A recent algorithm  \texttt{De-bias \& Feature Whiten (DFW)} proposed by \citet{zhang2023meta} addresses many analogous issues for a related multi-task representation learning problem, which we adapt for our setting. 
% Notably, \texttt{DFW} is a gradient-based algorithm that shrinks the representation error geometrically up to an additive noise factor scaling inversely with the total data of the epoch, which satisfies our requirement of geometric improvement per doubling epoch. 
Beyond its guarantees (see \Cref{sec: representation error guarantees}), \texttt{DFW} enables distributed optimization of a shared linear representation across data sources with \textit{non-identical distributions}, and temporally dependent covariates.
Additionally, \texttt{DFW} does not require communication of raw data between the agents, and instead each agent only communicates their respective updated representation, allowing the algorithm to be implemented in a federated manner.
% We run \texttt{DFW} for $N$ iterations per epoch such that the representation error halves each epoch. 
During each \texttt{DFW} iteration $n\in [N]$, each agent uses a portion of its data to estimate its local parameters via least-squares given the current representation $\hat\Phi_{n-1}$ (see \Cref{alg: least squares}).
% \begin{align*}
% % \label{eq: dfw weight update}
%     \hat \theta^{(h)}_n = \argmin_{\theta} \sum_{t \in D_n} \norm{x_{t+1}^{(h)} - \VEC^{-1}\paren{\hat\Phi_{n-1} \theta}\bmat{x^{(h)}_t \\ u^{(h)}_t}}^2.
% \end{align*}
Then, each agent uses the other portion of its data to compute its \textit{local} representation descent step:
\begin{align}\label{eq: dfw rep update}
    \nabla^{(h)}_{\Phi, n} &\triangleq \nabla_\Phi \sum_{t \in D_{n}} \norm{x_{t+1}^{(h)} - \VEC^{-1}\paren{\hat\Phi \hat\theta^{(h)}_n}\bmat{x^{(h)}_t \\ u^{(h)}_t}}^2 \nonumber \\
    \hat\Sigma_n^{(h)} &\triangleq \sum_{t \in D_n} \paren{\bmat{x^{(h)}_t \\ u^{(h)}_t} \bmat{x^{(h)}_t \\ u^{(h)}_t}^\top  \otimes I_{\dx} } \\
    \overline \Phi_n^{(h)} &\gets \hat\Phi_{n-1} - \eta \;(\hat\Sigma_n^{(h)})^{-1} \nabla^{(h)}_{\Phi, n}. \nonumber
\end{align}
The updated local representations from each agent are then averaged and orthonormalized, and transmitted back to each agent for the next iteration (see \Cref{alg: dfw}, line \ref{line: DFW rep averaging}).

% \subsection{Background Results for Multi-task System Identification}
\subsection{Representation Error Guarantees}\label{sec: representation error guarantees}
% \Thomas{Should we change this section title to something like ``Representation Error Guarantees''?}
% \Bruce{Pull the general data generation setting that's currently in Theorem II.1 out here to Motivate and express the two results. }

In this section, we motivate the roles of our representation update (\Cref{alg: dfw}) and task-specific weight update (\Cref{alg: least squares}) subroutines. Consider current representation estimate $\hat\Phi$ and data $(x_{1:t}^{(1:H)}, u_{1:t}^{(1:H)})$ generated from initial states $x_1^{(1)}, \dots, x_1^{(H)}$, under stabilizing controllers $K^{(1)}, \dots, K^{(H)}$ with exploratory noise $\sigma_u g_s^{(h)}$, $g_s^{(h)} \iidsim \calN(0,I_{\du})$ for $s\in[t]$, $h\in[H]$, and some $\sigma_u \in [0,1]$. This can be seen as the general set-up for the data collected during an epoch of \Cref{alg: ce with exploration}. We want to establish the following:
\begin{enumerate}
    \item Running \texttt{DFW} yields an updated representation whose error decomposes as a contraction of the previous representation's error plus a variance term that scales inversely with the amount of \textit{total data} $tH$.

    \item The parameter error $\norm{\hat\Phi\hatthetah- \Phi_\star \thetahstar}$ accrued by fitting the least-squares task-specific weights, holding the representation fixed, decomposes into a sum of least-squares error scaling inversely with $t$ and the \textit{representation error}.
\end{enumerate}
These two guarantees together inform how to set the epoch length and exploratory noise strength $\sigma_u$ to balance the explore-commit tradeoff for the ensuing regret analysis. To quantify the representation error, we consider the \textit{subspace distance} between the spaces spanned by the columns of $\hat\Phi$ and $\Phi_\star$ (which are constrained to be column-orthonormal).
\begin{definition}[{\citet{stewart1990matrix}}]\label{def: subspace dist}
    For a given matrix with orthonormal columns $\Phi$, let $\Phi_{\perp}$ be a matrix such that $\bmat{\Phi & \Phi_{\perp}}$ is an orthogonal matrix. Then, given another column-orthonormal matrix $\Phi'$, the \emph{subspace distance} between $\Phi',\Phi$ may be written $d(\Phi,\Phi') \triangleq \|\Phi_\perp^\top \Phi'\|$.
\end{definition}
For all dimensions of $\Phi_\star$ to be identifiable, we also make the following full-rank assumption on the optimal weights $\theta^{(1)}_\star, \dots, \theta^{(H)}_\star$.
\begin{assumption}\label{assumption: full rank weights}
    Consider $\Phi_\star$, $\scurly{\thetahstar}$ such that $\VEC^{-1}(\Phi_\star \thetahstar) = \bmat{A^{(h)}_\star & B^{(h)}_\star}$, $h=1,\dots,H$. We assume $\rank\paren{\sumH \thetahstar \thetahstar^\top} = \dtheta$.
\end{assumption}
We now state a bound on the improvement of the subspace distance after running \Cref{alg: dfw}.
\begin{theorem}[\texttt{DFW} guarantee, redux]
    \label{thm: dfw bound informal}
    % \begin{align*}
    %     x_{t+1}^{(h)} &= A^{(h)} x_t^{(h)} + B^{(h)} u_t^{(h)} + w_t^{(h)}\\ 
    %     u_t^{(h)} &= K^{(h)} x_t^{(h)} + \sigma_u g_t^{(h)}, \quad g_t^{(h)} \iidsim \calN(0,I_{\du}).
    % \end{align*}
    % Given the current representation $\hat\Phi_0$, as long as $t/N \geq \tau_{\mathsf{cont}}$, $H \geq H_{\mathsf{cont}}$ and $d(\hat\Phi_0, \Phi_\star) \leq d_{\mathrm{init}}$, 
    % with probability at least $1 - \delta$ the updates of \Cref{alg: dfw} satisfy for appropriate choice of step-size $\eta > 0$:
    % Let the conditions in Assumption~\ref{assumption: dfw burn-in} for a desired $\delta \in (0,1)$. 
    Let Assumption~\ref{assumption: full rank weights} hold and fix $\delta \in (0,1)$. Then, provided an appropriately chosen step-size $\eta > 0$, burn-in $t \geq \tau_{\mathsf{dfw}}$, and initial representation error $d(\hat\Phi, \Phi_\star) \leq d_{\mathsf{dfw}}$,
    with probability at least $1 - \delta$ running \Cref{alg: dfw} yields the following guarantee on the updated representation $\hat\Phi \to \hat\Phi_N$:
    \begin{align*}
        % d(\hat \Phi_{N}, \hat \Phi) \leq \rho^N d(\hat \Phi, \Phi^\star) + \frac{\Kbaravg \sqrt{\underset{h=1,\dots,H}{\max} \norm{P_{K^{(h)}}^{(h)}}} \sqrt{N}}{\sqrt{t H \sigma_u^2}},
        d(\hat \Phi_{N}, \Phi_\star) \leq \rho^N d(\hat \Phi, \Phi^\star) + \frac{\Kbaravg}{1 - \sqrt{2}\rho^N}\frac{\sqrt{N}}{\sigma_u \sqrt{t H}},
    \end{align*}
    where
    \begin{align*}
        \rho &= 1 - 0.897 \eta \lambda_{\min}\paren{\sumH \thetahstar \thetahstar^\top}\\
        \Kbaravg &= \sqrt{\avgsumH \sigma^2 \snorm{\thetahstar}^2 (2 + \snorm{K^{(h)}}^2)} \\
        &\quad\cdot \mathrm{poly}(\dx, \du, \log(H),  \log(1/\delta)).
    \end{align*}
\end{theorem}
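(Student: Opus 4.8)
The plan is to establish the bound through a \emph{one-step improvement} inequality for a single \texttt{DFW} iteration and then unroll the resulting recursion over the $N$ iterations. Concretely, I would first show that for each $n \in [N]$, conditioned on the current iterate $\hat\Phi_{n-1}$, the averaged-and-orthonormalized update satisfies, with high probability,
\begin{align*}
    d(\hat\Phi_n, \Phi_\star) \leq \rho\, d(\hat\Phi_{n-1}, \Phi_\star) + \varepsilon_n,
\end{align*}
where $\rho = 1 - 0.897\,\eta\,\lambda_{\min}(\sumH \thetahstar \thetahstar^\top)$ is the population contraction factor and $\varepsilon_n$ is a mean-zero fluctuation term. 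The full-rank Assumption~\ref{assumption: full rank weights}, which guarantees $\lambda_{\min}(\sumH\thetahstar\thetahstar^\top) > 0$ and hence $\rho < 1$, is exactly what makes the iteration a genuine contraction; the remaining work is to bound $\varepsilon_n$ and to unroll.

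For the one-step inequality I would decompose the pre-QR average $\frac{1}{H}\sum_h \overline\Phi_n^{(h)}$ into its conditional-expectation (population) part and a fluctuation. The critical structural point is the \emph{de-biasing} enabled by the $t_1/t_2$ split in \Cref{alg: dfw}: because the task weights $\hat\theta_n^{(h)}$ are fit on the $t_1$-subtrajectory while the representation gradient \eqref{eq: dfw rep update} is evaluated on the independent $t_2$-subtrajectory, the expected gradient carries no bias term proportional to the weight-estimation error. In expectation, after the feature-whitening precondition $(\hat\Sigma_n^{(h)})^{-1}$—which cancels the non-isotropic, agent-dependent covariances and the temporal correlations of the closed-loop states—the averaged update acts on the error subspace like $I - \eta\sumH \thetahstar\thetahstar^\top$, and the thin-QR step is shown, via standard subspace-perturbation arguments \citep{stewart1990matrix}, to contract $d(\cdot,\Phi_\star)$ by $\rho$.

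I would then bound the fluctuation $\varepsilon_n$ by concentration. Since the covariates are generated by the closed-loop dynamical systems rather than being i.i.d.\ isotropic, I cannot appeal to off-the-shelf representation-learning concentration results and must instead use martingale/self-normalized tail bounds together with concentration of each feature covariance $\hat\Sigma_n^{(h)}$ about its stationary value. Averaging the per-agent fluctuations over the $H$ independent systems yields the $1/\sqrt{H}$ gain, the per-iteration sample count $\approx t/N$ yields the $\sqrt{N}/\sqrt{t}$ dependence, and whitening by the exploration-inflated covariance (whose relevant eigenvalues scale with $\sigma_u^2$) produces the $1/\sigma_u$ factor; collecting these gives $\varepsilon_n \lesssim \Kbaravg\,\sqrt{N}/(\sigma_u\sqrt{tH})$, with $\Kbaravg$ absorbing the sub-Gaussian scale $\sigma$, the weight norms $\snorm{\thetahstar}$, the controller norms $\snorm{K^{(h)}}$, and $\mathrm{poly}(\dx,\du,\log H,\log(1/\delta))$ factors from the union bound over iterations and agents.

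Finally I would unroll to obtain $d(\hat\Phi_N,\Phi_\star) \leq \rho^N d(\hat\Phi,\Phi_\star) + \sum_{n=1}^N \rho^{N-n}\varepsilon_n$ and bound the geometric sum of fluctuation terms to reach the stated $\Kbaravg\sqrt{N}/((1-\sqrt{2}\rho^N)\sigma_u\sqrt{tH})$. The main obstacle—and the step requiring the most care—is the one-step contraction under two simultaneous complications: the self-referential coupling between the representation error $d(\hat\Phi_{n-1},\Phi_\star)$ and the least-squares weight error (which controls how strongly the gradient points toward $\Phi_\star$), and the temporal dependence of the closed-loop covariates, which is precisely what breaks the i.i.d.\ isotropic-design assumption underlying standard analyses and forces both the de-biasing split and the feature-whitening precondition. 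Controlling the fluctuation uniformly over the $N$ iterations—so that the iterates provably remain within the basin $d(\hat\Phi_{n-1},\Phi_\star) \leq d_{\mathsf{dfw}}$ where the contraction is valid—requires the burn-in $t \geq \tau_{\mathsf{dfw}}$ and leads to a self-consistent inequality on the accumulated error whose solution carries the $1/(1-\sqrt{2}\rho^N)$ safeguard factor.
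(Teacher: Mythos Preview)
Your proposal is correct and follows essentially the same route as the paper: a one-step inequality of the form $d(\hat\Phi_+,\Phi_\star)\le (\text{contraction factor})\cdot d(\hat\Phi,\Phi_\star) + (\text{noise term})$, obtained by left-multiplying the averaged pre-QR update by $\Phi_{\star,\perp}^\top$, bounding the noise via self-normalized martingale and covariance-concentration arguments (with a mixing-time reduction for the dependent closed-loop data), and then unrolling over the $N$ iterations. One small imprecision worth flagging: the contraction factor in the paper is $\bigl\|I - \tfrac{\eta}{H}\sum_h \hat\theta^{(h)}\hat\theta^{(h)\top}\bigr\|\cdot\|R^{-1}\|$, i.e.\ it involves the \emph{estimated} weights and an explicit orthonormalization factor $\|R^{-1}\|$ that must be bounded separately (the paper devotes a proposition to each), rather than reducing directly to $I - \eta\sum_h\theta_\star^{(h)}\theta_\star^{(h)\top}$ via subspace-perturbation of the QR; your description slightly conflates these two ingredients, but the substance is the same.
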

In particular, we have demonstrated that running \texttt{DFW} contracts the subspace distance by a factor of $\rho^N$, up to a variance factor. Notably, $\Kbaravg$ serves as a task-averaged ``noise-level'', and the denominator of the variance factor scales \textit{jointly} with the number of tasks $H$ and data per task $t$. For downstream analysis, it suffices to choose a number of iterations $N$ such that $\rho^N \leq 1/2 $, i.e., $ N \geq \log(2)/\log(1/\rho)$, which is independent of the size of the data. The subspace distance manifests in the error between the learned system parameters $\hat\Phi \hat\theta$ and the optimal $\Phi_\star \theta_\star$.
% The bound in the above theorem exhibits a tradeoff dependent on the number of iterations $N$ per epoch impacting the tight. To simplify the analysis, we choose $N$ large enough such that $\rho^N < \frac{1}{2}$. 
In particular, given the output $\hat\theta$ of \Cref{alg: least squares}, %solving
 % \begin{align*}
 %     \argmin_{\theta} \sum_{s=1}^{t} \norm{x_{s+1} - \VEC^{-1}(\hat \Phi \theta) \bmat{x_s \\ u_s}}^2,
 % \end{align*}
it can be shown (e.g.\ Theorem 5, \cite{lee2023nonasymptotic}) that the parameter least squares error decomposes into a term scaling inversely with data and a term involving the subspace distance between $\hat \Phi$ and $\Phi_\star$. 
\begin{theorem}(LS error, informal)
    \label{thm: ls error informal}
    Consider running \Cref{alg: least squares} on the $t$ data samples generated from a system of the form \eqref{eq: dynamics} for $t \geq \tau_{\mathsf{ls}}$, where $\tau_{\mathsf{ls}}
$ is a burn-in time. Then with probability at least $1-\delta$, 
    \begin{align*}
        \norm{\hat \Phi \hat \theta - \Phi_\star \theta_\star}^2 \lesssim  \frac{\sigma^2 \dtheta \log(1/\delta)}{t \times \mathsf{excitation \, lvl}} + C_{\mathsf{sys}} \frac{d(\hat \Phi, \Phi_\star)^2}{\mathsf{excitation\,lvl}},
    \end{align*}
    where $C_{\mathsf{sys}}$ is a constant that depends on the system \eqref{eq: dynamics}, and $\mathsf{exictation\,lvl}$ characterizes the extent to which the the state is excited as required to identify the parameters $\theta$. 
\end{theorem}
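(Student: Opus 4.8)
The plan is to recast \Cref{alg: least squares} as a \emph{misspecified} linear regression and split the resulting error into a bias term governed by the subspace distance and a variance term governed by self-normalized martingale concentration, with both tied together through a lower bound on the empirical Gram matrix $\Lambda$. Writing $z_s \triangleq \bmat{x_s \\ u_s}$ and using the identity $\VEC^{-1}(v)\,z = (z^\top \otimes I_{\dx})\,v$, the dynamics \eqref{eq: dynamics} become the regression model $x_{s+1} = (z_s^\top \otimes I_{\dx})\Phi_\star\theta_\star + w_s$. Defining the per-sample feature $X_s \triangleq (z_s^\top \otimes I_{\dx})\hat\Phi \in \R^{\dx\times\dtheta}$, \Cref{alg: least squares} is exactly $\hat\theta = \Lambda^\dagger\sum_s X_s^\top x_{s+1}$ with $\Lambda = \sum_s X_s^\top X_s$. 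The misspecification is that the true target lies in $\mathrm{range}(\Phi_\star)$ rather than $\mathrm{range}(\hat\Phi)$, so I would split it through the orthogonal projection onto $\hat\Phi$: with $\theta_\sharp \triangleq \hat\Phi^\top\Phi_\star\theta_\star$ and per-sample bias $b_s \triangleq (z_s^\top\otimes I_{\dx})\hat\Phi_\perp\hat\Phi_\perp^\top\Phi_\star\theta_\star$, the model reads $x_{s+1} = X_s\theta_\sharp + b_s + w_s$.

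Substituting into the normal equations then yields the exact decomposition
\begin{align*}
    \hat\Phi\hat\theta - \Phi_\star\theta_\star = \underbrace{\paren{\hat\Phi\theta_\sharp - \Phi_\star\theta_\star}}_{\text{(i)}} + \underbrace{\hat\Phi\Lambda^\dagger\sum_{s} X_s^\top b_s}_{\text{(ii)}} + \underbrace{\hat\Phi\Lambda^\dagger\sum_{s} X_s^\top w_s}_{\text{(iii)}}.
\end{align*}
Term (i) equals $-\hat\Phi_\perp\hat\Phi_\perp^\top\Phi_\star\theta_\star$, so $\norm{\text{(i)}} \leq d(\hat\Phi,\Phi_\star)\norm{\theta_\star}$ straight from \Cref{def: subspace dist}. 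For term (ii), factoring $\sum_s X_s^\top b_s = \hat\Phi^\top G\hat\Phi_\perp v$ with $G \triangleq \sum_s (z_s z_s^\top \otimes I_{\dx})$ and $v \triangleq \hat\Phi_\perp^\top\Phi_\star\theta_\star$ (so $\norm{v}\leq d(\hat\Phi,\Phi_\star)\norm{\theta_\star}$), the Schur-complement bound $\norm{\Lambda^{-1/2}\hat\Phi^\top G\hat\Phi_\perp} \leq \norm{\hat\Phi_\perp^\top G\hat\Phi_\perp}^{1/2}$ for the PSD matrix $G$ gives $\norm{\text{(ii)}}^2 \leq \lambda_{\min}(\Lambda)^{-1}\norm{\hat\Phi_\perp^\top G\hat\Phi_\perp}\norm{v}^2$. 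After squaring, both (i) and (ii) are $\calO(d(\hat\Phi,\Phi_\star)^2)$, with $\norm{\theta_\star}^2$ and $\lambda_{\max}(G)$ absorbed into $C_{\mathsf{sys}}$ and the excitation level. Term (iii) is a vector-valued self-normalized martingale (note $X_s$ is predictable with respect to the innovation $w_s$): a standard bound gives $\norm{\Lambda^{-1/2}\sum_s X_s^\top w_s}^2 \lesssim \sigma^2\dtheta\log(1/\delta)$ on the event $\Lambda \succ 0$, and one further factor of $\lambda_{\min}(\Lambda)^{-1}$ produces the $\sigma^2\dtheta\log(1/\delta)/(t\cdot\mathsf{excitation\,lvl})$ term.

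The main obstacle is therefore the excitation lower bound $\lambda_{\min}(\Lambda) \gtrsim t\cdot\mathsf{excitation\,lvl}$, i.e.\ showing that the empirical Gram matrix concentrates around $t\,\hat\Phi^\top(\Sigma_z \otimes I_{\dx})\hat\Phi$, where $\Sigma_z$ is the stationary covariance of $z_s$ under the closed loop with exploration $\sigma_u g_s$. Unlike the i.i.d.\ isotropic setting underlying standard representation-learning analyses, the $z_s$ are temporally dependent and only asymptotically stationary, so I would need (a) a nondegeneracy bound on $\Sigma_z$ in the relevant directions --- here $\sigma_u^2$ lower-bounds the input block and propagates through $\dlyap$ into the state block, which is why $\mathsf{excitation\,lvl}$ carries a factor of $\sigma_u^2$ --- and (b) a covariance-concentration argument for dependent sub-Gaussian covariates valid after the burn-in $t \geq \tau_{\mathsf{ls}}$, accounting for the transient from $x_1 = 0$ and for the failure event in which the state exceeds the bound enforced in Line~\ref{line: uniform bound}. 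This is precisely the step for which I would invoke an existing single-trajectory misspecified least-squares result (e.g.\ Theorem 5 of \citep{lee2023nonasymptotic}) rather than reprove the matrix concentration from scratch.
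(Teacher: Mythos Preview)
Your proposal is correct and aligns with the paper's approach. The paper does not actually prove this informal theorem; it simply cites Theorem~5 of \citep{lee2023nonasymptotic} as the formal version (\Cref{thm: ls estimation error}), and your decomposition into (i) the projection error $-\hat\Phi_\perp\hat\Phi_\perp^\top\Phi_\star\theta_\star$, (ii) the cross-covariance bias $\Lambda^{-1}\hat\Phi^\top G\hat\Phi_\perp v$, and (iii) the self-normalized martingale noise is exactly the structure that underlies that cited result---indeed, the paper spells out the identical decomposition (your terms (a) and (b)) when analyzing the least-squares weights inside the \texttt{DFW} contraction factor, and uses the same Schur-complement-flavored bound via \Cref{lem: rep-conditioned cov conc bound} and \Cref{lem: rep-conditioned SNM bound}.
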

Formal statements of \Cref{thm: dfw bound informal} and \Cref{thm: ls error informal} are instantiated in the ensuing regret analysis and can be found in the appendix. We have thus established the desiderata stated at the beginning of the section. It remains to show that salient choices of epoch length and exploratory noise level in \Cref{alg: ce with exploration} yield no-regret guarantees.

% In conjunction with \Cref{lem: CE closeness main body}, the above error bound motivates \Cref{alg: ce with exploration} and the accompanying regret analysis.

% \input{tex/contraction}
\section{Regret analysis}

As previewed in the introduction, we consider two settings: one where the system-specific parameters $\theta_\star^{(h)}$ are easily identifiable given the representation, and one in which they are not. The setting where the system-specific parameters are easily identifiable corresponds to a situation in which  $\mathsf{excitation\,lvl}$ from \Cref{thm: ls error informal} is nonzero even when the input is determined by the optimal LQR controller.
In both settings, we require that the bounds for the abort procedure (Line \ref{line: uniform bound}, \Cref{alg: ce with exploration}) are sufficiently large to ensure that the abort procedure occurs with small probability. To state the bounds, we introduce the following notation.
\begin{align*}
    \Psi_{B_\star^{(h)}} &\triangleq \max\curly{1,\norm{B_\star^{(h)}}}, \quad && \psibmax \triangleq \max_{h=1,\dots, H} \Psi_{B_\star^{(h)}}  \\ 
    \thetamax &\triangleq \max_{h=1,\dots, H} \norm{\theta_\star^{(h)}}, && \pzeromax \triangleq \max_{h=1,\dots, H} \norm{P_{K_0^{(h)}}^{(h)}} \\
    \pstarmin &\triangleq \min_{h=1,\dots, H} \norm{P_{K^{(h)}_\star}^{(h)}}, && \epsmin \triangleq \min_{h=1,\dots,H} \varepsilon^{(h)},
\end{align*}
where $\varepsilon^{(h)}$ is as in \Cref{lem: CE closeness main body}.

\begin{assumption}
    \label{asmp: state and controller bounds}
    We assume that 
    \begin{align*}x_b   &\geq 400 (\pzeromax)^2 \psibmax \sigma \sqrt{\dx+\du}, \quad K_b   \geq \sqrt{\pzeromax}.
        \end{align*}
\end{assumption} 
% We now proceed with our regret analysis.

\subsection{Not Easily Identifiable}

In this setting, we do not make additional assumptions about the structure of $\Phi_\star$. We require an assumption ensuring that it is possible to obtain a stabilizing CE controller after the first epoch with high probability. To do so, we make an assumption about the size of the subspace distance of the representation estimate $\hat \Phi$ from $\Phi_\star$ after a single episode (leveraging the contraction of \Cref{thm: dfw bound informal}.) 
\begin{assumption}
    \label{asmp: upper bound on representation error exp}
    Define
    \begin{align*}
    % $
        \beta_1 &\triangleq C_{\beta,1} \sigma^4 (\pzeromax)^{12} (\psibmax)^8 (\thetamax)^2 (\dx+\du) \sqrt{\frac{\dtheta}{\du}}, \\
        \gamma_1 &\triangleq \frac{1}{C_{\gamma,1}} \frac{\sigma_1^2}{x_b^2 (P_0^\vee)^5 \Psi_{B^\star}^2 \sqrt{\kappa\paren{\sum_{h=1}^{\dtheta} \theta_\star^{(h)} \theta_{\star}^{(h),\top}}}}
    % $
    \end{align*}
    for   sufficiently large universal constants $C_{\beta,1}$ and $C_{\gamma,1}$. Let $\rho$ be as in \Cref{thm: dfw bound informal}. 
    We assume the initial subspace distance satisfies $ d(\Phi_0, \Phi_\star)\leq \min\curly{\frac{\epsmin}{4 H^{2/5} \beta_1}, \gamma_1}.$
\end{assumption}

This assumption leads to the following regret bound.

\begin{theorem}
    \label{thm: regret bound naive exploration}
     Consider applying \Cref{alg: ce with exploration} with initial stabilizing controllers $K_0^{(1)}, \dots K_0^{(H)}$ for $T = \tau_1 2^{k_{\fin}-1}$ timesteps for some positive integers $k_{\fin}$, and $\tau_1$. Let $\tau_k = 2^k \tau_1$ for $k\in[k_{\fin}]$.  
    Suppose that the exploration sequence supplied to the algorithm satisfies
          \begin{align}
          \label{eq: exploration, hard to learn}
          \sigma_k^2 = \max\curly{\tau_k^{-1/4} H^{-1/5},\; \sqrt{\frac{\dtheta}{\du\tau_k}}, \; \rho^{(k-1) N} d(\Phi_0, \Phi_\star)} \end{align}
          for $k\in[k_{\fin}]$, where $\rho$ is the contraction rate of \Cref{thm: dfw bound informal}. Suppose the state bound $x_b$ and the controller bound $K_b$ satisfy Assumption~\ref{asmp: state and controller bounds} and that $N \geq \log(2)/\log(1/\rho)$. Additionally suppose %%that the parameter $N$ is sufficiently large that $\rho^N \leq \frac{1}{2}$ and 
          that the weights satisfy Assumption~\ref{assumption: full rank weights}. 
    There exists a polynomial function $\texttt{poly}_{\mathsf{warm}}$ such that if $\tau_1 = \tau_{\mathsf{warm} } \log^9 T$ with 
    \begin{align*}
        \tau_{\mathsf{warm}}\geq \texttt{poly}_{\mathsf{warm}}\paren{\sigma, P_0^\vee, \Psi_B^\vee, \theta^\vee, x_b, d_\theta, \dx, \du, \log(H)},
    \end{align*}
     % \Bruce{Burn-in time above must be figured out. These should pop out from the high probability bound on the success event discussed.}
    then the expected regret satisfies for $h=1,\dots, H$
     \begin{align*}
         \E \brac{ \calR_T^{(h)} }  \leq  c_0 \log^9(T) + c_1 \sqrt{\dtheta \du} \sqrt{T} \log^2 (T) \\
         + c_2 \frac{T^{3/4}}{H^{1/5}} \log^2 (HT),
     \end{align*}
   \sloppy where
   % \begin{align*}
   $
       c_0 =\texttt{poly}\bigg(\sigma, \dx, \du, \dtheta, x_b, K_b, \norm{Q}, \thetamax, \pzeromax, \psibmax,$ $ \tau_{\mathsf{warm}}, x_b, d(\hat \Phi_0,\Phi_\star), \log H \bigg),$ $
       c_1 = \texttt{poly}\paren{\pzeromax, \psibmax, \sigma}, $ and $
       c_2 = \texttt{poly}\bigg(\dx, \du, \dtheta, \pzeromax, \psibmax, \thetamax,\sigma, N\bigg).$
   % \end{align*} 
\end{theorem}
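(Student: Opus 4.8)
The plan is to decompose expected regret over the doubling epochs $k=1,\dots,k_{\fin}$ (epoch $k$ spanning $[\tau_{k-1},\tau_k]$, of length $\Theta(\tau_k)$), and to control each epoch's contribution while maintaining, by induction on $k$ on a high-probability ``good'' event, three invariants: (i) the certainty-equivalent controller $\hat K_k^{(h)}$ stabilizes system $h$ with parameter error below the radius $\varepsilon^{(h)}$ of \Cref{lem: CE closeness main body}; (ii) the subspace distance $d(\hat\Phi_k,\Phi_\star)$ stays below the burn-in radius $d_{\mathsf{dfw}}$ required by \Cref{thm: dfw bound informal}; and (iii) the abort test on Line~\ref{line: uniform bound} is never triggered. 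I would write $\calR_T^{(h)}=\sum_{k=1}^{k_{\fin}} \calR^{(h)}_{[\tau_{k-1},\tau_k]}$ and split each epoch's regret into a warm-up/burn-in term (dominant only for $k=1$), an \emph{exploration cost} from injecting $\sigma_k g_t^{(h)}$, and a \emph{certainty-equivalence cost} from playing $\hat K_k^{(h)}$ rather than $K_\star^{(h)}$. Standard LQR perturbation bookkeeping through the closed-loop solution $P_{\hat K_k}^{(h)}$ bounds the exploration cost by a quantity of order $\sigma_k^2 \tau_k$ up to $\mathrm{poly}(\pzeromax,\psibmax,\du)$ factors, and \Cref{lem: CE closeness main body} bounds the certainty-equivalence cost by a quantity of order $\norm{P_\star^{(h)}}^8 \norm{\hat\Phi_{k-1}\hat\theta_{k-1}^{(h)}-\Phi_\star\theta_\star^{(h)}}^2 \tau_k$.

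The core is the per-epoch parameter error. I would instantiate \Cref{thm: ls error informal} with the excitation level supplied by the injected noise --- which in the not-easily-identifiable regime is $\Theta(\sigma_k^2)$ rather than a positive system constant --- to obtain
\[
\norm{\hat\Phi_k\hat\theta_{k}^{(h)}-\Phi_\star\theta_\star^{(h)}}^2 \lesssim \frac{\sigma^2 \dtheta}{\tau_k \sigma_k^2} + C_{\mathsf{sys}}\frac{d(\hat\Phi_k,\Phi_\star)^2}{\sigma_k^2}.
\]
The subspace distance is then controlled by unrolling the contraction of \Cref{thm: dfw bound informal}: since $N\ge \log 2/\log(1/\rho)$ forces $\rho^N\le 1/2$, the recursion $d(\hat\Phi_{k+1},\Phi_\star)\le \tfrac12 d(\hat\Phi_k,\Phi_\star)+\Kbaravg\sqrt{N}/(\sigma_k\sqrt{\tau_k H})$ telescopes to $d(\hat\Phi_k,\Phi_\star)\lesssim 2^{-k}d(\Phi_0,\Phi_\star)+\Kbaravg\sqrt{N}/(\sigma_{k-1}\sqrt{\tau_{k-1}H})$, the latest variance term dominating the geometric sum. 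The third branch $\sigma_k^2\ge \rho^{(k-1)N}d(\Phi_0,\Phi_\star)$ of schedule \eqref{eq: exploration, hard to learn} is precisely what makes $d(\hat\Phi_k,\Phi_\star)^2/\sigma_k^2$ controlled by the decaying initial-contraction part, so the representation term never dominates the certainty-equivalence cost. For the induction base I would invoke \Cref{asmp: upper bound on representation error exp} --- whose $H^{2/5}$ factor matches the $H$-scaling of the DFW variance --- to force the parameter error below $\epsmin$ already after epoch $1$, so that \Cref{lem: CE closeness main body} applies and the induction closes.

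Summing the two costs over epochs with $\tau_k=2^k\tau_1$ and schedule \eqref{eq: exploration, hard to learn} produces the three named terms. The branch $\sigma_k^2=\sqrt{\dtheta/(\du\tau_k)}$ balances the exploration cost $\sqrt{\dtheta/\du}\sqrt{\tau_k}$ against the least-squares-variance certainty-equivalence cost $\sigma^2\sqrt{\dtheta\du}\sqrt{\tau_k}$, and $\sum_k\sqrt{\tau_k}\lesssim\sqrt{T}$ yields $c_1\sqrt{\dtheta\du}\sqrt{T}$. The branch $\sigma_k^2=\tau_k^{-1/4}H^{-1/5}$ contributes exploration cost $\tau_k^{3/4}H^{-1/5}$, and $\sum_k\tau_k^{3/4}\lesssim T^{3/4}$ yields $c_2 T^{3/4}/H^{1/5}$; one verifies this branch is active exactly when it exceeds the level at which the representation cost $C_{\mathsf{sys}}\Kbaravg^2/(\sigma_k^4 H)$ would dominate, keeping the latter lower-order ($\lesssim \sqrt{T}/H^{3/5}$). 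The warm-up epoch of length $\tau_1=\tau_{\mathsf{warm}}\log^9 T$ yields the $c_0\log^9 T$ term, and the $\log$ powers in the remaining terms come from a union bound over the $O(\log T)$ epochs and $H$ agents at per-event failure probability $\delta\asymp 1/(HT)$.

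To pass from these high-probability per-epoch bounds to a bound on \emph{expected} regret, I would control the regret on the complementary failure event via the abort mechanism: \Cref{asmp: state and controller bounds} keeps $\norm{x_t^{(h)}}\le x_b\sqrt{\log T}$ and $\norm{\hat K_k^{(h)}}\le K_b$ on the good event, capping the worst-case per-step cost, so the failure event (probability $\lesssim 1/T$) contributes only lower-order regret. The main obstacle is the coupled induction: the DFW contraction needs a small incoming $d(\hat\Phi_k,\Phi_\star)$, guaranteed only if the previous epoch explored sufficiently, while the certainty-equivalence step needs parameter error below $\varepsilon^{(h)}$, which feeds back through both $d(\hat\Phi_k,\Phi_\star)$ and the excitation level $\Theta(\sigma_k^2)$. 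Closing this induction while simultaneously preserving all three invariants and the failure-probability budget --- and checking that the excitation $\Theta(\sigma_k^2)$ in the not-easily-identifiable regime genuinely satisfies the burn-ins $\tau_{\mathsf{ls}}$ and $\tau_{\mathsf{dfw}}$ within each epoch --- is the most delicate part.
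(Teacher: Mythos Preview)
Your proposal is correct and follows essentially the same route as the paper: define a high-probability success event comprising exactly your three invariants, establish it by induction over epochs via the least-squares error bound and the \texttt{DFW} contraction, decompose expected regret into warm-up, failure-event, and success-event pieces (the latter split into exploration cost $\sigma_k^2\tau_k$ and certainty-equivalence cost via \Cref{lem: CE closeness main body}), and sum over the doubling epochs with the schedule \eqref{eq: exploration, hard to learn} to extract the three displayed rates. The one point you slightly misattribute is the $H^{2/5}$ factor in \Cref{asmp: upper bound on representation error exp}: it does not come from ``matching the $H$-scaling of the DFW variance'' but from the \emph{second-order} term in the formal least-squares bound (hidden inside $C_{\mathsf{sys}}$ in the informal \Cref{thm: ls error informal}), which scales like $1/(\tau_k\sigma_k^4)\le H^{2/5}/\sqrt{\tau_k}$ under the branch $\sigma_k^2\ge\tau_k^{-1/4}H^{-1/5}$ and must be tracked explicitly to close the induction on invariant (i); this also shifts your representation-cost estimate from $\sqrt{T}/H^{3/5}$ to $\sqrt{T}/H^{1/5}$, though either is dominated by $T^{3/4}/H^{1/5}$.
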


Note that $c_0$ and $c_2$ depend on various system and algorithm quantities, however $c_1$ depends only upon quantities which nominally do not depend on system dimension. This is to emphasize the dimension dependence of the $\sqrt{T}$ term in the regret bound. Consider the above bound in the regime where $T$ is small, e.g., on the order of the number of communicating agents. In this regime, the $T^{3/4}$ term becomes negligible, and the regret is dominated by the term that scales as $\sqrt{d_{\theta} \du} \sqrt{T}$. 
This should be contrasted with the minimax regret bound for single task adaptive control $\sqrt{\dx\du^2 T}$  \citep{simchowitz2020naive}: if the system-specific parameter count $d_{\theta}$ is smaller than $\dx\du$, then the dominant term in the low data regime is smaller than the minimax regret of the single-task setting. In the adaptive control setting under consideration, the low data regime is often the one of interest, as we want the controller to rapidly adapt to a changing environment. We note that the guarantees are not any time, as they require algorithm parameters to be chosen as a function of the time horizon $T$ (as required by the choice of $\tau_1 = \tau_{\mathsf{warm}} \log^2 T$ and the assumption that $N$ satisfies Assumption~\ref{asmp: upper bound on representation error exp}.)

It is remains an open question whether it is possible to achieve overall $\sqrt{T}$ regret in the multi-task learning setting. The following section examines one case where this is true.

\subsection{Easily identifiable}

In this setting, we assume that $\Phi^\star$ admits additional structure that makes the identification of $\theta_\star^{(h)}$ easy. 

\begin{assumption}
    \label{asmpt: persisent excitation}
     Let $\alpha$ be a number satisfying $\alpha \geq \frac{1}{3 \norm{P_\star^{\wedge}}^{3/2}}$. We assume that 
    $\lambda_{\min}\paren{\Phi_\star^\top \paren{\bmat{I \\ K} \bmat{I \\K}^\top \otimes I_{\dx}} \Phi_\star} \geq \alpha^2$ for $K = K_0^{(h)}, K_\star^{(h)}$ for $h\in [H]$. 
    % Assume that 
    % % \begin{align*}
    %     $\min_{v: \norm{v}=1} \norm{\sum_{i=1}^{\dtheta} v_i(\hat \Phi^A_i + \hat \Phi^B_i K)}_F^2 \geq \alpha^2 \quad$ for  $K = K_0, K^\star,$
    %    % \min_{i \in [\dtheta]} \norm{\hat A_i + \hat B_i K^\star}_F^2 \geq \alpha^2 \quad\mbox{and} \quad
    %    % \min_{i \in [\dtheta]} \norm{\hat A_i + \hat B_i K_0}_F^2 \geq \alpha^2.
    % % \end{align*}
    % where $\bmat{\hat \Phi^A_i & \hat \Phi^B_i} = \VEC^{-1}\hat \Phi_i$, and $\hat \Phi_i$ is the $i^{\mathsf{th}}$ column of $\hat \Phi$.
\end{assumption}

The above assumption captures a setting where playing either the initial controller $K_0$ or the optimal controller $K^\star$ provides persistence of excitation without any exploratory input if the representation were known. This can be seen by noting that the matrix $\Phi_\star^\top \paren{\bmat{I \\ K} \bmat{I \\K}^\top \otimes I_{\dx}} \Phi_\star$ is a lower bound (in Loewner order) for the covariance matrix formed by taking the expectation of $\Lambda/t$ in \Cref{alg: least squares} when $u_s = K x_s$ and $\hat \Phi = \Phi_\star$. 
%     Let $\alpha \geq \frac{1}{3 (\pzeromax)^{3/2}}$. Assume that 
%     % \begin{align*}
%         $\min_{v: \norm{v}=1} \norm{\sum_{i=1}^{\dtheta} v_i( \Phi^A_{\star,i} + \Phi^B_{\star,i} K)}_F^2 \geq \alpha^2 \quad$ with $K \in \scurly{K_0^{(h)}, K_\star^{(h)}}$, $h\in [H]$, recalling $\bmat{\Phi^A_{\star,i} &  \Phi^B_{\star,i}} = \VEC^{-1} \Phi_{\star,i}$, and $ \Phi_{\star,i}$ is the $i^{\mathsf{th}}$ column of $\Phi_\star$.
% \end{assumption}

Under the above assumption, the weights $\theta$ are easily identifiable once the shared structure $\Phi$ is learned. As in the previous section, we require that the initial representation error is small enough to guarantee the closeness condition in \Cref{lem: CE closeness main body} may be satisfied with our estimated model after a single epoch. 

\begin{assumption}
    \label{asmp: upper bound on representation error no exp}
    Define 
    \begin{align*}
        \beta_2 &\triangleq C_{\beta,2} \max_{h=1,\dots, H} \frac{\epsmin(\pzeromax)^9 (\psibmax)^8 (\thetamax)^2 (\dx+\du)}{\dtheta \min\curly{\alpha^2, \alpha^4}} \\
        \gamma_2 & \triangleq \frac{1}{C_{\gamma,2}} \frac{\alpha^2}{x_b^2 (P_0^\vee)^5 \Psi_{B^\star}^2 \sqrt{\kappa\paren{\sum_{h=1}^{\dtheta} \theta_\star^{(h)} \theta_{\star}^{(h),\top}}}}
    \end{align*}
    for sufficiently large universal constants $C_{\beta,2}$ and $C_{\gamma,2}$. We assume the initial subspace distance satisfies $ d(\Phi_0, \Phi_\star)\leq \min\curly{\frac{\epsmin}{2\beta_1}, \gamma_2}.$
\end{assumption}

This allows us to state the following regret bound.

\begin{theorem}
    \label{thm: regret bound no exploration}
     Consider applying \Cref{alg: ce with exploration} with initial stabilizing controller $K_0^{(1)}, \dots, K_0^{(H)}$ for $T = \tau_1 2^{k_{\fin}}$ timeteps for some positive integers $k_{\fin}$, and $\tau_1$. %, where $\tau_1 =\tau_{\mathsf{warm\,up}} \log T $ for  for some sufficiently large universal constant $C_{\mathsf{warm\,up}}$. 
     Let $\tau_k = 2^k \tau_1$ for $k\in[k_{\fin}]$ and suppose the exploration sequence is 
     \begin{align} 
        \label{eq: exploration easy to learn}
        \sigma_k^2 = \max\curly{\tau_{k}^{-1/2} H^{-1/2}, \rho^{(k-1) N} d(\Phi_0, \Phi_\star) }, 
    \end{align} 
    for all $k \in [k_{\fin}]$, where $\rho$ is the contraction rate of \Cref{thm: dfw bound informal}.
    Suppose the state bound $x_b$ and the controller bound $K_b$ satisfy Assumption~\ref{asmp: state and controller bounds}, and that $\Phi_\star$ satisfies Assumption~\ref{asmpt: persisent excitation} and $\Phi_0$ satsisfies Assumption~\ref{asmp: upper bound on representation error no exp}. Additionally suppose that the parameter $N$ is sufficiently large that $\rho^N \leq \frac{1}{2}$ and that the weights satisfy Assumption~\ref{assumption: full rank weights}. 
    There exists a polynomial $\texttt{poly}_{\mathsf{warm}}$ such that if $\tau_1 = \tau_{\mathsf{warm} } \log^4 T$ with 
    \begin{align*}
        \tau_{\mathsf{warm}}\!\geq \!\texttt{poly}_{\mathsf{warm}}\paren{\!\sigma, P_0^\vee, \Psi_B^\vee, \theta^\vee, \!x_b, d_\theta, \dx, \du, \log(H), \!\frac{1}{\alpha}},
    \end{align*}
     then the expected regret satisfies for $h=1,\dots, H$ satisfies
     \begin{align*}
         \E \brac{\calR_T^{(h)}} \leq  c_1 \log^4(T) + c_2 \frac{\sqrt{T}}{\sqrt{H}} \log^2(TH),
     \end{align*}
    \sloppy where
    % \begin{align*}
    $
        c_1 = \texttt{poly}\bigg(\sigma, \dtheta, \du, \dx, \frac{1}{\alpha},  \psibmax, \pzeromax, x_b,$ $ K_b, \thetamax, \norm{Q}, \tau_{\mathsf{warm}}, d(\hat \Phi_0, \Phi_\star), \log H\bigg)$ and $
         c_2 = \texttt{poly}\bigg(\sigma, \dtheta, \du, \dx, \frac{1}{\alpha}, \psibmax,\pzeromax, x_b, N\bigg).$
    % \end{align*}

\end{theorem}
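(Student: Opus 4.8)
The plan is to bound $\E[\calR_T^{(h)}]$ by decomposing the cumulative cost epoch-by-epoch. On the event that no abort is triggered, within epoch $k$ (of length $\Theta(\tau_k)$) the agent plays the stabilizing CE controller $\hat K_k^{(h)}$ with injected exploration $\calN(0,\sigma_k^2 I)$; conditioning on $\hat K_k^{(h)}$ and taking expectations, the realized cost concentrates around $\tau_k\,\calJ^{(h)}(\hat K_k^{(h)}) + \tau_k\sigma_k^2\,c_{\mathrm{expl}} + O(1)$, where $c_{\mathrm{expl}}$ is the per-unit steady-state exploration cost (a system-dependent polynomial constant) and the $O(1)$ collects the per-epoch transient from non-stationarity. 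Crucially, because we bound expected rather than high-probability regret, the zero-mean martingale difference between realized and conditionally-expected cost vanishes. This reduces the problem to controlling the CE suboptimality $\sum_k\tau_k\,\E[\calJ^{(h)}(\hat K_k^{(h)}) - \calJ^{(h)}(K_\star^{(h)})]$ plus the exploration cost $\sum_k\tau_k\sigma_k^2\,c_{\mathrm{expl}}$.

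To bound the CE suboptimality I would invoke \Cref{lem: CE closeness main body}: whenever $\hat K_k^{(h)}$ is synthesized from a sufficiently accurate model, $\calJ^{(h)}(\hat K_k^{(h)}) - \calJ^{(h)}(K_\star^{(h)}) \lesssim \norm{\hat\Phi_{k-1}\hat\theta_{k-1}^{(h)} - \Phi_\star\theta_\star^{(h)}}^2$. By \Cref{thm: ls error informal} this parameter error splits into a statistical term and a representation term, and the defining feature of the easily-identifiable regime --- \Cref{asmpt: persisent excitation} --- is that the excitation level is bounded below by $\alpha^2$ even under $\hat K^{(h)}$ with no exploration, so $\norm{\hat\Phi_{k-1}\hat\theta_{k-1}^{(h)} - \Phi_\star\theta_\star^{(h)}}^2 \lesssim \frac{\sigma^2\dtheta}{\alpha^2\,\tau_k} + \frac{C_{\mathsf{sys}}}{\alpha^2}\,d_{k-1}^2$, writing $d_j\triangleq d(\hat\Phi_j,\Phi_\star)$. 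Since epoch lengths grow geometrically, multiplying the statistical term by the epoch length yields $\Theta(\sigma^2\dtheta/\alpha^2)$ per epoch, which sums over the $\Theta(\log T)$ epochs into the leading $c_1\log^4 T$ contribution. It then remains to control $\sum_k\tau_k\,d_{k-1}^2$ and the exploration cost.

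Next I would close the representation-error recursion using \Cref{thm: dfw bound informal}. Applied to the second half of epoch $k$ --- so that each agent contributes $\Theta(\tau_k)$ samples --- and with $N$ chosen so that $\rho^N\leq\tfrac12$, it gives $d_{k+1}\leq\tfrac12 d_k + C\sqrt{N}\,\big(\sigma_k\sqrt{\tau_k H}\big)^{-1}$. Substituting the schedule's first branch $\sigma_k^2\geq\tau_k^{-1/2}H^{-1/2}$ collapses the variance factor to $C\sqrt{N}\,\tau_k^{-1/4}H^{-1/4}$, and unrolling the geometric recursion yields $d_k^2\lesssim 2^{-2k}d_0^2 + N H^{-1/2}\tau_k^{-1/2}$. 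Hence $\tau_k d_{k}^2 \lesssim \tau_1 2^{-k}d_0^2 + N H^{-1/2}\sqrt{\tau_k}$, whose sum is dominated by the final epoch $\tau_{k_{\fin}}\asymp T$ and contributes $\tilde\calO(N\sqrt{T/H})$; identically, the exploration cost $\sum_k\tau_k\sigma_k^2 = \sum_k\sqrt{\tau_k/H}$ telescopes to $\tilde\calO(\sqrt{T/H})$. The second branch $\sigma_k^2\geq\rho^{(k-1)N}d_0$ of the maximum adds only $\Theta(\log T)$ to both quantities (since $\sum_k 2^k\rho^{(k-1)N}\lesssim k_{\fin}$ when $\rho^N\leq\tfrac12$); its purpose is to keep the exploratory signal at least as large as the residual representation bias, so that each \texttt{DFW} contraction remains valid. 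Collecting terms produces $c_1\log^4 T + c_2\sqrt{T/H}\,\log^2(TH)$.

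The main obstacle is twofold. First, the recursion above and the stabilizability of every $\hat K_k^{(h)}$ must be established by a single induction over epochs: I must verify that the warm-start hypothesis of \Cref{thm: dfw bound informal} ($d_k\leq d_{\mathsf{dfw}}$) and the closeness hypothesis of \Cref{lem: CE closeness main body} ($\norm{\cdot}^2\leq\varepsilon^{(h)}$) are \emph{preserved} at every epoch --- which is precisely what \Cref{asmp: upper bound on representation error no exp} and \Cref{assumption: full rank weights} are engineered to guarantee after the first epoch, with $\tau_1=\tau_{\mathsf{warm}}\log^4 T$ ensuring the burn-ins of \Cref{thm: dfw bound informal} and \Cref{thm: ls error informal} are met. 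Second, because the target is \emph{expected} regret, I must bound the contribution of the failure event (a concentration bound fails or an abort fires): here \Cref{asmp: state and controller bounds} and the fallback to the known stabilizing $K_0^{(h)}$ cap the per-step cost by $\mathrm{poly}(x_b,K_b,\norm{Q})$, while choosing the per-epoch, per-agent failure probability $\delta\asymp 1/\mathrm{poly}(T)$ renders the expected cost on this event negligible. Propagating $\log(1/\delta)$ through a union bound over the $\Theta(\log T)$ epochs and $H$ agents is what generates the final $\log^4 T$ and $\log^2(TH)$ factors, and carefully matching the two $\sqrt{T/H}$ sources --- exploration cost and representation bias --- is the crux of obtaining the advertised rate.
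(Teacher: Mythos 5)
Your proposal is correct and follows essentially the same route as the paper's proof: the same success-event/epoch-by-epoch induction (maintaining the \Cref{lem: CE closeness main body} closeness and \Cref{thm: dfw bound informal} warm-start conditions), the same three ingredients (CE suboptimality via \Cref{lem: CE closeness main body}, misspecified least-squares error with the $\alpha^2$ excitation floor from \Cref{asmpt: persisent excitation}, and the \texttt{DFW} contraction recursion), the same substitution of the exploration schedule \eqref{eq: exploration easy to learn} into the per-epoch sums, and the same treatment of the failure event via the abort bounds and a $1/\texttt{poly}(T)$ failure probability. The only cosmetic difference is that you sketch the within-epoch cost reduction (realized cost $\approx \tau_k \calJ^{(h)}(\hat K_k^{(h)}) + \tau_k \sigma_k^2 \cdot c_{\mathrm{expl}} + $ transient, with the indicator-weighted martingale issue handled implicitly) from scratch, whereas the paper outsources exactly this step to Lemmas 19, 20, and 22 of \citep{lee2023nonasymptotic}.
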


Consider once more the setting when the amount of data is on the order of the number of communicating agents. Here, the regret is dominated by a $\log T$ term. In particular, by sharing the ``hard to learn'' information, the communicating agents significantly simplify their respective adaptive control problems. Even in the regime of large $T$, the above regret bound improves upon what is possible in the single task setting as long as the number of agents is sufficiently large.
\section{Numerical Validation}

We now present numerical results to illustrate and validate our bounds. In particular, we compare our proposed multi-task representation learning approach for the adaptive LQR design (Algorithm \ref{alg: ce with exploration}) over the setting where a single system attempts to learn its dynamics by using its local simulation data and computes a CE controller on top of the estimated model. To this end, our experimental setup considers $H$ dynamical systems, described by \eqref{eq: dynamics}, where the system matrices $(A^{(h)}_\star, B^{(h)}_\star)$ are obtained by linearizing (around the origin) and discretizing (with Euler's approach) multiple cartpole dynamics with equations:
\vspace{-0.2cm}
\begin{align}\label{eq:cartpole_dynamics}
&(M^{(h)} + m^{(h)}) \ddot{x} + m^{(h)}\ell^{(h)}(\ddot{\theta}\cos(\theta) - \dot{\theta}^2\sin(\theta)) = u,\notag\\
&m^{(h)}(\ddot{x}\cos(\theta) + \ell^{(h)}\ddot{\theta} - g\sin(\theta)) = 0,
\vspace{-1cm}
\end{align}
for all $h \in [H]$, where $c^{(h)}_{\text{p}} = (M^{(h)}, m^{(h)}, l^{(h)})$ denote the tuple of cartpole parameters. Such parameters represent the cart mass, pole mass, and pole length, respectively. We set the gravity $g = 1$ and perform the discretization of \eqref{eq:cartpole_dynamics} with step-size $0.25$. Following \cite{lee2023nonasymptotic}, we generate $H$ $(A^{(h)}_\star, B^{(h)}_\star)$, by first considering a set of \emph{nominal} cartpole parameters: $c^{(1)}_{\text{p}} = (0.4, 1.0, 1.0)$, $c^{(2)}_{\text{p}} = (1.6, 1.3, 0.3)$, $c^{(3)}_{\text{p}} = (1.3, 0.7, 0.65)$, $c^{(4)}_{\text{p}} = (0.2, 0.055, 1.36)$, and $c^{(5)}_{\text{p}} = (0.2, 0.47, 1.825)$.

We then perturb such parameters with a random scalar within the interval $(0,0.1)$ to generate different cartpole parameters $c^{(h)}_p$. With the system matrices $(A^{(h)}_\star, B^{(h)}_\star)$ in hands, for all $h \in [H]$, we generate the disturbance signal as $w^{(h)}_t \sim \calN(0, 0.01 I_{\dx})$ and set the step-size and number of iterations of Algorithm \ref{alg: dfw} as $\eta = 0.25$, and $N = 1000$. It is worth noting that step 2 of Algorithm \ref{alg: dfw} is considered for the simplicity of the theoretical analysis only, in our experiments we exploit the entire dataset for all \texttt{DFW} iterations. %In addition, we consider $k_{\fin} = 10$ epochs with initial epoch length $\tau_1 = 30$. We also consider a exploratory sequence that scales as $\sigma_k^2 \propto \frac{1}{\sqrt{2^k}}$. Finally, the state and controller bounds are set to $x_b = 25$ and $K_b = 15$, and our randomly initialized representation has $d(\Phi_0, \Phi_\star) \approx 0.99$.

Figure \ref{fig:regret_easily_identifiable} depicts the expected regret of Algorithm \ref{alg: ce with exploration} as a function of the timesteps $T$ for a varying number of tasks $H$. Note that such expected regret is with respect to a nominal task $h = 1$. This figure shows the results for the easily identifiable setting, i.e., where Assumption \ref{asmpt: persisent excitation} is satisfied. The labeled ``fully-unknown" curve corresponds to the setting where a single system estimates its dynamics and computes its controller only using its own trajectory data. As predicted in our bounds (Theorem \ref{thm: regret bound no exploration}), by learning the representation in a multi-task setting and exploiting it to learn a more accurate model can provide a significant reduction in the expected regret when compared to the fully-unknown case. In particular, the regret incurred in the single-task setting is in the order of $\mathcal{O}(\sqrt{T})$, whereas the regret of Algorithm \ref{alg: ce with exploration} in the easily identifiable setting is dominated by $\mathcal{O}\left(\frac{\sqrt{T}}{\sqrt{H}}\right)$. Therefore, as the number of tasks $H$ increases, the regret of Algorithm \ref{alg: ce with exploration} decreases. This can be seen comparing the regret from $H = 25$ to $H = 100$--which both improve upon the regret in the fully-unknown setting. 
% This demonstrates the benefit of considering a multi-task setting with similar agents for a more efficient adaptive control design.

\begin{figure}
    \centering
    \includegraphics[width=0.5\textwidth]{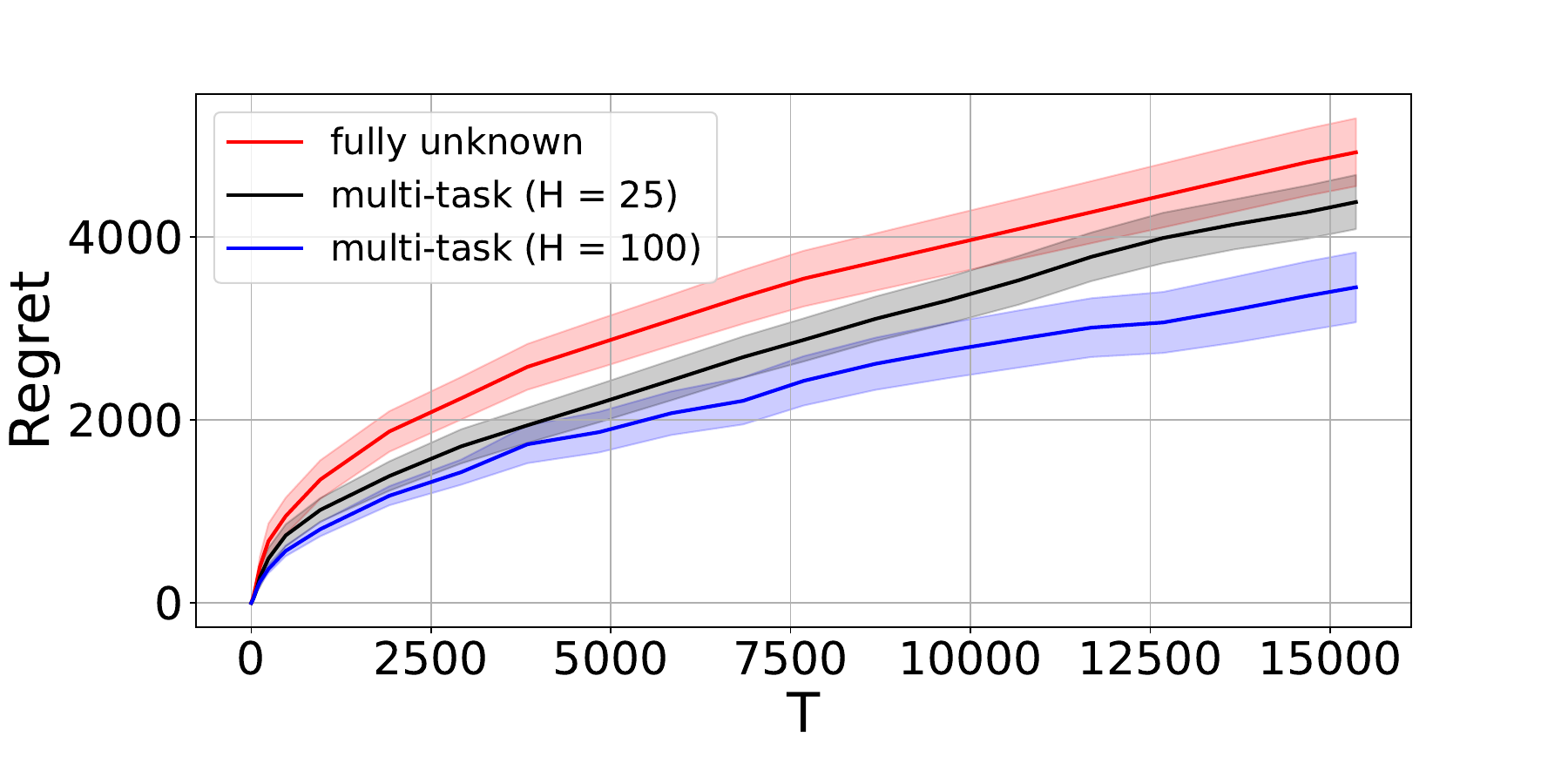}
    \vspace{-1cm}
    \caption{Regret of Algorithm \ref{alg: ce with exploration} with varying number of tasks $H$. We consider $k_{\fin} = 10$ epochs with initial epoch length $\tau_1 = 30$, an exploratory sequence scaling as $\sigma_k^2 \propto \frac{1}{\sqrt{2^k}}$, state and controller bounds $x_b = 25$, and $K_b = 15$, and random $\Phi_0$ with $d(\Phi_0, \Phi_\star) \approx 0.99$.} 
    \label{fig:regret_easily_identifiable}
\end{figure}
\section{Conclusion}
We proposed an algorithm for the simultaneous adaptive control of multiple linear dynamical systems sharing a representation. We leveraged recent results for representation learning with non-iid data in order to provide non-asymptotic regret bounds incurred by the algorithm in two settings: one where the system specific parameters are easily identified from the shared representation, and one where they are not. In the setting where the system specific parameters are easily identifiable, the regret scales as $\sqrt{T}/\sqrt{H}$, while in the difficult-to-identify setting, the regret scales as $T^{3/4}/H^1/5$. An interesting direction for future work is to determine whether the $T^{3/4}/H^{1/5}$ regret bound can be improved to $\sqrt{T}/\sqrt{H}$ even in the difficult-to-identify setting. It would also be interesting to extend the analysis of online adaptive control with shared representations to characterize the regret of learning to control certain classes of nonlinear systems, as has been done in the single task setting \cite{boffi2021regret}. 

\section*{Acknowledgements}

BL, TZ, and NM gratefully acknowledge support from NSF Award SLES 2331880, NSF CAREER award ECCS 2045834, and NSF EECS 2231349. LT is funded by the Columbia Presidential Fellowship. JA is partially funded by NSF grants ECCS 2144634 and 2231350 and the Columbia Data Science Institute.

\bibliographystyle{IEEEtranN}
\bibliography{refs, refs_dfw}

%\onecolumn
%\appendix
%\tableofcontents
%\newpage
\onecolumn
%\tableofcontents

%\appendix

\section{Outline for Proofs of  \Cref{thm: regret bound naive exploration} and \Cref{thm: regret bound no exploration}}

Our main results proceed by first defining a success events for which the certainty equivalent control scheme never aborts, and generates dynamics estimates $\bmat{\hat A_k^{(h)} & \hat B_k^{(h)}}$ which are sufficiently close to the true dynamics $\bmat{A_\star^{(h)} & B_\star^{(h)}}$ at all times. The success events 
are $\calE_{\mathsf{success},1} =  \calE_{\mathsf{bound}} \cap  \calE_{\mathsf{est},1} \cap \calE_{\mathsf{cont}}$ and $\calE_{\mathsf{success},2}= \calE_{\mathsf{bound}} \cap \calE_{\mathsf{est},2} \cap \calE_{\mathsf{cont}}$ for the settings where the task specific parameters are not easily identifiable and where they are, respectively. Here, 
\begin{align*}
    \calE_{\mathsf{bound}} &= \curly{\norm{x_t^{(h)}}^2  \leq x_b^2 \log T \quad \forall t\in[T], \, \forall h \in[H]} \cap \curly{\norm{\hat K_k^{(h)}} \leq K_b \,, \forall k \in [k_{\fin}],\, \forall h \in[H]}, \\
    \calE_{\mathsf{est},1} &= \curly{\norm{\bmat{\hat A_k^{(h)} & \hat B_k^{(h)}} - \bmat{A_\star^{(h)} & B_\star^{(h)}}}_F^2 \leq C_{\mathsf{est},1} \frac{\sigma^2 \dtheta \norm{P_{K_0}^{(h)}}}{\tau_{k} \sigma_k^2} \log (HT) + \frac{\beta_1 H^{2/5 } d(\hat \Phi_k, \Phi_\star)^2 }{\sigma_k^2}  \, \forall k\in[k_{\fin}], \, \forall h \in[H]}, \\
     \calE_{\mathsf{est},2} &= \curly{\norm{\bmat{\hat A_k^{(h)} & \hat B_k^{(h)}} - \bmat{A_\star^{(h)} & B_\star^{(h)}}}_F^2 \leq C_{\mathsf{est},2} \frac{\sigma^2  \dtheta   }{ \tau_{k} \alpha^2 }\log(HT)+ \beta_2 d(\hat \Phi_k, \Phi_\star)^2 \, \forall k \in [k_{\fin}], \, \forall h \in[H]}, \\
      \calE_{\mathsf{cont}} &= \curly{d(\hat \Phi_{k}, \Phi_\star) \leq \rho^{k N} d(\hat \Phi_{0}, \Phi_\star) + \frac{\Ccontract  \sqrt{N} \log (HT)}{(1-\sqrt{2}\rho^N) \sqrt{H \tau_{k} \sigma_k^2} } \, \forall k \in [k_{\fin}]},
\end{align*}
and $C_{\mathsf{est},1}$ and $C_{\mathsf{est},2}$ are positive universal constants. We recall that
\begin{itemize}
    \item $x_b$ and $K_b$ are the state and controller bounds triggering the abort procedure, see Assumption~\ref{asmp: state and controller bounds}.
    \item $\beta_1$ and $\beta_2$ are system theoretic constants defined in Assumption~\ref{asmp: upper bound on representation error exp} and Assumption~\ref{asmp: upper bound on representation error no exp}.
    \item $k_{\fin}$ is the total number of epochs run in \Cref{alg: ce with exploration}, and $\tau_k$ is the length of epoch $k$. 
    \item $\alpha$ is the parameter defined in Assumption~\ref{asmpt: persisent excitation} that quantifies the degree to which the initial and optimal controllers provide persistent excitation of the system specific parameters. 
    \item $\sigma_k^2$ is the level of input exploration during epoch $k$.
    \item $N$ is the number of descent steps run on the shared representation per epoch in \Cref{alg: dfw}.
     \item $\rho$ describes the radius of contraction for each iteration of \Cref{alg: dfw}, while $\Ccontract$ characterizes the numerator of the variance for each iteration; $\rho$ is defined in \Cref{thm: dfw bound informal} and $\Ccontract$ in \Cref{thm: dfw bound formal}.
\end{itemize}

With these events defined, the proofs for \Cref{thm: regret bound naive exploration} and \Cref{thm: regret bound no exploration} consist of two steps:
\begin{enumerate}
    \item In \Cref{s: success event probability bounds} we show that the success events $\calE_{\mathsf{success},1}$ and $\calE_{\mathsf{success},2}$ hold with high probability. 
    \item In \Cref{s: synthesizing the bounds}, we decompose the expected regret into a component incurred under the success event and under the failure event. We show that the regret incurred under the failure event is small. The regret under the success event then dominates the overall regret, which is in turn bounded to obtain the expressions in \Cref{thm: regret bound naive exploration} and \Cref{thm: regret bound no exploration}. 
\end{enumerate} 
Before doing so, we present formal versions of \Cref{thm: dfw bound informal} and \Cref{thm: ls error informal} in \Cref{s: technical preliminaries}.  
\section{Technical Preliminaries}
\label{s: technical preliminaries}

To bound the probability of failure, we require two key components for our analysis: a high probability bound on the estimation error in terms of the level of misspecificiation, and a bound showing that the contraction event holds with high probability for any one epoch. The bound for the first step is provided in \cite{lee2023nonasymptotic}, and the bound on the second step is provided in \cite{zhang2023meta}. We first describe the process characterizing the data collected during each epoch.

Consider a general estimation problem in which the system is excited by an arbitrary stabilizing controller $K$ and excitation level defined by $\sigma_u$.  In particular, we consider the evolution of the following system: 
\begin{equation}
\label{eq: rollout under K}
\begin{aligned}
    x_{t+1} &= A_\star x_t + B_\star u_t +  w_t \\
    u_t &= K x_t + \sigma_u g_t,
\end{aligned}
\end{equation}
where $g_t \overset{i.i.d.}{\sim} \calN(0,I)$, and $w_t$ is a random variable with $\sigma^2$-sub-Gaussian entries satifying $\E[w_t w_t^\top] =I  $. We assume that $\sigma_u^2\leq 1$ and that $x_1$ is a random variable. 

\subsection{Least squares error}
We first consider generating the estimates $\hat \theta, \Lambda = \texttt{LS}(\hat \Phi, x_{1:t+1}, u_{1:t})$. 
We present a bound on the estimation error $\norm{\hat \Phi \hat \theta - \Phi_\star \theta^\star}^2$ in terms of the true system parameters as well as the amount of data, $t$.  

\begin{theorem}[Misspecified LS Est. Error - Formal Version of \Cref{thm: ls error informal},  Theorem 5 of \citep{lee2023nonasymptotic}]
    \label{thm: ls estimation error}
    Let $\delta\in(0,1/2)$. Suppose $t \geq c \tau_{\mathsf{ls}}(K, \norm{x_1}^2, \delta)$ for 
      \begin{align*}
      \tau_{\mathsf{ls}}(K, \bar x, \delta) & \triangleq  \max\bigg\{\sigma^4  \norm{P_K}^3 \Psi_{B^\star}^2 \paren{\dx+\du + \log \frac{1}{\delta}},  \bar x \times \norm{P_K} + 1\bigg\}
      \end{align*}
      and a sufficiently large universal constant $c>0$. There exists an event $\calE_{\mathsf{ls}}$ which holds with probability at least $1-\delta$ under which the estimation error satisfies  
    \begin{align*}
        &\norm{\hat \Phi \hat \theta  - \Phi_\star \theta^\star}^2 \lesssim \frac{\dtheta \sigma^2  }{t \lambda_{\min} \paren{\bar \Delta^t(\sigma_u, K)}} \log\paren{ \frac{1}{ \delta} } + \paren{1+\frac{\sigma^4 \norm{P_K}^7 \Psi_{B^\star}^6 \paren{\dx+\du + \log \frac{1}{\delta}}}{t\lambda_{\min}(\bar \Delta^t(\sigma_u, K))^2 } } \frac{\norm{P_K}^2 \Psi_{B^\star}^2 d(\hat \Phi, \Phi_\star)^2 \norm{\theta^\star}^2}{\lambda_{\min}(\bar \Delta^t(\sigma_u, K))} .
    \end{align*}
    where
    \begin{align*}
        \bar \Delta^t(\sigma_u, K) \triangleq \hat \Phi^\top \brac{\paren{\frac{1}{t}\sum_{s=0}^{t-2} \sum_{j=0}^s \bmat{I \\ K} A_{K}^j (\sigma_u^2 B^\star \paren{B^\star}^\top + I)   \paren{A_K^j}^\top \bmat{I \\  K}^\top + \bmat{0 & 0 \\ 0 & \sigma_u^2 I_{\du}}} \otimes I_{\dx}} \hat \Phi.
    \end{align*}
\end{theorem}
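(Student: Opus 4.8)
The plan is to recognize the least-squares fit of \Cref{alg: least squares} as a \emph{misspecified} vector-valued linear regression and to split the resulting error into a variance term driven by the process noise $w_s$ and a bias term driven by the representation mismatch $d(\hat\Phi,\Phi_\star)$. Writing $z_s = \bmat{x_s \\ u_s}$ and $M_s = z_s \otimes I_{\dx}$, the rollout \eqref{eq: rollout under K} gives $x_{s+1} = (z_s^\top \otimes I_{\dx})\Phi_\star\theta_\star + w_s = M_s^\top \Phi_\star\theta_\star + w_s$, while the estimator solves the normal equations with the fitted features $M_s^\top\hat\Phi$. Setting $\hat G = \sum_s M_s M_s^\top$ and $\Lambda = \hat\Phi^\top \hat G\hat\Phi$, a direct computation yields
\begin{align*}
    \hat\Phi\hat\theta - \Phi_\star\theta_\star = \underbrace{\paren{\hat\Phi\Lambda^\dagger\hat\Phi^\top\hat G - I}\Phi_\star\theta_\star}_{\text{bias}} + \underbrace{\hat\Phi\Lambda^\dagger\textstyle\sum_s\hat\Phi^\top M_s w_s}_{\text{variance}}.
\end{align*}
The operator $\hat\Phi\Lambda^\dagger\hat\Phi^\top\hat G$ is the $\hat G$-oblique projection onto $\mathrm{range}(\hat\Phi)$, so the bias is the projection residual of the true parameter; since this projection fixes $\mathrm{range}(\hat\Phi)$, only the component $\hat\Phi_\perp\hat\Phi_\perp^\top\Phi_\star\theta_\star$ survives, whose norm is controlled by $d(\hat\Phi,\Phi_\star) = \norm{\hat\Phi_\perp^\top\Phi_\star}$ of \Cref{def: subspace dist}.

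The crux of the argument is a two-sided concentration of the empirical Gram matrix $\Lambda/t$ around its time-averaged population counterpart, which is exactly $\bar\Delta^t(\sigma_u, K)$: expanding \eqref{eq: rollout under K} shows $\E\brac{M_s M_s^\top} = \Sigma_s \otimes I_{\dx}$, where $\Sigma_s$ is the joint $(x,u)$-covariance built from the finite-horizon Gramian $\sum_{j\le s} A_K^j(\sigma_u^2 B^\star (B^\star)^\top + I)(A_K^j)^\top$ lifted by $\bmat{I \\ K}$ plus the exploratory block $\sigma_u^2 I_{\du}$, so that $\bar\Delta^t = \hat\Phi^\top\paren{(\tfrac1t\sum_s\Sigma_s)\otimes I_{\dx}}\hat\Phi$. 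I would establish $\Lambda/t \succeq \tfrac12\bar\Delta^t$ with high probability once $t \gtrsim \tau_{\mathsf{ls}}$ via a block-martingale small-ball argument paired with a sub-Gaussian upper tail on $\norm{z_s}$. \textbf{This is the main obstacle}: the \emph{temporal dependence} of the trajectory forbids the off-the-shelf iid matrix concentration used in representation-learning analyses, and the burn-in $\tau_{\mathsf{ls}}$ is precisely the horizon needed for the closed-loop covariance to settle and for these bounds to dominate the failure probability $\delta$; the $\norm{P_K}$ and $\Psi_{B^\star}$ factors enter through $\dlyap(A_K,\cdot)$ bounds on this covariance.

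With the Gram concentration in hand, the variance term is bounded by a self-normalized martingale inequality: $\sum_s\hat\Phi^\top M_s w_s$ is a martingale with predictable increments, so its $\Lambda^{-1}$-weighted squared norm concentrates at scale $\dtheta\sigma^2\log(1/\delta)$, and dividing by $\lambda_{\min}(\Lambda)\ge \tfrac{t}{2}\lambda_{\min}(\bar\Delta^t)$ (using $\hat\Phi^\top\hat\Phi = I$) produces the $\tfrac{\dtheta\sigma^2}{t\lambda_{\min}(\bar\Delta^t)}\log(1/\delta)$ contribution. For the bias I would bound the oblique-projection operator norm by a conditioning ratio of $\hat G$: the leading $\tfrac{\norm{P_K}^2\Psi_{B^\star}^2}{\lambda_{\min}(\bar\Delta^t)}$ factor is the projection conditioning, while the multiplicative correction $\paren{1 + \tfrac{\sigma^4\norm{P_K}^7\Psi_{B^\star}^6(\dx+\du+\log(1/\delta))}{t\,\lambda_{\min}(\bar\Delta^t)^2}}$ records the deviation of $\hat G/t$ from $\bar\Delta^t$ and vanishes as $t$ grows. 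Intersecting the Gram-concentration and martingale events to form $\calE_{\mathsf{ls}}$ (probability $\ge 1-\delta$), adding the two bounds, and simplifying with $\sigma^2\ge 1$ gives the stated estimate; the argument follows Theorem 5 of \citep{lee2023nonasymptotic} with $\hat\Phi$ playing the role of the fixed feature map.
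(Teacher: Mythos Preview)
Your proposal is correct and matches the standard approach for this result. Note, however, that the paper does not actually prove this theorem: it is imported verbatim as a technical preliminary from \cite{lee2023nonasymptotic} (Theorem~5 there), so there is no in-paper proof to compare against. Your sketch---bias/variance decomposition via the oblique $\hat G$-projection onto $\mathrm{range}(\hat\Phi)$, self-normalized martingale bound for the noise term, and two-sided Gram concentration around $\bar\Delta^t$ using a small-ball lower bound together with sub-Gaussian upper tails to handle temporal dependence---is precisely the route taken in that reference, and your identification of the burn-in $\tau_{\mathsf{ls}}$ as the horizon needed for covariance concentration is on point.
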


\subsection{Representation Learning Guarantees from \texttt{DFW}}

We now want to prove that applying \texttt{DFW} leads to the high-probability contraction guarantee previewed in  Theorem \ref{thm: dfw bound informal}. Analogous to the original analysis provided in \cite{zhang2023meta}, to this end, we consider a general realizable regression setting, i.e.\ for each task $h$, the labels are generated by a ground truth mechanism
\begin{align*}
    \yhi = \VEC^{-1}(\Phi_\star \thetahstar) \xhi + \whi,
\end{align*}
where $\yhi \in \R^{\dy}$, $\xhi \in \R^{\dx}$. Note that our sysID setting simply follows by setting $\yhi \leftarrow x_{i+1}^{(h)}$, $\xhi \leftarrow \bmat{\xhi^\top & \mathopen{}u_i^{(h)}\mathclose{}^\top}^\top$. We assume for simplicity in this section that $(\xhi, \yhi)$ are iid sampled for $i=1,\dots,T$, $\xhi$ is $B^2$-subgaussian for all $i,h$ (which always holds due to the truncation step in \Cref{alg: ce with exploration}), and that the noise $\whi$ is $\sigmahw^2$-subgaussian for all $i$, where we may then instantiate to linear systems via a mixing-time argument as in \cite{zhang2023meta}.

For a given task $h$ and current representation $\hat\Phi$ and task-specific weights $\hatthetah$, the representation gradient with respect to a given batch of data $\scurly{(\xhi, \yhi)}_{i=1}^T$ can be expressed as
\begin{align*}
    \nabla_\Phi^{(h)} \triangleq\;&\nabla_\Phi \frac{1}{2T} \sum_{i=1}^T \norm{\yhi - \VEC^{-1}(\hat\Phi\hatthetah) \xhi }_2^2 \\
    =\;& \nabla_\Phi \frac{1}{2T} \sum_{i=1}^T \norm{\yhi - \paren{\xhi^\top \otimes I_{\dy}} \hat\Phi \hatthetah }_2^2 \\
    =\;& \frac{1}{T} \sum_{i=1}^T \paren{ \Xhi^\top \Xhi \hat\Phi \hatthetah \hatthetah^\top - \Xhi^\top \yhi \hatthetah^\top } \\
    =\;& \frac{1}{T} \sum_{i=1}^T \Xhi^\top \Xhi \paren{\hat\Phi \hatthetah - \Phi_\star \thetahstar} \hatthetah^\top - \frac{1}{T} \sum_{i=1}^T  \Xhi^\top \whi \hatthetah^\top,
\end{align*}
% where $\Xhi \triangleq I_{\dy} \otimes \xhi^\top$
where $\Xhi \triangleq \xhi^\top \otimes I_{\dy}$
\citep{petersen2008matrix}. Recalling the definition of the orthogonal complement matrix $\Phi_{\star,\perp}$ and the subspace distance (\Cref{def: subspace dist}), we note that $\norm{\Phi_{\star,\perp}^\top \hat\Phi}_2$ is the subspace distance between $\Phi_\star$ and $\hat \Phi$, and $\Phi_{\star,\perp}^\top \Phi_\star = 0$. Noting these identities, the key insight in \texttt{DFW} \citep{zhang2023meta} is to pre-condition the representation gradient $\nabla_\Phi^{(h)}$ by the inverse sample-covariance $\paren{\frac{1}{T}\sum \Xhi^\top \Xhi}^{-1}$,
\begin{equation}
\begin{split}
    \tilde\nabla_\Phi^{(h)} &\triangleq \paren{\frac{1}{T}\sum \Xhi^\top \Xhi}^{-1} \nabla_\Phi^{(h)} \\
    &= \paren{\hat\Phi \hatthetah - \Phi_\star \thetahstar} \hatthetah^\top -  \paren{\sum_{i=1}^T \Xhi^\top \Xhi}^{-1}\sum_{i=1}^T 
    \Xhi^\top \whi \hatthetah^\top.
\end{split}
\end{equation}
Therefore, performing a descent step with the adjusted gradient $\tilde\nabla_\Phi^{(h)}$ and averaging the resulting updated representations across tasks $h$ yields
\begin{align}
    \overline\Phi_+ &= \frac{1}{H}\sum_{h=1}^H \paren{\hat\Phi - \eta \tilde\nabla_\Phi^{(h)}} \notag \\
    &= \hat\Phi \paren{I - \frac{\eta}{H}\sum_{h=1}^H \hatthetah \hatthetah^\top
    } + \Phi_\star  \paren{\frac{\eta}{H}\sum_{h=1}^H  \thetahstar \hatthetah^\top} + \frac{\eta}{H}\sum_{h=1}^H \paren{\sum_{i=1}^T \Xhi^\top \Xhi}^{-1}\sum_{i=1}^T 
    \Xhi^\top \whi \hatthetah^\top.
\end{align}
Pulling out the orthonormalization factor $R$ (via e.g.\ a QR decomposition) and left-multiplying the above by $\Phi_{\star, \perp}^\top$ yields
\begin{align*}
    \Phiperp^\top \hat\Phi_+ R &= \Phiperp^\top \hat\Phi \paren{I - \frac{\eta}{H}\sum_{h=1}^H \hatthetah \hatthetah^\top
    } + \Phiperp \frac{\eta}{H}\sum_{h=1}^H \paren{\sum_{i=1}^T \Xhi^\top \Xhi}^{-1}\sum_{i=1}^T 
    \Xhi^\top \whi \hatthetah^\top.
\end{align*}
Thus, if we establish orthonormalization factor $R$ is sufficiently well-conditioned, by taking the spectral norm on both sides of the above, we get the following decomposition
\begin{align}\label{eq: DFW contraction decomp}
    d(\hat\Phi_+, \Phi_\star) &\leq d(\hat\Phi, \Phi_\star) \norm{I - \frac{\eta}{H}\sum_{h=1}^H \hatthetah \hatthetah^\top} \norm{R^{-1}} + \norm{\frac{\eta}{H}\sum_{h=1}^H (\Xh \Xh^\top)^{-1} \Xh \Wh^\top \hatthetah^\top} \norm{R^{-1}}
\end{align}
% \Thomas{to-do: make sure stacked matrices are defined somewhere and transposes are correct}.
As proposed in \cite{zhang2023meta}, bounding the improvement from $\hat\Phi$ to $\hat\Phi_+$ essentially reduces to establishing that $\norm{I - \frac{\eta}{H}\sum_{h=1}^H \hatthetah \hatthetah^\top}$ is a contraction with high-probability and analyzing the noise term $\frac{1}{H}\sum_{h=1}^H (\Xh \Xh^\top)^{-1} \Xh \Wh^\top$ as an \textit{average of self-normalized martingales}. The following adapts the analysis from \cite{zhang2023meta}, making necessary alterations due to the difference in definition of the representation.

Before proceeding, we note that \texttt{DFW} computes the least-squares weights $\hatthetah$ and representation gradient $\hat\Phi_+$ on disjoint partitions of data. We henceforth denote the subset of data used for computing the weights $\hatthetah$ by $\calT_1$, $\abs{\calT_1} = T_1$, and for $\hat \Phi_+$ by $\calT_2$, $\abs{\calT_2} = T_2$.
\,\\\\
\textbf{Contraction Factor}%\label{appx: DFW contraction proof}

As aforementioned, bounding the ``contraction rate'' of the representation toward optimality amounts to bounding
\begin{align*}
    \norm{I - \frac{\eta}{H}\sum_{h=1}^H \hatthetah \hatthetah^\top} \norm{R^{-1}}.
\end{align*}
We recall that $\hatthetah$ are the least-squares weights holding $\hat\Phi$ fixed:
\begin{align*}
    \hatthetah &\triangleq \argmin_{\theta} \sum_{i=1}^{T_1} \norm{\yhi - \VEC^{-1}(\hat\Phi \theta_\star) \xhi }_2^2 \\
    &= \paren{\sum_{i=1}^{T_1} \hat\Phi^\top \Xhi^\top \Xhi \hat\Phi }^{-1} \paren{\sum_{i=1}^{T_1} \hat\Phi^\top \Xhi^\top \yhi } \\
    &= \underbrace{\paren{\sum_{i=1}^{T_1} \hat\Phi^\top \Xhi^\top \Xhi \hat\Phi }^{-1} \paren{\sum_{i=1}^{T_1} \hat\Phi^\top \Xhi^\top \VEC^{-1}\paren{\Phi_\star \theta_\star}\xhi }  }_{(a)} + \underbrace{\paren{\sum_{i=1}^{T_1} \hat\Phi^\top \Xhi^\top \Xhi \hat\Phi }^{-1} \paren{\sum_{i=1}^{T_1} \hat\Phi^\top \Xhi^\top \whi }}_{(b)}.
\end{align*}
We recall the following properties of the Kronecker product:
\begin{align}
\begin{split}\label{eq: kronecker prod properties}
    (A \otimes B)(C \otimes D) &= (AC) \otimes (BD) \\
    \VEC(ABC) &= (C^\top \otimes A) \VEC(B).
\end{split}
\end{align}
Furthermore, given $\scurly{\lambda_i}, \scurly{\mu_j}$ the eigen/singular values of $A,B$, we recall the eigen/singular values of $A \times B$ are $\lambda_i \mu_j$ for all combinations $(i,j)$. With these facts in mind, we observe that term (a) above can be written as
\begin{align*}
    &\paren{\sum_{i=1}^{T_1} \hat\Phi^\top \Xhi^\top \Xhi \hat\Phi }^{-1} \paren{\sum_{i=1}^{T_1} \hat\Phi^\top \Xhi^\top \VEC^{-1}\paren{\Phi_\star \theta_\star}\xhi } \\
    =\;& \paren{\sum_{i=1}^{T_1} \hat\Phi^\top \Xhi^\top \Xhi \hat\Phi }^{-1} \paren{\sum_{i=1}^{T_1} \hat\Phi^\top \Xhi^\top \VEC^{-1}\paren{\Phi_\star \theta_\star}\xhi } \\
    =\;&  \paren{\sum_{i=1}^{T_1} \hat\Phi^\top \Xhi^\top \Xhi \hat\Phi }^{-1} \paren{\sum_{i=1}^{T_1} \hat\Phi^\top \paren{\xhi^\top  \otimes \Xhi^\top}\Phi_\star \theta_\star } \\
    =\;& \bigg(\hat\Phi^\top \bigg(\underbrace{\frac{1}{T_1}\sum_{i=1}^{T_1} \xhi \xhi^\top \otimes I_{\dy}}_{\triangleq \hatSigmaXh}\bigg) \hat\Phi \bigg)^{-1} \paren{\hat\Phi^\top \paren{\frac{1}{T_1}\sum_{i=1}^{T_1} \xhi \xhi^\top \otimes I_{\dy} } \Phi_\star \theta_\star } \\
    =\;& \paren{ \hat\Phi^\top \hatSigmaXh \hat\Phi }^{-1} \paren{\hat\Phi^\top \hatSigmaXh (\hat\Phi \hat\Phi^\top + \hatPhiperp\hatPhiperp^\top) \Phi_\star \theta_\star } \\
    =\;& \hat \Phi^\top \Phi_\star \theta_\star + \paren{ \hat\Phi^\top \hatSigmaXh \hat\Phi }^{-1} \paren{\hat\Phi^\top \hatSigmaXh \hatPhiperp\hatPhiperp^\top\Phi_\star \theta_\star }.
\end{align*}
We make particular note that in this form, $\norm{\hatPhiperp^\top\Phi_\star}_2 \triangleq d(\hat\Phi, \Phi_\star)$, directly reflecting the impact of misspecification on the least-squares weights.

Now recalling term (b) from above, we have
\begin{align*}
    \paren{\sum_{i=1}^{T_1} \hat\Phi^\top \Xhi^\top \Xhi \hat\Phi }^{-1} \paren{\sum_{i=1}^{T_1} \hat\Phi^\top \Xhi^\top \whi }.
\end{align*}
By viewing $\Xhi \hat\Phi$ as an $\R^{\dy \times \dtheta}$ matrix-valued stochastic process and $\whi$ as a $\sigma^2$-subgaussian $\R^{\dy}$-valued stochastic process that is independent of $\Xhi$, we may adapt \citet[Theorem 7]{lee2023nonasymptotic}.
\begin{lemma}[Adapted from {\citet[Theorem 7]{lee2023nonasymptotic}}]\label{lem: rep-conditioned SNM bound}
    Let $\Sigma \in \R^{\dtheta \times \dtheta}$ be a fixed positive-definite matrix. Then, with probability at least $1 - \delta$, we have
    \begin{align*}
        \norm{\paren{\Sigma +  \sum_{i=1}^{T_1} \hat\Phi^\top \Xhi^\top \Xhi \hat\Phi }^{-1/2} \paren{\sum_{i=1}^{T_1} \hat\Phi^\top \Xhi^\top \whi }}^2_2 &\leq \sigmahw^2 \log\paren{\frac{\det(\Sigma + \sum_{i=1}^{T_1} \hat\Phi^\top \Xhi^\top \Xhi \hat\Phi)}{\det(\Sigma)}} + 2\sigmahw^2 \log(1/\delta).
    \end{align*}
\end{lemma}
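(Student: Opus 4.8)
The claimed inequality is a self-normalized martingale (SNM) tail bound of the Abbasi-Yadkori--P\'al--Szepesv\'ari type, generalized to matrix-valued regressors and vector-valued noise exactly as in \citet[Theorem 7]{lee2023nonasymptotic}. Since $\hat\Phi$ is held fixed throughout the least-squares fit, the plan is to set $Z_i \triangleq \Xhi \hat\Phi \in \R^{\dy\times\dtheta}$, which is predictable with respect to the filtration $\mathcal{F}_{i-1}$ generated by the data up to time $i-1$, and to observe that with $S_t \triangleq \sum_{i=1}^{t} Z_i^\top \whi$ and $V_t \triangleq \Sigma + \sum_{i=1}^{t} Z_i^\top Z_i$ the left-hand side is precisely $\norm{V_{T_1}^{-1/2} S_{T_1}}_2^2 = S_{T_1}^\top V_{T_1}^{-1} S_{T_1}$. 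It therefore suffices to invoke the SNM bound with the projected regressors $Z_i$ in place of the raw features $\Xhi$; because composing with the fixed $\hat\Phi$ preserves predictability, the argument of \citet[Theorem 7]{lee2023nonasymptotic} applies verbatim. For completeness I would reproduce that argument in the three standard steps below.

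First, I would verify the exponential supermartingale property. For a fixed $\lambda \in \R^{\dtheta}$ define
\begin{align*}
    M_t(\lambda) \triangleq \exp\paren{\lambda^\top \sum_{i=1}^t Z_i^\top \whi - \frac{\sigmahw^2}{2}\, \lambda^\top \paren{\sum_{i=1}^t Z_i^\top Z_i} \lambda}.
\end{align*}
Writing $\lambda^\top Z_i^\top \whi = (Z_i\lambda)^\top \whi$ and using that $\whi$ is conditionally $\sigmahw^2$-subgaussian while $Z_i \lambda$ is $\mathcal{F}_{i-1}$-measurable, the conditional MGF bound $\E\brac{\exp((Z_i\lambda)^\top \whi) \mid \mathcal{F}_{i-1}} \leq \exp\paren{\tfrac{\sigmahw^2}{2}\norm{Z_i\lambda}_2^2}$ shows that $M_t(\lambda)$ is a nonnegative supermartingale with $M_0(\lambda) = 1$. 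This is the one place where the generalization from scalar to vector noise enters, and it is handled directly by the vector subgaussianity of $\whi$.

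Second, I would apply the method of mixtures: integrating $M_t(\lambda)$ against the Gaussian prior $\lambda \sim \calN(0, (\sigmahw^2 \Sigma)^{-1})$ and completing the square in the exponent (using $\sum_i Z_i^\top Z_i + \Sigma = V_t$) yields the mixture supermartingale
\begin{align*}
    \bar M_t = \paren{\frac{\det \Sigma}{\det V_t}}^{1/2} \exp\paren{\frac{1}{2\sigmahw^2}\, S_t^\top V_t^{-1} S_t},
\end{align*}
which remains a nonnegative supermartingale with $\bar M_0 = 1$. Third, Ville's maximal inequality gives $\mathbb{P}(\bar M_{T_1} \geq 1/\delta) \leq \delta$; on the complementary event, taking logarithms and rearranging produces $S_{T_1}^\top V_{T_1}^{-1} S_{T_1} \leq \sigmahw^2 \log\paren{\det(V_{T_1})/\det(\Sigma)} + 2\sigmahw^2\log(1/\delta)$, which is exactly the stated bound.

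The main obstacle is not the martingale machinery itself---which is classical---but the bookkeeping that the projected regressor $Z_i = \Xhi\hat\Phi$ is genuinely predictable once the bound is instantiated to the dynamical (non-iid) setting, where $\xhi$ depends on past noise. In the iid section this is immediate, but when later passing to linear systems one must ensure the noise $\whi \leftarrow w_i^{(h)}$ is independent of the current regressor formed from $(x_i^{(h)}, u_i^{(h)})$, so that the conditional MGF step remains valid. Once predictability is secured, the $\hat\Phi$-composition merely reshapes the Gram matrix from $\sum_i \Xhi^\top \Xhi$ to $\sum_i \hat\Phi^\top\Xhi^\top \Xhi\hat\Phi$, and the bound follows.
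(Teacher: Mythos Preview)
Your proposal is correct and matches the paper's approach: the paper does not give a separate proof but simply notes that by viewing $\Xhi\hat\Phi$ as an $\R^{\dy\times\dtheta}$ matrix-valued predictable process and $\whi$ as $\sigmahw^2$-subgaussian noise, \citet[Theorem~7]{lee2023nonasymptotic} applies directly, and you have unpacked precisely that argument via the standard method-of-mixtures supermartingale construction. Your extra remark on predictability in the non-iid instantiation is valid but goes slightly beyond what the paper needs here, since the lemma is stated in the iid section where independence of $\whi$ and $\Xhi$ is assumed outright.
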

To instantiate the noise term bound, we combine \Cref{lem: rep-conditioned SNM bound} with a covariance concentration bound.
\begin{lemma}{\label{lem: rep-conditioned cov conc bound}}
    Define the population covariance matrix $\SigmaXh \triangleq \Ex\brac{\hatSigmaXh}$. Then, given $T_1 \gtrsim B^4 (\min\{\dtheta, \dx\} + \log(1/\delta))$, we have with probability at least $1 - \delta$
    \begin{align*}
        0.9 \hat\Phi^\top \SigmaXh \hat\Phi \preceq \hat\Phi^\top \hatSigmaXh \hat\Phi \preceq 1.1 \hat\Phi^\top \SigmaXh \hat\Phi.
    \end{align*}
    Therefore, setting $\Sigma = \SigmaXh$ in \Cref{lem: rep-conditioned SNM bound}, given $T_1 \gtrsim B^4 (\min\{\dtheta, \dx\} + \log(1/\delta))$, with probability at least $1 - \delta$, we have
    \begin{align*}
        &\norm{\paren{\sum_{i=1}^{T_1} \hat\Phi^\top \Xhi^\top \Xhi \hat\Phi }^{-1} \paren{\sum_{i=1}^{T_1} \hat\Phi^\top \Xhi^\top \whi }}_2^2
        \lesssim \sigmahw^2 \frac{\dtheta + \log(1/\delta)}{T_1 \lambda_{\min}(\hat\Phi^\top \SigmaXh \hat\Phi)}.
    \end{align*}
\end{lemma}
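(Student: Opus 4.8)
The plan is to prove the two claims in turn: first the multiplicative (two-sided) covariance concentration, then deduce the noise-term bound by feeding the concentration event into \Cref{lem: rep-conditioned SNM bound}.

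\textbf{Step 1 (reduction by whitening).} I would first recast the two-sided bound as an isotropic covariance-estimation statement. Writing $v_i \triangleq \hat\Phi^\top \Xhi^\top = \hat\Phi^\top(\xhi \otimes I_{\dy}) \in \R^{\dtheta \times \dy}$, the mixed-product identity in \eqref{eq: kronecker prod properties} gives $\hat\Phi^\top \hatSigmaXh \hat\Phi = \frac{1}{T_1}\sum_{i=1}^{T_1} v_i v_i^\top$ and $\hat\Phi^\top \SigmaXh \hat\Phi = \Ex[v_i v_i^\top]$. Setting $W \triangleq (\hat\Phi^\top \SigmaXh \hat\Phi)^{-1/2}$ (which is well-posed once $\hat\Phi^\top \SigmaXh \hat\Phi \succ 0$) and $\tilde v_i \triangleq W v_i$, the target $0.9\,\hat\Phi^\top \SigmaXh \hat\Phi \preceq \hat\Phi^\top \hatSigmaXh \hat\Phi \preceq 1.1\,\hat\Phi^\top \SigmaXh \hat\Phi$ is equivalent to $\norm{\frac{1}{T_1}\sum_{i=1}^{T_1} \tilde v_i \tilde v_i^\top - I_{\dtheta}} \leq 0.1$, where by construction $\Ex[\tilde v_i \tilde v_i^\top] = I_{\dtheta}$.

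\textbf{Step 2 (concentration; the main obstacle).} Each $\tilde v_i \tilde v_i^\top$ is a PSD matrix of rank at most $\min\{\dtheta,\dy\}$ generated by the single $\dx$-dimensional subgaussian covariate $\xhi$; the $\dy$ columns of $\tilde v_i$ share $\xhi$ and are hence dependent, so I would treat the sum over $i$ as a sum of $T_1$ \emph{independent}, mean-$I_\dtheta$ matrix increments and apply a subgaussian covariance-estimation bound (equivalently, a truncated matrix Bernstein argument as in \cite{zhang2023meta}). The two quantitative inputs are: (i) the whitened vectors inherit subgaussianity from the $B^2$-subgaussianity of $\xhi$, so the rank-one pieces $\tilde v_{ij}\tilde v_{ij}^\top$ are subexponential with variance proxy scaling as $B^4$; and (ii) the effective dimension of the $\dtheta \times \dtheta$ problem is $\min\{\dtheta,\dx\}$, capped jointly by the subspace dimension $\dtheta$ and the $\dx$-dimensional covariate entering through $\SigmaXh = \Sigma_x \otimes I_{\dy}$. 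Combined, these give that $T_1 \gtrsim B^4(\min\{\dtheta,\dx\} + \log(1/\delta))$ suffices for the $0.1$ deviation with probability at least $1-\delta$. I expect this step to be the crux: the Kronecker structure yields matrix-valued, within-sample-dependent, heavy-tailed increments, so the care is in extracting the correct $B^4$ and $\min\{\dtheta,\dx\}$ scaling rather than a naive ambient $\dtheta$ count.

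\textbf{Step 3 (noise term).} On the concentration event of Steps~1--2 I would invoke \Cref{lem: rep-conditioned SNM bound} with the fixed regularizer $\Sigma \triangleq T_1\,\hat\Phi^\top \SigmaXh \hat\Phi$, the natural positive-definite choice. Abbreviating $G \triangleq \sum_{i}\hat\Phi^\top \Xhi^\top \Xhi \hat\Phi = T_1 \hat\Phi^\top \hatSigmaXh \hat\Phi$ and $b \triangleq \sum_i \hat\Phi^\top \Xhi^\top \whi$, the target is $\norm{G^{-1}b}^2 = b^\top G^{-2} b \leq \lambda_{\min}(G)^{-1}\, b^\top G^{-1}b$. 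The concentration gives $0.9\,\Sigma \preceq G \preceq 1.1\,\Sigma$, so that $\Sigma + G \preceq 2.1\,\Sigma$ forces the log-determinant term to satisfy $\log\tfrac{\det(\Sigma+G)}{\det\Sigma} \leq \dtheta \log 2.1 \lesssim \dtheta$, while $\Sigma \preceq G/0.9$ yields $b^\top G^{-1}b \leq (1 + 1/0.9)\,b^\top(\Sigma+G)^{-1}b$; \Cref{lem: rep-conditioned SNM bound} then bounds the latter by $\sigmahw^2(\dtheta + \log(1/\delta))$ up to constants. Finally $\lambda_{\min}(G) \geq 0.9\,T_1\,\lambda_{\min}(\hat\Phi^\top \SigmaXh \hat\Phi)$ delivers $\norm{G^{-1}b}^2 \lesssim \sigmahw^2 \tfrac{\dtheta + \log(1/\delta)}{T_1\,\lambda_{\min}(\hat\Phi^\top \SigmaXh \hat\Phi)}$. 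A union bound over the concentration and self-normalized-martingale events (which are amenable to conditioning since $\xhi$ and $\whi$ are independent), with $\delta$ rescaled by a constant, gives the stated probability $1-\delta$.
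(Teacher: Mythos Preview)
Your proposal is correct and follows essentially the same route as the paper: covariance concentration first, then feed the event into \Cref{lem: rep-conditioned SNM bound} and control the log-determinant by the upper sandwich and $\lambda_{\min}(G)$ by the lower sandwich. The only notable divergence is that the paper dispatches your Step~2 more directly by observing $\sum_i \Xhi^\top\Xhi = \bigl(\sum_i \xhi\xhi^\top\bigr)\otimes I_{\dy}$, so ordinary $\dx$-dimensional subgaussian covariance concentration on the raw covariates $\xhi$ transfers verbatim to $\hatSigmaXh$ (and then to the $\hat\Phi$-conjugate), with the $\dtheta$ branch handled separately via the standard low-dimensional projection argument of \cite{du2020few}; your anticipated matrix-Bernstein with within-sample dependence is therefore unnecessary.
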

\textit{Proof of \Cref{lem: rep-conditioned cov conc bound}:} the first statement is an instantiation of a standard subgaussian covariance concentration bound, using the fact that $\xhi$ are $B^2$-subgaussian random vectors for all $i \in [T_1]$, see e.g.\ \citet[Lemma A.2]{zhang2023meta}. Since $\sum_{i=1}^{T_1} \Xhi^\top \Xhi = \paren{\sum_{i=1}^{T_1}\xhi \xhi^\top} \otimes I_{\dy}$, we have that
\begin{align*}
    c \Ex\brac{\sum_{i=1}^{T_1}\xhi \xhi^\top} \preceq \sum_{i=1}^{T_1}\xhi \xhi^\top \preceq C \Ex\brac{\sum_{i=1}^{T_1}\xhi \xhi^\top} \implies c \SigmaXh \preceq \hatSigmaXh \preceq C \SigmaXh,
\end{align*}
where the former event is precisely the covariance concentration event of the covariates $\xhi$. Therefore, if $T_1 \gtrsim B^4 (\dx + \log(1/\delta))$, then we have $0.9 \SigmaXh \preceq \hatSigmaXh \preceq 1.1 \SigmaXh$, and psd-ordering is preserved under pre and post-multiplying by any matrix $M^\top, M$, in particular setting $M = \hat\Phi$. Furthermore, if $\dtheta \leq \dx$, then by a standard argument (see e.g.\ \citet[Claim A.2]{du2020few}), then we have directly $T_1 \gtrsim B^4 (\dtheta + \log(1/\delta))$, $0.9 \hat\Phi^\top \SigmaXh \hat\Phi \preceq \hat\Phi^\top \hatSigmaXh \hat\Phi \preceq 1.1 \hat\Phi^\top \SigmaXh \hat\Phi$.

The latter statement follows by observing that conditioning on the covariance concentration event, we have
\begin{align*}
    \norm{\paren{\sum_{i=1}^{T_1} \hat\Phi^\top \Xhi^\top \Xhi \hat\Phi }^{-1/2} }^2 &\leq \paren{0.9 T_1 \lambda_{\min}\paren{\hat\Phi^\top \SigmaXh \hat\Phi} }^{-1}, \\
    \hat \Phi^\top \hatSigmaXh \hat\Phi &\succeq 0.9 \hat\Phi^\top \SigmaXh \hat\Phi \\
    \implies 2 \hat \Phi^\top \hatSigmaXh \hat\Phi &\succeq 0.9 \hat\Phi^\top \SigmaXh \hat\Phi + \hat \Phi^\top \hatSigmaXh \hat\Phi \\
    \implies (\hat \Phi^\top \hatSigmaXh \hat\Phi )^{-1} &\preceq 2 \paren{0.9 \hat\Phi^\top \SigmaXh \hat\Phi + \hat \Phi^\top \hatSigmaXh \hat\Phi}^{-1},
\end{align*}
such that
\begin{align*}
    &\norm{\paren{\sum_{i=1}^{T_1} \hat\Phi^\top \Xhi^\top \Xhi \hat\Phi }^{-1} \paren{\sum_{i=1}^{T_1} \hat\Phi^\top \Xhi^\top \whi }}_2^2 \\
    \lesssim\;& \frac{1}{ T_1 \lambda_{\min}\paren{\hat\Phi^\top \SigmaXh \hat\Phi}} \paren{\sigmahw^2 \log\paren{\frac{\det(\Sigma + \sum_{i=1}^{T_1} \hat\Phi^\top \Xhi^\top \Xhi \hat\Phi)}{\det(\Sigma)}} + 2\sigmahw^2 \log(1/\delta)} \\
    =\;& \frac{1}{ T_1 \lambda_{\min}\paren{\hat\Phi^\top \SigmaXh \hat\Phi}} \paren{\sigmahw^2 \log\paren{\det\paren{I_{\dtheta} + \paren{\sum_{i=1}^{T_1} \hat\Phi^\top \Xhi^\top \Xhi \hat\Phi}(\Sigma)^{-1})} + 2\sigmahw^2 \log(1/\delta)}} \\
    \leq\;&\frac{1}{ T_1 \lambda_{\min}\paren{\hat\Phi^\top \SigmaXh \hat\Phi}} \paren{\sigmahw^2 \dtheta \log\paren{1 + \frac{1.1}{0.9}} + 2\sigmahw^2 \log(1/\delta)} \\
    \lesssim\;& \sigmahw^2 \frac{\dtheta + \log(1/\delta)}{T_1 \lambda_{\min}(\hat\Phi^\top \SigmaXh \hat\Phi)}.
\end{align*}
\hfill$\blacksquare$

Having pulled out the misspecification error from term (a) and bounded the noise term (b), we may bound the contraction factor by instantiating \citet[Lemma A.12]{zhang2023meta}.
\begin{proposition}\label{prop: contraction factor bound}
    Assume that $\xhi$ are $B^2$-subgaussian for all $i \in [T_1]$, $h \in [H]$, and $\whi$ are $\sigmahw^2$-subgaussian for all $i \in [T_1]$. Define:
    \begin{align*}
        \bfThetamin \triangleq \lambda_{\min}\paren{\frac{1}{H} \sum_{h=1}^H \thetahstar \thetahstar^\top}, \;\;
        \bfThetamax \triangleq \lambda_{\max}\paren{\frac{1}{H} \sum_{h=1}^H \thetahstar \thetahstar^\top}
    \end{align*}
    If the following burn-in conditions hold:
    \begin{align*}
        d(\hat\Phi, \Phi_\star) &\leq \frac{1}{100} \sqrt{\frac{\bfThetamin}{\bfThetamax}} \; \max_h \frac{\lambda_{\min}\paren{\hat\Phi^\top \SigmaXh \hat\Phi}}{\norm{\hat\Phi^\top \SigmaXh \hatPhiperp}} \\
        T_1 &\gtrsim \max\curly{B^4 (\min\{\dtheta, \dx\} + \log(H/\delta)), \;\;\bfThetamin^{-1} \fracsumH \frac{\sigmahw^2}{\lambda_{\min}(\hat\Phi^\top \SigmaXh \hat\Phi)} \frac{\dtheta + \log(H/\delta)}{T_1}},
    \end{align*}
    where then for step-size satisfying $\eta \leq 0.956 \bfThetamax^{-1}$, with probability at least $1-\delta$, we have
    \begin{align*}
        \norm{I - \frac{\eta}{H}\sum_{h=1}^H \hatthetah \hatthetah^\top} &\leq \paren{1 - 0.954 \eta \bfThetamin}.
    \end{align*}
\end{proposition}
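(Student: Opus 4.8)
The plan is to reduce the claim to a two-sided eigenvalue bound on $M \triangleq \frac{1}{H}\sum_{h=1}^H \hatthetah \hatthetah^\top$, obtained by showing that $M$ concentrates around the ``clean'' matrix $M_0 \triangleq \hat\Phi^\top \Phi_\star \paren{\frac{1}{H}\sum_{h=1}^H \thetahstar\thetahstar^\top} \Phi_\star^\top \hat\Phi$. The starting point is the decomposition of $\hatthetah$ derived above the statement: $\hatthetah = \hat\Phi^\top\Phi_\star\thetahstar + e^{(h)}$, where $e^{(h)} = e^{(h)}_{\mathsf{mis}} + e^{(h)}_{\mathsf{noise}}$ collects the misspecification term $e^{(h)}_{\mathsf{mis}} = \paren{\hat\Phi^\top\hatSigmaXh\hat\Phi}^{-1}\hat\Phi^\top\hatSigmaXh\hatPhiperp\hatPhiperp^\top\Phi_\star\thetahstar$ (term (a) minus its leading piece) and the noise term $e^{(h)}_{\mathsf{noise}}$ (term (b)). First I would bound each error uniformly over $h$ with high probability: the covariance-concentration event of \Cref{lem: rep-conditioned cov conc bound} replaces $\hatSigmaXh$ by $\SigmaXh$ up to constants, yielding $\norm{e^{(h)}_{\mathsf{mis}}} \lesssim \frac{\norm{\hat\Phi^\top\SigmaXh\hatPhiperp}}{\lambda_{\min}(\hat\Phi^\top\SigmaXh\hat\Phi)}\, d(\hat\Phi,\Phi_\star)\,\norm{\thetahstar}$, while the same lemma controls $\norm{e^{(h)}_{\mathsf{noise}}}^2 \lesssim \sigmahw^2 \frac{\dtheta + \log(H/\delta)}{T_1\,\lambda_{\min}(\hat\Phi^\top\SigmaXh\hat\Phi)}$ after a union bound over the $H$ tasks (which produces the $\log(H/\delta)$ factor).

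Writing $\mu^{(h)} \triangleq \hat\Phi^\top\Phi_\star\thetahstar$, I would then expand
\[
M = M_0 + \tfrac{1}{H}\sum_h\paren{\mu^{(h)}e^{(h),\top} + e^{(h)}\mu^{(h),\top}} + \tfrac{1}{H}\sum_h e^{(h)}e^{(h),\top},
\]
so $\norm{M - M_0}$ is governed by the cross term and the error-covariance term. Stacking the $\mu^{(h)}/\sqrt{H}$ and $e^{(h)}/\sqrt{H}$ into matrices $A,B$, the cross term is $AB^\top$, whence $\norm{\tfrac1H\sum_h \mu^{(h)}e^{(h),\top}} \le \norm{A}\norm{B} = \sqrt{\lambda_{\max}(M_0)}\,\sqrt{\lambda_{\max}(\tfrac1H\sum_h e^{(h)}e^{(h),\top})} \le \sqrt{\bfThetamax}\,\epsilon$, using $\norm{\hat\Phi^\top\Phi_\star}\le 1$ and a uniform bound $\max_h\norm{e^{(h)}}\le\epsilon$. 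This gives $\norm{M-M_0} \le 2\sqrt{\bfThetamax}\,\epsilon + \epsilon^2$. The two burn-in hypotheses are precisely what drive $\epsilon$ below the required $\sqrt{\bfThetamin/\bfThetamax}\cdot\sqrt{\bfThetamin}$ scale: the condition $d(\hat\Phi,\Phi_\star) \le \tfrac{1}{100}\sqrt{\bfThetamin/\bfThetamax}\,\max_h \tfrac{\lambda_{\min}(\hat\Phi^\top\SigmaXh\hat\Phi)}{\norm{\hat\Phi^\top\SigmaXh\hatPhiperp}}$ kills the misspecification contribution, and the lower bound on $T_1$ kills the noise contribution, so that ultimately $\norm{M - M_0} \le 0.046\,\bfThetamin$ up to the explicit constants.

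The conversion to the operator-norm bound rests on one linear-algebra fact and Weyl's inequality. The principal-angle identity $\sigma_{\min}(\hat\Phi^\top\Phi_\star)^2 = 1 - d(\hat\Phi,\Phi_\star)^2$ gives $M_0 \succeq (1-d^2)\bfThetamin\, I$ and $\lambda_{\max}(M_0)\le\bfThetamax$. Hence $\lambda_{\min}(M) \ge (1-d^2)\bfThetamin - \norm{M-M_0} \ge 0.954\,\bfThetamin$, while $\lambda_{\max}(M) \le \bfThetamax + \norm{M-M_0} \le 1.046\,\bfThetamax$, so that $\eta \le 0.956\,\bfThetamax^{-1}$ yields $\eta\lambda_{\max}(M) \le 0.956\cdot 1.046 \approx 1$. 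Since then $0 \preceq \eta M \preceq I$, we conclude $\norm{I - \eta M} = 1 - \eta\lambda_{\min}(M) \le 1 - 0.954\,\eta\bfThetamin$, which is the claim; note the constants $0.954$ and $0.956$ are tuned so these two inequalities close essentially at the boundary.

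I expect the main obstacle to be controlling the misspecification contribution to $\norm{M-M_0}$. Because the tasks have \emph{non-identical} covariances $\SigmaXh$, the per-task error operators $e^{(h)}_{\mathsf{mis}} = G^{(h)}\thetahstar$ carry task-dependent gains $G^{(h)}$, so the clean factorization available in the iid-covariance case is lost and one must pass through $\max_h$ bounds on $\norm{G^{(h)}}$ while preserving the $\bfThetamin$-scaling needed for a genuine contraction. This is exactly why the burn-in on $d(\hat\Phi,\Phi_\star)$ carries both the condition-number factor $\sqrt{\bfThetamin/\bfThetamax}$ and the task-wise ratio $\lambda_{\min}(\hat\Phi^\top\SigmaXh\hat\Phi)/\norm{\hat\Phi^\top\SigmaXh\hatPhiperp}$; making the explicit constants close requires these burn-ins to be tight rather than merely order-correct.
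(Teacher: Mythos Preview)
Your proposal is correct and follows the same route as the paper: the paper establishes the decomposition $\hatthetah = \hat\Phi^\top\Phi_\star\thetahstar + (\text{misspecification}) + (\text{noise})$ via terms (a) and (b), bounds the two error pieces using \Cref{lem: rep-conditioned cov conc bound}, and then ``instantiates \citet[Lemma A.12]{zhang2023meta}'' for the contraction factor---your argument (perturb $M_0$ by $\norm{M-M_0}\le 2\sqrt{\bfThetamax}\,\epsilon+\epsilon^2$, apply the principal-angle identity $\sigma_{\min}(\hat\Phi^\top\Phi_\star)^2=1-d^2$, and conclude via Weyl) is precisely the content of that cited lemma, so you are reconstructing what the paper defers to the reference.
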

Therefore, we have established a bound on the contraction factor. However, we recall that the effective contraction factor is affected by the orthonormalization factor, which requires us to bound the noise-level of the \texttt{DFW}-gradient update.

%and by $\norm{R^{-1}}$ by $1/\paren{1 - \frac{4  c\eta}{H} \sum_{h=1}^H \theta^{(h)}_\star \theta^{(h) \top}_\star}$ for a universal constant $c>0$. Substituting the value of this $c$, and apprpriately defining the step size results in the contraction.  

\,\\\\
\textbf{Bounding the Noise Term and Orthonormalization Factor}%\label{appx: DFW noise proof}

We now consider bounding the noise term:
\begin{align*}
     \frac{1}{H}\sum_{h=1}^H (\Xh \Xh^\top)^{-1} \Xh \Wh^\top \hatthetah^\top.
\end{align*}
Making use of the properties of the Kronecker product \eqref{eq: kronecker prod properties}, we have for each $h \in [H]$,
\begin{align*}
    (\Xh \Xh^\top)^{-1} \Xh \Wh^\top &= \paren{\sum_{i=1}^{T_2}\xhi\xhi^\top \otimes I_{\dy}}^{-1} \paren{\sum_{i=1}^{T_2} \paren{\xhi \otimes I_{\dy}}\whi} \\
    &= \paren{\paren{\sum_{i=1}^{T_2}\xhi\xhi^\top}^{-1} \otimes I_{\dy}} \VEC\paren{\sum_{i=1}^{T_2} \whi \xhi^\top} \\
    &= \VEC\paren{\paren{\sum_{i=1}^{T_2} \whi \xhi^\top}\paren{\sum_{i=1}^{T_2}\xhi\xhi^\top}^{-1}}.
\end{align*}
In particular, $\paren{\sum_{i=1}^{T_2} \whi \xhi^\top}\paren{\sum_{i=1}^{T_2}\xhi\xhi^\top}^{-1}$ is a least-squares-like noise term that we may bound with standard tools. Secondly, we need to bound the norm of $\hatthetah$, which follows straightforwardly from our earlier derivations.
\begin{lemma}\label{lem: ls head norm bound}
    If the following burn-in conditions are satisfied
    \begin{align*}
        d(\hat\Phi, \Phi_\star) &\leq \frac{2}{5}C \frac{\lambda_{\min}\paren{\hat\Phi^\top \SigmaXh \hat\Phi}}{\norm{\hat\Phi^\top \SigmaXh \hatPhiperp}} \\
        T_1 &\gtrsim \max\curly{B^4 (\min\{\dtheta, \dx\} + \log(H/\delta)),\;\; \max_h \frac{\sigmahw^2}{C^2\snorm{\thetahstar}^2\lambda_{\min}(\SigmaXh)} \paren{\dtheta + \log(H/\delta)}},
    \end{align*}
    for fixed $C > 0$. Then with probability at least $1 - \delta$
    \begin{align}
        \snorm{\hatthetah} &\leq (1+C) \snorm{\thetahstar} \text{ for all }h \in [H].
    \end{align}
\end{lemma}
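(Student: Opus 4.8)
The plan is to bound $\snorm{\hatthetah}$ by the triangle inequality applied to the three-term decomposition of the least-squares weights derived above, namely
\begin{align*}
\hatthetah = \hat\Phi^\top \Phi_\star \thetahstar + \paren{\hat\Phi^\top \hatSigmaXh \hat\Phi}^{-1} \hat\Phi^\top \hatSigmaXh \hatPhiperp \hatPhiperp^\top \Phi_\star \thetahstar + \paren{\sum_{i=1}^{T_1}\hat\Phi^\top \Xhi^\top \Xhi \hat\Phi}^{-1} \paren{\sum_{i=1}^{T_1}\hat\Phi^\top \Xhi^\top \whi}.
\end{align*}
The leading (realizable) term is controlled immediately by orthonormality: since $\hat\Phi$ and $\Phi_\star$ have orthonormal columns, $\snorm{\hat\Phi^\top \Phi_\star} \leq 1$ and hence $\snorm{\hat\Phi^\top \Phi_\star \thetahstar} \leq \snorm{\thetahstar}$. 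It then remains to show that the bias (second) and noise (third) terms are each at most $\tfrac12 C\snorm{\thetahstar}$, so that summing yields the claimed $\snorm{\hatthetah}\le(1+C)\snorm{\thetahstar}$.

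For the bias term I would use the identity $\snorm{\hatPhiperp^\top \Phi_\star} = d(\hat\Phi,\Phi_\star)$ (\Cref{def: subspace dist}) together with submultiplicativity to bound it by $\tfrac{\snorm{\hat\Phi^\top \hatSigmaXh \hatPhiperp}}{\lambda_{\min}(\hat\Phi^\top \hatSigmaXh \hat\Phi)} \, d(\hat\Phi,\Phi_\star)\,\snorm{\thetahstar}$. On the covariance concentration event of \Cref{lem: rep-conditioned cov conc bound} (which holds once $T_1 \gtrsim B^4(\min\{\dtheta,\dx\}+\log(H/\delta))$) I can replace the empirical quantities by their population counterparts at the cost of the constants $0.9$ and $1.1$, producing a factor $\tfrac{1.1}{0.9}\tfrac{\snorm{\hat\Phi^\top \SigmaXh \hatPhiperp}}{\lambda_{\min}(\hat\Phi^\top \SigmaXh \hat\Phi)} \, d(\hat\Phi,\Phi_\star)\,\snorm{\thetahstar}$. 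Substituting the burn-in $d(\hat\Phi,\Phi_\star)\leq \tfrac{2}{5}C\tfrac{\lambda_{\min}(\hat\Phi^\top\SigmaXh\hat\Phi)}{\snorm{\hat\Phi^\top\SigmaXh\hatPhiperp}}$ and noting $\tfrac{1.1}{0.9}\cdot\tfrac{2}{5}<\tfrac12$ then yields the desired $\tfrac12 C\snorm{\thetahstar}$ bound.

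For the noise term I would invoke the self-normalized-martingale bound of \Cref{lem: rep-conditioned cov conc bound}, which gives (on the same event, with probability $1-\delta$) the estimate $\lesssim \sigmahw^2\tfrac{\dtheta + \log(1/\delta)}{T_1 \lambda_{\min}(\hat\Phi^\top \SigmaXh \hat\Phi)}$ for its squared norm. Since $\hat\Phi$ is column-orthonormal, $\lambda_{\min}(\hat\Phi^\top \SigmaXh \hat\Phi)\ge \lambda_{\min}(\SigmaXh)$, so the supplied $T_1$ burn-in forces this quantity below $\tfrac14 C^2\snorm{\thetahstar}^2$, i.e.\ the noise term is at most $\tfrac12 C\snorm{\thetahstar}$. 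To obtain the statement uniformly over $h\in[H]$ I would apply this per-task bound with failure budget $\delta/H$ and union bound, which is precisely what produces the $\log(H/\delta)$ appearing in the burn-in. The main obstacle I anticipate is the cross-block step: transferring the two-sided Loewner concentration $0.9\SigmaXh \preceq \hatSigmaXh \preceq 1.1\SigmaXh$ to the non-symmetric cross term $\hat\Phi^\top\hatSigmaXh\hatPhiperp$, since a crude perturbation bound introduces spurious $\lambda_{\max}(\SigmaXh)$ factors; I expect to need either a slightly refined concentration or to concentrate the cross block directly in order to preserve the clean $\tfrac{1.1}{0.9}$ constant and avoid inflating the burn-in.
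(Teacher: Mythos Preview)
Your proposal is correct and matches the paper's proof essentially line for line: the same three-term decomposition, the same invocation of \Cref{lem: rep-conditioned cov conc bound} to replace empirical by population quantities with the $1.1/0.9$ constants, and the same union bound over $h\in[H]$. Regarding the cross-block concern you raise, the paper does not address it either---it simply writes $\norm{\hat\Phi^\top\hatSigmaXh\hatPhiperp}\le 1.1\,\norm{\hat\Phi^\top\SigmaXh\hatPhiperp}$ as though it followed directly from the two-sided Loewner bound---so you have in fact spotted a step that the paper itself glosses over.
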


\textit{Proof of \Cref{lem: ls head norm bound}:} we recall that $\hatthetah$ can be written as
\begin{align*}
    \hatthetah &=\hat \Phi^\top \Phi_\star \theta_\star + \paren{ \hat\Phi^\top \hatSigmaXh \hat\Phi }^{-1} \paren{\hat\Phi^\top \hatSigmaXh \hatPhiperp\hatPhiperp^\top\Phi_\star \theta_\star } + \paren{\sum_{i=1}^{T_1} \hat\Phi^\top \hatSigmaXh \hat\Phi }^{-1} \paren{\sum_{i=1}^{T_1} \hat\Phi^\top \Xhi^\top \whi }.
\end{align*}
Applying the triangle equality, and applying \Cref{lem: rep-conditioned cov conc bound}, if $T_1 \gtrsim B^4 (\min\{\dtheta, \dx\} + \log(1/\delta))$, with probability at least $1 - \delta$, we have
\begin{align*}
    \snorm{\hatthetah} &\leq \snorm{\thetahstar} + \frac{\norm{\hat\Phi^\top \hatSigmaXh \hatPhiperp}}{\lambda_{\min}\paren{\hat\Phi^\top \hatSigmaXh \hat\Phi}} d(\hat\Phi, \Phi_\star) \snorm{\thetahstar} + \norm{\paren{\sum_{i=1}^{T_1} \hat\Phi^\top \hatSigmaXh \hat\Phi }^{-1} \paren{\sum_{i=1}^{T_1} \hat\Phi^\top \Xhi^\top \whi }} \\
    &\leq \snorm{\thetahstar} + \frac{1.1 \norm{\hat\Phi^\top \SigmaXh \hatPhiperp}}{0.9 \lambda_{\min}\paren{\hat\Phi^\top \SigmaXh \hat\Phi}} d(\hat\Phi, \Phi_\star) \snorm{\thetahstar} + \sqrt{\sigmahw^2 \frac{\dtheta + \log(1/\delta)}{T_1 \lambda_{\min}(\hat\Phi^\top \SigmaXh \hat\Phi)}}.
\end{align*}
Inverting the second and third terms for the burn-in conditions for $d(\hat\Phi, \Phi_\star)$ and $T_1$ yields the desired bound $\snorm{\hatthetah} \leq (1+C) \snorm{\thetahstar}$. Union bounding over $h \in [H]$ yields the final result.

\hfill$\blacksquare$

Combining this with an application of a matrix Hoeffding's inequality (see e.g.\ \citet[Lemma A.5]{zhang2023meta}), we get the following bound on the noise term:
\begin{proposition}[\texttt{DFW} noise term bound]\label{prop: dfw noise term bound}
    Assume that $\xhi$ are $B^2$-subgaussian for all $i \in [T_1+T_2]$, $h \in [H]$, and $\whi$ are $\sigmahw^2$-subgaussian for all $i \in [T_1 + T_2]$. If the following burn-in conditions are satisfied
    \begin{align*}
        d(\hat\Phi, \Phi_\star) &\leq \frac{2}{5}C \frac{\lambda_{\min}\paren{\hat\Phi^\top \SigmaXh \hat\Phi}}{\norm{\hat\Phi^\top \SigmaXh \hatPhiperp}} \\
        T_1 &\gtrsim \max\curly{B^4 (\min\{\dtheta, \dx\} + \log(H/\delta)), \frac{\sigmahw^2}{C^2\snorm{\thetahstar}^2\lambda_{\min}\paren{\hat\Phi^\top \SigmaXh \hat\Phi}} \paren{\dtheta + \log(H/\delta)}} \\
        T_2 &\gtrsim B^4 (\dx + \log(H/\delta)),
    \end{align*}
    then with probability at least $1 - \delta$, the following bound holds:
    \begin{align*}
        \norm{\frac{1}{H}\sum_{h=1}^H (\Xh \Xh^\top)^{-1} \Xh \Wh^\top \hatthetah^\top} &\leq (1+C) \sigmaavg \sqrt{\frac{\dy\dx + \log(H/\delta)}{H T_2} \log\paren{\frac{d_x}{\delta}}},
    \end{align*}
    where $\sigmaavg \triangleq \sqrt{\fracsumH \frac{\sigmahw^2 \snorm{\thetahstar}^2}{\lambda_{\min}\paren{\hat\Phi^\top \SigmaXh \hat\Phi}}}$ is the \emph{task-averaged} noise-level.
\end{proposition}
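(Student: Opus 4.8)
The plan is to treat $\frac{1}{H}\sum_{h=1}^H (\Xh \Xh^\top)^{-1}\Xh\Wh^\top\hatthetah^\top$ as an \emph{average of $H$ independent, mean-zero, self-normalized least-squares noise terms} and to exploit the cross-task independence to extract a $1/\sqrt{H}$ gain via a matrix concentration inequality. Using the $\VEC$ identity already established, each summand is
\begin{align*}
    M^{(h)} \triangleq \VEC\paren{\paren{\textstyle\sum_{i\in\calT_2} \whi \xhi^\top}\paren{\textstyle\sum_{i\in\calT_2}\xhi\xhi^\top}^{-1}}\hatthetah^\top \in \R^{\dy\dx \times \dtheta}.
\end{align*}
Because $\hatthetah$ is computed only from the block $\calT_1$, while the noise $\whi$ entering $M^{(h)}$ is drawn on the disjoint block $\calT_2$, conditioning on all covariates and on $\hatthetah$ leaves $M^{(h)}$ mean-zero with $\whi$ entering linearly. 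I would therefore condition on two ``deterministic-making'' events---covariate concentration on $\calT_2$ for each task, and the head-norm bound $\snorm{\hatthetah}\le(1+C)\snorm{\thetahstar}$---and then apply a matrix Hoeffding inequality across the independent tasks.

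Concretely, I would carry out three steps. First, invoke \Cref{lem: ls head norm bound} under the stated burn-ins on $d(\hat\Phi,\Phi_\star)$ and $T_1$ to secure $\snorm{\hatthetah}\le (1+C)\snorm{\thetahstar}$ simultaneously over all $h$ with probability $1-\delta$. Second, apply the covariance-concentration half of \Cref{lem: rep-conditioned cov conc bound} (this is where $T_2\gtrsim B^4(\dx+\log(H/\delta))$ enters) to replace the random preconditioner $(\sum_{i\in\calT_2}\xhi\xhi^\top)^{-1}$ by a deterministic surrogate proportional to $(T_2\SigmaXh)^{-1}$ up to constants; on this event the self-normalized martingale bound of \Cref{lem: rep-conditioned SNM bound} controls the per-task least-squares noise, and combining it with the head-norm bound yields a per-task sub-Gaussian variance proxy of the form $\sigmahw^2\snorm{\thetahstar}^2/(T_2\,\lambda_{\min})$ whose task-average is exactly $\sigmaavg^2$. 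Third, with the preconditioner frozen and the weights bounded, the $\{M^{(h)}\}_{h=1}^H$ are independent mean-zero sub-Gaussian matrices, so a matrix Hoeffding inequality (e.g.\ \citet[Lemma A.5]{zhang2023meta}) applied to their average produces a deviation of order $\sigmaavg\sqrt{(\dy\dx+\log(H/\delta))/(H T_2)}$, with the residual $\log(\dx/\delta)$ factor arising from the self-normalization/peeling over the random normalizer. A final union bound over the head-norm event, the per-task covariance and self-normalized events, and the matrix Hoeffding event delivers the claimed high-probability statement.

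I expect the main obstacle to be the coupling between the random preconditioner $(\sum_{i\in\calT_2}\xhi\xhi^\top)^{-1}$ and the martingale $\sum_{i\in\calT_2}\whi\xhi^\top$ \emph{within} each task: this prevents a naive application of matrix Hoeffding, since $M^{(h)}$ is not a sum of bounded independent increments but a ratio. The resolution is precisely the \texttt{DFW} device of first establishing covariate concentration so that the preconditioner can be treated as the fixed population matrix $(T_2\SigmaXh)^{-1}$ up to constant factors, reducing $M^{(h)}$ to a genuinely sub-Gaussian, linear-in-$\whi$ object, after which the \emph{cross-task} independence of $\{\whi\}_h$ is what supplies the $1/\sqrt{H}$ improvement in the variance. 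A secondary technical point is ensuring all three events hold simultaneously across the $H$ tasks with only a $\log(H/\delta)$ inflation, which is handled by union bounding and absorbing the $\log H$ into the stated burn-in requirements.
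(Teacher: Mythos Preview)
Your proposal is correct and follows essentially the same route as the paper: invoke \Cref{lem: ls head norm bound} to bound $\snorm{\hatthetah}$, combine covariance concentration on $\calT_2$ with a self-normalized martingale bound to obtain a per-task high-probability norm bound on $(\Xh\Xh^\top)^{-1}\Xh\Wh^\top\hatthetah^\top$, and then apply the matrix Hoeffding inequality from \citet[Lemma A.5]{zhang2023meta} across tasks, conditioning on the boundedness event. Your explanation of why the $M^{(h)}$ remain conditionally mean-zero (via the $\calT_1/\calT_2$ split) is in fact more explicit than the paper's, which simply asserts zero-mean across $h$ after conditioning on boundedness.
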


\textit{Proof of \Cref{prop: dfw noise term bound}:} we follow the proof structure in \citet[Proposition A.2]{zhang2023meta}. By observing that $\paren{\sum_{i=1}^{T_2} \whi \xhi^\top}\paren{\sum_{i=1}^{T_2}\xhi\xhi^\top}^{-1/2}$ is a $\dy \times \dx$ matrix-valued self-normalized martingale (see \citet[Theorem 4.1]{ziemann2023tutorial}), when $T_2 \gtrsim B^4 (\dx + \log(1/\delta))$, we have with probability at least $1 - \delta$ for a fixed $h \in [H]$
\begin{align*}
    \norm{(\Xh \Xh^\top)^{-1/2} \Xh \Wh^\top}^2_2 &= \norm{\VEC\paren{\paren{\sum_{i=1}^{T_2} \whi \xhi^\top}\paren{\sum_{i=1}^{T_2}\xhi\xhi^\top}^{-1/2}}}^2_2 \\
    &= \norm{\paren{\sum_{i=1}^{T_2} \whi \xhi^\top}\paren{\sum_{i=1}^{T_2}\xhi\xhi^\top}^{-1/2}}_F^2 \\
    &\leq \sigmahw^2 \frac{\dy\dx + \log(1/\delta)}{T_2 \lambda_{\min}(\SigmaXh)}.
\end{align*}
Therefore, combining with \Cref{lem: ls head norm bound} and union bounding over $h \in [H]$, we have with probability at least $1 - \delta$
\begin{align*}
    \norm{(\Xh \Xh^\top)^{-1} \Xh \Wh^\top \hatthetah^\top}_2 &\lesssim (1+C) \frac{\sigmahw \snorm{\thetahstar}}{\sqrt{\lambda_{\min}(\SigmaXh)}} \sqrt{\frac{\dy\dx + \log(H/\delta)}{T_2}}, \quad \forall h \in [H].
\end{align*}
Conditioning on this boundedness event and using the fact that $(\Xh \Xh^\top)^{-1/2} \Xh \Wh^\top$ are zero-mean across $h \in [H]$, we may instantiate a matrix Hoeffding inequality (see e.g.\ \citet[Lemma A.5]{zhang2023meta}):
\begin{align*}
    \norm{\frac{1}{H}\sum_{h=1}^H (\Xh \Xh^\top)^{-1} \Xh \Wh^\top \hatthetah^\top} &\leq (1+C) \sigmaavg \sqrt{\frac{\dy\dx + \log(H/\delta)}{H T_2} \log\paren{\frac{\dx}{\delta}}}.
\end{align*}
\hfill $\blacksquare$

We recall that \texttt{DFW} involves pulling out an orthonormalization factor $R$ \eqref{eq: DFW contraction decomp}. Having bounded the contraction factor in \Cref{prop: contraction factor bound} and noise term in \Cref{prop: dfw noise term bound}, we may now follow the recipe in \citet[Lemma A.13]{zhang2023meta} to yield the following bound on the orthonormalization factor.
\begin{proposition}[Orthonormalization factor bound]\label{prop: DFW ortho factor bound}
    Let the following burn-in conditions hold:
    \begin{align*}
        d(\hat\Phi, \Phi_\star) &\leq\frac{1}{100} \sqrt{\frac{\bfThetamin}{\bfThetamax}} \; \max_h \frac{\lambda_{\min}\paren{\hat\Phi^\top \SigmaXh \hat\Phi}}{\norm{\hat\Phi^\top \SigmaXh \hatPhiperp}} \\
        T_1 &\gtrsim \max\curly{B^4 (\min\{\dtheta, \dx\} + \log(H/\delta)), \;\;\barsigmatheta^2 (\dtheta + \log(H/\delta))}, \\
        T_2 &\gtrsim \max\curly{B^4 \paren{\dx + \log(H/\delta)},  \bfThetamin^{-1} \blue{\frac{\sigmaavg^2}{H}} (\dy\dx + \log(H/\delta)) \log\paren{\frac{\dx}{\delta}}  },
    \end{align*}
    where $\barsigmatheta^2 \triangleq \max\curly{\max_h \frac{\sigmahw^2}{\snorm{\thetahstar}^2\lambda_{\min}\paren{\hat\Phi^\top \SigmaXh \hat\Phi}}, \;\; \bfThetamin^{-1}\fracsumH \frac{\sigmahw^2}{\lambda_{\min}(\hat\Phi^\top \SigmaXh \hat\Phi) }}$. Then, given $\eta \leq 0.956 \bfThetamax^{-1}$, with probability at least $1 - \delta$, we have the following bound on the orthogonalization factor $R$:
    \begin{align*}
        \norm{R^{-1}} &\leq \paren{1 - 0.0575 \;\eta \bfThetamin}^{-1/2},
    \end{align*}
\end{proposition}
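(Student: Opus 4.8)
The plan is to convert the target into a lower bound on the smallest singular value of the (pre-orthonormalization) averaged update $\overline\Phi_+$. Since the thin-QR step writes $\overline\Phi_+ = \hat\Phi_+ R$ with $\hat\Phi_+$ column-orthonormal, we have $R^\top R = \overline\Phi_+^\top\overline\Phi_+$, so $R$ and $\overline\Phi_+$ share singular values and $\norm{R^{-1}} = \lambda_{\min}(\overline\Phi_+^\top\overline\Phi_+)^{-1/2}$. It therefore suffices to establish $\lambda_{\min}(\overline\Phi_+^\top\overline\Phi_+) \geq 1 - 0.0575\,\eta\bfThetamin$, after which the stated bound follows by taking the inverse square root.

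The crux — and the step I expect to be the main obstacle — is that in the decomposition preceding \eqref{eq: DFW contraction decomp}, namely $\overline\Phi_+ = \hat\Phi\, M + \Phi_\star\, E + G$ with $M \triangleq I - \tfrac{\eta}{H}\sum_h \hatthetah\hatthetah^\top$, $E \triangleq \tfrac{\eta}{H}\sum_h \thetahstar\hatthetah^\top$, and $G$ the (scaled) noise term of \Cref{prop: dfw noise term bound}, the summand $\Phi_\star E$ is \emph{not} a negligible perturbation of $\hat\Phi$: its norm is of order $\eta\bfThetamax$, the same order as $\norm{M - I}$, so a direct triangle inequality would lose a full $\eta\bfThetamax$ and overshoot the $1 - O(\eta\bfThetamin)$ target. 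The resolution is the exact cancellation of the leading $\eta$-order terms of $M$ and $E$:
\[
\hat\Phi\, M + \Phi_\star\, E = \hat\Phi - \frac{\eta}{H}\sum_{h=1}^H \paren{\hat\Phi\hatthetah - \Phi_\star\thetahstar}\hatthetah^\top,
\]
so that only the \emph{prediction error} $\hat\Phi\hatthetah - \Phi_\star\thetahstar$ survives. Reusing the least-squares weight identity derived for \Cref{prop: contraction factor bound} (its term (a) shows the prediction error is governed by $\hatPhiperp^\top\Phi_\star$, whose norm is exactly $d(\hat\Phi,\Phi_\star)$, amplified by the excitation ratio $\norm{\hat\Phi^\top\SigmaXh\hatPhiperp}/\lambda_{\min}(\hat\Phi^\top\SigmaXh\hat\Phi)$, plus a contribution controlled by \Cref{lem: rep-conditioned cov conc bound}), this quantity is genuinely small, so $\hat\Phi M + \Phi_\star E = \hat\Phi - P$ for a perturbation $P$ whose norm is driven by $\eta\, d(\hat\Phi,\Phi_\star)$ and the task weight magnitudes $\snorm{\hatthetah} \lesssim \snorm{\thetahstar}$ of \Cref{lem: ls head norm bound}.

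With the cancellation in hand, I would expand $\overline\Phi_+^\top\overline\Phi_+ = (\hat\Phi - P + G)^\top(\hat\Phi - P + G) = I - (\hat\Phi^\top P + P^\top\hat\Phi) + (\text{terms quadratic in } P,G)$, using $\hat\Phi^\top\hat\Phi = I$, and then bound $\lambda_{\min}(\overline\Phi_+^\top\overline\Phi_+) \geq 1 - 2\norm{P} - \norm{P}^2 - 2\norm{G}\,(1 + \norm{P})$. The remaining work is to check that the three burn-in hypotheses each force their associated term below the appropriate fraction of $\eta\bfThetamin$: the bound on $d(\hat\Phi,\Phi_\star)$, through its prefactor $\tfrac{1}{100}\sqrt{\bfThetamin/\bfThetamax}$ and the cancellation of the excitation ratio, controls $\norm{P}$; the $T_1$ burn-in renders the least-squares noise inside $P$ (and the weight-norm bound of \Cref{lem: ls head norm bound}) negligible; and the $T_2$ burn-in, via \Cref{prop: dfw noise term bound}, forces $\norm{G} \lesssim \eta\,\sigmaavg\sqrt{\dx\dy/(HT_2)}$ below its allotment. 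Calibrating the numerical constants so that the total slack is at most $0.0575\,\eta\bfThetamin$ — mirroring the recipe of \citet[Lemma A.13]{zhang2023meta} — yields the claim. I expect this final constant-bookkeeping to be the delicate part, since the constants here must dovetail with the $1 - 0.954\,\eta\bfThetamin$ contraction factor of \Cref{prop: contraction factor bound} so that the product $(1 - 0.954\,\eta\bfThetamin)(1 - 0.0575\,\eta\bfThetamin)^{-1/2}$ appearing through \eqref{eq: DFW contraction decomp} remains a genuine contraction.
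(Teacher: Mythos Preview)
Your proposal is correct and aligns with the paper's approach: the paper does not give a detailed proof but simply states that, having established the contraction factor (\Cref{prop: contraction factor bound}) and noise bound (\Cref{prop: dfw noise term bound}), the result follows ``the recipe in \citet[Lemma A.13]{zhang2023meta}'', which is precisely the reference you invoke. Your elaboration of that recipe---reducing to $\sigma_{\min}(\overline\Phi_+)$ via the QR relation, exploiting the exact cancellation $\hat\Phi M + \Phi_\star E = \hat\Phi - \tfrac{\eta}{H}\sum_h(\hat\Phi\hatthetah - \Phi_\star\thetahstar)\hatthetah^\top$, and then charging each burn-in hypothesis to its corresponding perturbation term---captures the substance of the argument the paper defers to.
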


We now combine \Cref{prop: contraction factor bound}, \Cref{prop: dfw noise term bound}, and \Cref{prop: DFW ortho factor bound} to yield the representation error improvement from running one iteration of \texttt{DFW}.
\begin{theorem}
    Assume that $\xhi$ are $B^2$-subgaussian for all $i \in [T_1]$, $h \in [H]$, and $\whi$ are $\sigmahw^2$-subgaussian for all $i \in [T_1]$. Let the following burn-in conditions hold:
    \begin{align*}
        d(\hat\Phi, \Phi_\star) &\leq\frac{1}{100} \sqrt{\frac{\bfThetamin}{\bfThetamax}} \; \max_h \frac{\lambda_{\min}\paren{\hat\Phi^\top \SigmaXh \hat\Phi}}{\norm{\hat\Phi^\top \SigmaXh \hatPhiperp}} \\
        T_1 &\gtrsim \max\curly{B^4 (\min\{\dtheta, \dx\} + \log(H/\delta)), \;\;\barsigmatheta^2 (\dtheta + \log(H/\delta))}, \\
        T_2 &\gtrsim \max\curly{B^4 \paren{\dx + \log(H/\delta)},  \bfThetamin^{-1} \blue{\frac{\sigmaavg^2}{H}} (\dy\dx + \log(H/\delta)) \log\paren{\frac{\dx}{\delta}}  }.
    \end{align*}
    Then, given step-size satisfying $\eta \leq 0.956 \bfThetamax^{-1}$, with probability at least $1 - \delta$, running one iteration of \texttt{DFW} (\Cref{alg: dfw}) yields an updated representation $\hat\Phi_+$ satisfying:
    \begin{align*}
        d(\hat\Phi_+, \Phi_\star) &\leq \paren{1 - 0.897 \eta \bfThetamin} d(\hat\Phi, \Phi_\star) + C \cdot \sigmaavg \sqrt{\frac{\dy\dx + \log(H/\delta)}{H T_2} \log\paren{\frac{\dx}{\delta}}},
    \end{align*}
    for a universal numerical constant $C > 0$.
\end{theorem}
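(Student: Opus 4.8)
The plan is to assemble the result purely by substituting the three high-probability bounds we have already established into the deterministic decomposition \eqref{eq: DFW contraction decomp}, so no new probabilistic machinery is needed. Recall that after pulling out the orthonormalization factor $R$ and left-multiplying by $\hatPhiperp^\top$, that display bounds $d(\hat\Phi_+, \Phi_\star)$ by the product of the contraction factor $\norm{I - \tfrac{\eta}{H}\sum_{h=1}^H \hatthetah \hatthetah^\top}$ and $\norm{R^{-1}}$, applied to $d(\hat\Phi, \Phi_\star)$, plus the product of the noise term and $\norm{R^{-1}}$. First I would take the union of the burn-in conditions of \Cref{prop: contraction factor bound}, \Cref{prop: dfw noise term bound}, and \Cref{prop: DFW ortho factor bound} (which is exactly the burn-in stated in the theorem) and union-bound their three failure events, replacing $\delta$ by $\delta/3$; the extra additive $\log 3$ is absorbed into the universal constants and the burn-in thresholds, so the final statement still holds with probability at least $1-\delta$.

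On this intersection event I would substitute $\norm{I - \tfrac{\eta}{H}\sum_h \hatthetah \hatthetah^\top} \leq 1 - 0.954\,\eta\bfThetamin$ from \Cref{prop: contraction factor bound}, $\norm{R^{-1}} \leq (1 - 0.0575\,\eta\bfThetamin)^{-1/2}$ from \Cref{prop: DFW ortho factor bound}, and the noise bound $(1+C)\,\sigmaavg\sqrt{(\dy\dx + \log(H/\delta))/(H T_2)\,\log(\dx/\delta)}$ from \Cref{prop: dfw noise term bound}. The heart of the argument is then the scalar inequality controlling the leading coefficient: writing $a \triangleq \eta\bfThetamin$, I would verify that $(1 - 0.954a)(1 - 0.0575a)^{-1/2} \leq 1 - 0.897a$ holds over the entire admissible range $a \in (0, 0.956]$ (the upper limit coming from $\eta \leq 0.956\,\bfThetamax^{-1}$ together with $\bfThetamin \leq \bfThetamax$). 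Since both $1-0.954a$ and $1-0.897a$ remain positive on this interval, the cleanest route is to square and reduce to showing $0.0565 - 0.002352 a - 0.04627 a^2 \geq 0$, which is a decreasing quadratic that stays positive up to $a = 0.956$; this pins down the $0.897$ constant in the statement.

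For the second summand, I would bound the orthonormalization factor crudely by its value at the boundary, $\norm{R^{-1}} \leq (1 - 0.0575\cdot 0.956)^{-1/2} \leq 1.03$, a universal constant, and fold it together with the $(1+C)$ prefactor into a single redefined universal constant in front of $\sigmaavg\sqrt{(\dy\dx + \log(H/\delta))/(H T_2)\,\log(\dx/\delta)}$, yielding exactly the variance term claimed. I expect the only genuinely delicate step to be the constant bookkeeping in the previous paragraph: one must confirm the squared inequality over the full range of $\eta\bfThetamin$ rather than merely in the small-$a$ (first-order) regime, since the orthonormalization correction and the contraction rate interact at second order in $a$ and it is not a priori obvious that $0.954$ and $0.0575$ combine to leave headroom below $1 - 0.897a$; everything else is routine substitution and a union bound.
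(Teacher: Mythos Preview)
Your proposal is correct and follows exactly the approach the paper indicates: the paper simply states that the theorem follows by combining \Cref{prop: contraction factor bound}, \Cref{prop: dfw noise term bound}, and \Cref{prop: DFW ortho factor bound}, and your plan carries out precisely this combination via the decomposition \eqref{eq: DFW contraction decomp} together with a union bound. Your explicit verification of the scalar inequality $(1 - 0.954a)(1 - 0.0575a)^{-1/2} \leq 1 - 0.897a$ over $a \in (0,0.956]$ is the one piece of arithmetic the paper leaves implicit, and your squaring-and-quadratic argument handles it cleanly.
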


\subsection*{Instantiating \texttt{DFW} to Linear System Identification}

Having established representation error guarantees for \texttt{DFW} on (independent) sub-Gaussian data, we now instantiate to our linear system identification setting. We alias the following variables:
\begin{align*}
    \yhi &\leftarrow x^{(h)}_{t+1}, \;\;(\dy \leftarrow \dx) \\
    \xhi &\leftarrow \bmat{x^{(h)}_t \\ u^{(h)}_t},\;\;(\dx \leftarrow \dx + \du) \\
    B^2 &\leftarrow x_b^2 \log(T) \\
    \sigmahw &\leftarrow \sigma^2 \;\;\forall h \in [H] \\
    \SigmaXh &\leftarrow \E \brac{ \frac{1}{t} \sum_{s=1}^t \bmat{x^{(h)}_s \\ u^{(h)}_s} \bmat{x^{(h)}_s \\ u^{(h)}_s}^\top \Big\vert\; x^{(h)}_1} =\Sigma^t_h(K^{(h)},\sigma_u,x^{(h)}_1), \;\;(\text{see \Cref{eq: sysid emp cov matrices}})
\end{align*}
and recall the following instantiated \texttt{DFW}-related definitions:
\begin{align*}
    \bfThetamin &\triangleq \lambda_{\min}\paren{\frac{1}{H} \sum_{h=1}^H \thetahstar \thetahstar^\top} \\
    \bfThetamax &\triangleq \lambda_{\max}\paren{\frac{1}{H} \sum_{h=1}^H \thetahstar \thetahstar^\top} \\
     \sigmaavg &\triangleq \sqrt{\fracsumH \frac{\sigma^2 \snorm{\thetahstar}^2}{\lambda_{\min}\paren{\hat\Phi^\top \paren{\Sigma^t_h(K^{(h)},\sigma_u,x^{(h)}_1)\otimes I_{\dx}} \hat\Phi}}} \\
     \barsigmatheta^2 &\triangleq \max\Bigg\{\max_h \frac{\sigma^2}{\snorm{\thetahstar}^2\lambda_{\min}\paren{\hat\Phi^\top\paren{\Sigma^t_h(K^{(h)},\sigma_u,x^{(h)}_1)\otimes I_{\dx}} \hat\Phi}}, \\
     &\qquad\qquad\bfThetamin^{-1}\fracsumH \frac{\sigma^2}{\lambda_{\min}(\hat\Phi^\top \paren{\Sigma^t_h(K^{(h)},\sigma_u,x^{(h)}_1)\otimes I_{\dx}} \hat\Phi) } \Bigg\}
\end{align*}
In order to go from independent covariates to handle dependence, we use a mixing-time argument (see e.g.\ \citet[Appendix B]{zhang2023meta} for further details).

\begin{definition}\label{def: mixing time}
    For given stabilizing controllers $K^{(h)}$, $h \in [H]$, i.e.\ $\rho(A_\star^{(h)} + B_\star^{(h)}K^{(h)}) < 1$, define $\Gamma^\vee_{K} > 0$ and $\mu^{\vee}_K \in (0,1)$ as constants such that for all $h \in [H]$, $\snorm{(A_\star^{(h)} + B_\star^{(h)}K^{(h)})^t} \leq \Gamma^\vee_{K} \paren{\mu^{\vee}_K}^t$ for any $t \geq 0$.\footnote{Such constants are guaranteed to exist by, e.g.\ Gelfand's Formula \citep{horn2012matrix}.}
\end{definition}
To express the mixing-time parameters $\Gamma^\vee_{K}, \mu^{\vee}_K$ explicitly in terms of control-theoretic quantities, we have the following lemma.
\begin{lemma}
    Suppose $A \in \R^{\dx\times\dx}$ satisfies $\rho(A) < 1$ and let $P = \dlyap(A, Q)$ for $Q \succ 0$. It holds that $\norm{A^t} \leq \norm{P} \paren{1 - \frac{1}{\norm{P}}}^t$.
\end{lemma}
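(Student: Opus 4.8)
The plan is to avoid any spectral factorization of $A$ and instead argue entirely in the Loewner order driven by the defining relation $P = A^\top P A + Q$. The first thing to pin down is well-posedness of the target: the paper's cost matrices satisfy $Q \succeq I$, and the Lyapunov equation then forces $P \succeq Q \succeq I$, so $\norm{P} \geq 1$ and the base $1 - 1/\norm{P}$ lies in $[0,1)$. This is exactly what makes the advertised geometric decay meaningful, and for a general $Q \succ 0$ (without $Q \succeq I$) the base could become negative, so I would carry $Q \succeq I$ as the operative hypothesis.

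The core is a one-step contraction in the $P$-weighted quadratic form. Rearranging the Lyapunov equation gives $A^\top P A = P - Q$, and since $P \preceq \norm{P} I$ we have $\tfrac{1}{\norm{P}} P \preceq I \preceq Q$. Substituting yields
\[ A^\top P A = P - Q \preceq P - \tfrac{1}{\norm{P}} P = \paren{1 - \tfrac{1}{\norm{P}}} P. \]
(For general $Q \succ 0$ the identical computation produces the factor $1 - \lambda_{\min}(Q)/\norm{P}$, so the stated constant is the $Q \succeq I$ specialization.) Writing $\lambda \triangleq 1 - 1/\norm{P}$, I would then iterate by repeatedly substituting this bound for the innermost factor $A^\top P A$ and using that $M \preceq N$ implies $B^\top M B \preceq B^\top N B$; induction gives
\[ (A^t)^\top P A^t \preceq \lambda\, (A^{t-1})^\top P A^{t-1} \preceq \cdots \preceq \lambda^t P = \paren{1 - \tfrac{1}{\norm{P}}}^t P. \]
This is the statement with the \emph{full} exponent $t$ at the level of the $P$-weighted form, and it is precisely the quantity the downstream mixing-time estimates consume.

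The final step passes from this matrix inequality to the spectral norm $\norm{A^t}$, and this is where I expect the genuine obstacle to lie. Using $I \preceq P$ together with the previous display, $(A^t)^\top A^t \preceq (A^t)^\top P A^t \preceq \lambda^t P \preceq \lambda^t \norm{P} I$, so that $\norm{A^t}^2 = \lambda_{\max}\paren{(A^t)^\top A^t} \leq \norm{P}\,\lambda^t$. The difficulty is that the spectral norm only sees the \emph{square} $(A^t)^\top A^t$, so the argument controls $\norm{A^t}^2$, and extracting $\norm{A^t}$ necessarily costs a square root, giving $\norm{A^t} \leq \sqrt{\norm{P}}\,\lambda^{t/2}$. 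Reaching the stated form $\norm{A^t} \leq \norm{P}\,\lambda^t$ verbatim would require suppressing that square root, which is not achievable in general: the inequality $A^\top P A \preceq \lambda P$ only certifies $\rho(A) \leq \sqrt{\lambda}$, so the true decay rate of $\norm{A^t}$ is $\lambda^{t/2}$ rather than $\lambda^t$ (a scalar $A = a$ with $Q = 1$ gives $P = 1/(1-a^2)$ and $\lambda = a^2$, for which $\norm{A^t} = |a|^t$ violates $\norm{P}\lambda^t$ already at $t=1$). I would therefore prove and present the full-exponent conclusion at the matrix level, $(A^t)^\top P A^t \preceq \lambda^t P$ — which is the form actually needed in the sequel — and flag that any scalar-norm restatement must carry the exponent $t/2$ and prefactor $\sqrt{\norm{P}}$.
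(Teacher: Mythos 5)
Your proposal is correct, and it exposes a genuine error in the lemma as printed. Note first that the paper states this lemma \emph{without proof}, so the only comparison available is to the standard argument the authors evidently intended, which is exactly the one you reconstruct: the one-step contraction $A^\top P A = P - Q \preceq \paren{1 - \tfrac{1}{\norm{P}}} P$ (valid once $Q \succeq I$, which holds everywhere the paper invokes the lemma, since there $P_K = \dlyap(A_\star + B_\star K,\, Q + K^\top R K)$ with $Q \succeq I$), the conjugation-preserving iteration $(A^t)^\top P A^t \preceq \paren{1 - \tfrac{1}{\norm{P}}}^t P$, and the passage to the spectral norm via $I \preceq P \preceq \norm{P} I$. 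This chain yields precisely $\norm{A^t}^2 \leq \norm{P}\paren{1 - \tfrac{1}{\norm{P}}}^t$, i.e.\ $\norm{A^t} \leq \sqrt{\norm{P}}\paren{1 - \tfrac{1}{\norm{P}}}^{t/2}$. The printed bound $\norm{A^t} \leq \norm{P}\paren{1-\tfrac{1}{\norm{P}}}^{t}$ results from silently dropping the square root on the decay factor, and your scalar counterexample is valid: with $A = a$, $Q = 1$ one has $P = 1/(1-a^2)$ and $1 - 1/\norm{P} = a^2$, so the claimed bound reads $|a|^t \leq a^{2t}/(1-a^2)$, which fails already at $t=1$ for small $a$. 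Your sharper observation is also right: since $A^\top P A \preceq \lambda P$ only certifies $\rho(A) \leq \sqrt{\lambda}$, and the scalar case attains $\rho(A) = \sqrt{\lambda}$ exactly, no constant prefactor can rescue the exponent $t$; the exponent $t/2$ is intrinsic. Likewise, your caveat about general $Q \succ 0$ is correct — without $Q \succeq I$ the contraction step gives base $1 - \lambda_{\min}(Q)/\norm{P}$ and prefactor $\sqrt{\kappa(P)}$, and the printed base $1 - 1/\norm{P}$ can even be negative (take $A = 0$, $Q = \tfrac{1}{2} I$, so $P = \tfrac{1}{2} I$).

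The error is benign for the paper's downstream results: the lemma is consumed only through the mixing time $\tau_{\mathsf{mix}}(\delta)$, and replacing the rate $\paren{1-\tfrac{1}{\norm{P}}}^{t}$ by the correct $\paren{1-\tfrac{1}{\norm{P}}}^{t/2}$ merely doubles the effective time constant (from $\norm{P}$ to $2\norm{P}$), which is absorbed into the universal constants and polylog factors of the burn-in conditions. But the lemma statement itself should be corrected, e.g.\ to: if $Q \succeq I$ and $P = \dlyap(A,Q)$ with $\rho(A) < 1$, then $\norm{A^t} \leq \sqrt{\norm{P}}\paren{1 - \tfrac{1}{\norm{P}}}^{t/2}$ — or, equivalently and perhaps closest to the authors' intent, $\norm{A^t}^2 \leq \norm{P}\paren{1 - \tfrac{1}{\norm{P}}}^{t}$.
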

As a result, let us define the mixing-time $\tau_{\mathsf{mix}}(\delta) \triangleq P_K^\vee \log \paren{T \norm{P_K} \sqrt{x_b^2 \log T + \dx (P_K^\vee)^2}/\delta}$.
\begin{assumption}[\texttt{DFW} burn-in]\label{assumption: dfw burn-in}
    Consider running \Cref{alg: dfw} on data generated from arbitrary initial states $x_1^{(1)}, \dots, x_t^{(H)}$, norm-bounded by $x_b \sqrt{\log(T)}$ (see Line~\ref{line: uniform bound}), by closed loop systems under stabilizing controllers $K^{(1)}, \dots, K^{(H)}$ with exploratory noise $\sigma_u g_t$, $g_t \iidsim \calN(0,I_{\du})$, with $\sigma_u \leq 1$, and representation $\hat\Phi$. Let the number of gradient steps $N$ satisfy $N \geq \log(2)/{\log\paren{\frac{1}{1 - 0.897\eta\bfThetamin}}}$, and subtrajectory lengths $t_1,t_2$ satisfy $N(t_1 + t_2) \leq t$.
    For a given failure probability $\delta \in (0,1)$, let the following hold on the epoch length and systems $h \in [H]$:
    \begin{align*}
        &\rank\paren{\sumH \thetahstar \thetahstar^\top} = \dtheta \\
        &d(\hat\Phi, \Phi_\star) \leq \frac{1}{100} \min_h \frac{\lambda_{\min}(\hat\Phi^\top \Sigma_h^t(K^{(h)},\sigma_u,x^{(h)}_1) \hat \Phi)}{\norm{\Sigma_h^t(K^{(h)},\sigma_u,x^{(h)}_1)}} \sqrt{\frac{\bfThetamin}{\bfThetamax}} \\
        % t_1 \gtrsim  \tau_{\mathsf{mix}}  \max_h\bigg\{\sigma^4 \snorm{P_{K^{(h)}}}^{3} \Psi_{B_\star^{(h)}}^2 (\dtheta + \log(H/\delta)), \frac{\sigma^2  \paren{2 + \snorm{K^{(h)}}^2 } }{\snorm{\thetahstar}^2 \sigma_u^2} (\dx + \dtheta + \log(H/\delta))  \bigg\} \\
        &t_1 \gtrsim \tau_{\mathsf{mix}}(\delta)\; \max\curly{x_b^4\log^2(T) (\min\{\dtheta, \dx+\du\} + \log(H/\delta)), \;\;\barsigmatheta^2 (\dtheta + \log(H/\delta))} \\
        &t_2 \gtrsim \tau_{\mathsf{mix}}(\delta)\; \max\curly{x_b^4\log^2(T) \paren{\dx+\du + \log(H/\delta)},  \bfThetamin^{-1} \frac{\sigmaavg^2}{H}(\dx(\dx+\du) + \log(H/\delta)) \log\paren{\frac{\dx+\du}{\delta}}  }.
    \end{align*}
    % where $\tau_{\mathsf{mix}} \triangleq \frac{1}{1 - \mu^{\vee}_K} \log\paren{\frac{t\Gamma^\vee_{K}}{\delta} \sqrt{x_b^2 \log(T) + \frac{\dx}{1 - (\mu^{\vee}_K)^2}} }$ and $\Sigma^t(K,\sigma_u,x_1)$ denotes the $t$-horizon population covariance matrix of $(x,u)$ initialized at $x_1$ and under feedback controller $K$ and exploratory signal level $\sigma_u$.
\end{assumption}
% Additionally, we may bound $\kappa(\Sigma^t(K^{(h)}, \sigma_u, x_1^{(h)}))^{-1}$ by $\min_{h\in[H]}\frac{\sigma_u^2}{80 \norm{P_{K^{(h)}}^{(h)}}^4 \Psi_{B_\star^{(h)}}^2 x_b^2 \log T}$.
The burn-in conditions for \texttt{DFW} may be stated in terms of quantities which are polynomial in system parameters, up to the levels of exploration which are balanced in the downstream regret analysis. In particular, we may instantiate the lower bound $\lambda_{\min}\paren{\Sigma^t_h(K^{(h)},\sigma_u,x^{(h)}_1)} \succeq \frac{\sigma_u^2 }{2(1+2\norm{K}^2 + \sigma_u^2)}  I$ (see \Cref{lem: covariance facts}).

We now state a bound on the improvement of the subspace distance after running \Cref{alg: dfw}.
\begin{theorem}[\texttt{DFW} guarantee]
    \label{thm: dfw bound formal}
    % \begin{align*}
    %     x_{t+1}^{(h)} &= A^{(h)} x_t^{(h)} + B^{(h)} u_t^{(h)} + w_t^{(h)}\\ 
    %     u_t^{(h)} &= K^{(h)} x_t^{(h)} + \sigma_u g_t^{(h)}, \quad g_t^{(h)} \iidsim \calN(0,I_{\du}).
    % \end{align*}
    % Given the current representation $\hat\Phi_0$, as long as $t/N \geq \tau_{\mathsf{cont}}$, $H \geq H_{\mathsf{cont}}$ and $d(\hat\Phi_0, \Phi_\star) \leq d_{\mathrm{init}}$, 
    % with probability at least $1 - \delta$ the updates of \Cref{alg: dfw} satisfy for appropriate choice of step-size $\eta > 0$:
    % Let the conditions in Assumption~\ref{assumption: dfw burn-in} for a desired $\delta \in (0,1)$. 
    Let Assumption~\ref{assumption: dfw burn-in} hold for given $\delta \in (0,1)$. Then, 
    %provided an appropriately chosen step-size $\eta > 0$, $t \geq \tau_{\mathsf{dfw}}$, and $d(\hat\Phi, \Phi_\star) \leq d_{\mathsf{dfw}}$,
    with probability at least $1 - \delta$ running \Cref{alg: dfw} yields the following guarantee on the updated representation $\hat\Phi \to \hat\Phi_N$:
    \begin{align*}
        % d(\hat \Phi_{N}, \hat \Phi) \leq \rho^N d(\hat \Phi, \Phi^\star) + \frac{\Kbaravg \sqrt{\underset{h=1,\dots,H}{\max} \norm{P_{K^{(h)}}^{(h)}}} \sqrt{N}}{\sqrt{t H \sigma_u^2}},
        % d(\hat \Phi_{N}, \Phi_\star) \leq \rho^N d(\hat \Phi, \Phi^\star) + \Ccontract\frac{\sqrt{N}}{\sigma_u \sqrt{t H}},
        d(\hat \Phi_{N}, \Phi_\star) &\leq \paren{1 - 0.897\eta\bfThetamin}^N d(\hat \Phi, \Phi^\star) + \Ccontract\frac{\sqrt{N}}{\sigma_u \sqrt{t H}}
    \end{align*}
    where
    \begin{align*}
        \Ccontract &= \frac{\Kbaravg}{1 - \sqrt{2}\rho^N}\\
        % \rho &= \frac{1}{4} \kappa\paren{\sumH \thetahstar \thetahstar^\top }^{-1}\\
        \Kbaravg &= \sqrt{\avgsumH \sigma^2 \snorm{\thetahstar}^2 (2 + \snorm{K^{(h)}}^2) \dx(\dx + \du)} \cdot  \mathrm{polylog}(\dx, \du, H, 1/\delta).
    \end{align*}
\end{theorem}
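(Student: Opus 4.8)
The plan is to prove the $N$-step guarantee by iterating the single-iteration contraction already assembled from \Cref{prop: contraction factor bound}, \Cref{prop: dfw noise term bound}, and \Cref{prop: DFW ortho factor bound}, after transporting it from the idealized i.i.d.\ sub-Gaussian setting to the linear-system rollout via the mixing-time reduction. Concretely, for each of the $N$ gradient steps I would invoke the single-iteration bound on the $n$-th pair of disjoint subtrajectories $\calT_1, \calT_2$ (of lengths $t_1, t_2$ with $N(t_1+t_2)\le t$), using \Cref{def: mixing time} and the mixing time $\tau_{\mathsf{mix}}(\delta)$ to pass from the temporally dependent covariates $(x_s^{(h)}, u_s^{(h)})$ to effectively independent blocks. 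This yields, at each step $n$, a recursion of the form $d(\hat\Phi_n,\Phi_\star) \le \rho\, d(\hat\Phi_{n-1},\Phi_\star) + \xi_n$ with contraction $\rho = 1-0.897\eta\bfThetamin$ and per-step noise $\xi_n$ controlled by \Cref{prop: dfw noise term bound}.

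The first technical point is that the single-iteration guarantee is only valid while the burn-in conditions of Assumption~\ref{assumption: dfw burn-in} hold, in particular the smallness condition on $d(\hat\Phi_{n-1},\Phi_\star)$. I would discharge this by induction on $n$: the base case is the hypothesis on $d(\hat\Phi,\Phi_\star)$, and the inductive step uses that the recursion is a genuine contraction plus a small noise floor, so the subspace distance never exceeds its initial value up to the variance term, keeping $d(\hat\Phi_{n-1},\Phi_\star)$ below threshold at every step. Along the way I would union bound the $O(N)$ high-probability events (one per step, each over the $H$ tasks) into a single failure probability $\delta$, which only costs the $\log(H/\delta)$ and $\mathrm{polylog}$ factors already absorbed into $\Kbaravg$.

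With the recursion in force for all $n\le N$, I would unroll it to obtain $d(\hat\Phi_N,\Phi_\star)\le\rho^N d(\hat\Phi,\Phi_\star)+\sum_{n=1}^N\rho^{N-n}\xi_n$. Because the subtrajectories are disjoint, the noise increments are conditionally independent mean-zero terms, so rather than a naive triangle-inequality sum (which would cost a factor $\tfrac{1}{1-\rho}=\Theta(N)$) I would aggregate them as a sum of self-normalized martingales, exactly as in the single-step noise bound, keeping the accumulated variance at the level of a single step. The coefficient $\tfrac{\sqrt N}{1-\sqrt2\rho^N}$ then emerges from bookkeeping: the $\sqrt N$ from splitting the $t$ samples into $N$ subtrajectories of length $\approx t/N$ (so $t_2\asymp t/N$ and $1/\sqrt{t_2}\asymp\sqrt N/\sqrt t$), and the denominator $1-\sqrt2\rho^N$ from solving the resulting geometric recursion for the steady-state noise floor, which is well-defined precisely because the choice $N\ge\log(2)/\log(1/\rho)$ forces $\rho^N\le\tfrac12<\tfrac1{\sqrt2}$.

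Finally, I would convert the abstract per-task noise level $\sigmaavg$ appearing in \Cref{prop: dfw noise term bound} into the control-theoretic quantity $\Kbaravg$ by lower bounding the excitation $\lambda_{\min}(\hat\Phi^\top(\Sigma_h^t\otimes I_{\dx})\hat\Phi)$ using the covariance floor $\Sigma_h^t \succeq \tfrac{\sigma_u^2}{2(1+2\norm{K}^2+\sigma_u^2)}I$ from \Cref{lem: covariance facts}; this produces the explicit $1/\sigma_u$ and the $(2+\norm{K^{(h)}}^2)$ weighting inside $\Kbaravg$, and accounts for the $\sigma_u\sqrt{tH}$ denominator in the variance term. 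I expect the main obstacle to be the second and third steps taken together: certifying that the burn-in, and hence the contraction, survives all $N$ iterations under a single union bound, while simultaneously aggregating the noise tightly enough to avoid the spurious $\tfrac{1}{1-\rho}\asymp N$ blow-up, i.e.\ obtaining an $O(1)$ rather than $O(N)$ multiplier on the variance term. The mixing-time reduction from dependent to independent covariates at each step is routine but tedious and must be handled uniformly over the $N$ steps and $H$ tasks.
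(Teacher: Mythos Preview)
Your proposal is essentially the same route the paper takes: the paper assembles the one-iteration contraction from \Cref{prop: contraction factor bound}, \Cref{prop: dfw noise term bound}, and \Cref{prop: DFW ortho factor bound}, then instantiates to the linear-system setting via the mixing-time reduction (\Cref{def: mixing time}, $\tau_{\mathsf{mix}}(\delta)$), lower-bounds $\lambda_{\min}(\hat\Phi^\top(\Sigma_h^t\otimes I_{\dx})\hat\Phi)$ through \Cref{lem: covariance facts} to produce the explicit $(2+\snorm{K^{(h)}}^2)/\sigma_u^2$ dependence, and iterates with a union bound and the inductive check that $d(\hat\Phi_{n-1},\Phi_\star)$ stays inside the burn-in region. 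All of this matches your outline.

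The one step you should be careful with is the noise-aggregation claim. In the scalar recursion $d_n \le \rho\,d_{n-1} + \xi_n$ the $\xi_n$ are already \emph{norms} of the noise matrices, hence nonnegative; there is no mean-zero cancellation to exploit after the triangle inequality has been applied in \eqref{eq: DFW contraction decomp}. Working instead at the matrix level before taking norms runs into the per-step orthonormalization factors $R_n^{-1}$ and the $\hat\Phi_{n-1}$-dependent contraction matrices, so the noise matrices cannot be summed as a clean martingale either. The straightforward geometric unrolling therefore gives a factor $\frac{1-\rho^N}{1-\rho}$, not $\frac{1}{1-\sqrt{2}\rho^N}$; your ``steady-state noise floor'' reading of the recursion yields $1/(1-\rho)$ rather than the stated constant. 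This does not invalidate the downstream regret bounds, since $N \ge \log 2/\log(1/\rho)$ is a $T,H$-independent constant and the resulting $\mathrm{poly}(N)$ prefactor is absorbed into the $\texttt{poly}(\cdot, N)$ constants of \Cref{thm: regret bound naive exploration} and \Cref{thm: regret bound no exploration}; but you should not claim the precise $1/(1-\sqrt{2}\rho^N)$ coefficient follows from the mechanism you described.
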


% \begin{theorem}[Per-iteration subspace distance]\label{thm:descent}
%Assume the current representation iterate $\hat\Phi$ satisfies $d(\hat\Phi, \Phi_\star) < c \min_t \kappa(\Sigma_x^{(h)})^{-1}$ for some universal constant $c < 1$, and 
%\[
%t \geq \tilde \Omega\paren{ \max_h \paren{\frac{\log(\Gamma^{(h)} t/\delta)}{\log(1/\mu^{(h)})} \vee 1}(\max \curly{d_x, \sigma^4 d_{\theta}} + \log(H/\delta))}.
%\]
%Let 
%\begin{align*}
%    \rho &= \paren{1 -  \frac{3}{16}\eta \lambda^\bfF_{\min}}^{1/2}
%\end{align*}
%Given $\eta \leq \frac{1}{4 \lambda^\bfF_{\max}}$, with probability greater than $1 - \delta$, the next iterate outputted by \texttt{DFW} satisfies
%    \begin{align}
%    \label{eq: iid descent lemma}
%        d\paren{\hat\Phi_+, \Phi_\star}&\leq 
%        \rho d\paren{\hat\Phi, \Phi_\star} + \frac{ C_{\mathsf{contract} }}{\sqrt{H t \sigma_u^2}},
%    \end{align}
%    where 
%    \begin{align*}
%        C_{\mathsf{contract}} \lesssim \sigma \eta \norm{\theta_{\star}^{(h)}} \sqrt{\max_{h=1,\dots,H} \norm{P_{K_0^{(h)}}} }.
%    \end{align*}
% \end{theorem}

% \begin{corollary}[Subspace distance improvement of \Cref{alg: dfw}] \label{cor: subspace dist improvement}

% \textcolor{blue}{Open $\Kbaravg$ and write as  $\Ccontract\times \log(H/\delta)$.}

% \end{corollary}

\section{High probability Bounds on the Success events}
\label{s: success event probability bounds}

We begin by presenting several auxiliary lemmas from prior work.

\subsection{Auxillary Lemmas}
\begin{lemma}(Noise bound (Lemma 13 of \citep{lee2023nonasymptotic}))
    \label{lem: noise bound}
    Let $\delta \in (0,1)$. For any task $h \in [H]$, it holds that 
    \begin{align*}
        \max_{1\leq t \leq T} \norm{\bmat{w^{(h)}_t \\ g^{(h)}_t}} \leq 4\sigma \sqrt{ (\dx+\du) \log\frac{T}{\delta}},
    \end{align*}
    with probability at least $1-\delta$.
\end{lemma}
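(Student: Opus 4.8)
The plan is to treat the concatenated disturbance $z_t^{(h)} \triangleq \bmat{w_t^{(h)} \\ g_t^{(h)}} \in \R^{\dx+\du}$ as a single mean-zero sub-Gaussian vector with independent coordinates, prove a per-timestep bound on $\norm{z_t^{(h)}}$, and then union bound over $t \in [T]$. First I would verify the coordinatewise structure: the top $\dx$ entries of $z_t^{(h)}$ are those of $w_t^{(h)}$, which are independent, mean-zero, and $\sigma^2$-sub-Gaussian by assumption, while the bottom $\du$ entries are those of $g_t^{(h)} \iidsim \calN(0,I)$, hence standard Gaussian and $1$-sub-Gaussian. Since $\sigma^2 \geq 1$, all $\dx+\du$ coordinates are independent, mean-zero, and $\sigma^2$-sub-Gaussian, and moreover $w_t^{(h)}$ and $g_t^{(h)}$ are mutually independent.

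Next, for a fixed $t$ I would apply a standard norm-concentration bound for such a vector. Writing $\norm{z_t^{(h)}}^2 = \sum_{i=1}^{\dx+\du}(z_{t,i}^{(h)})^2$ as a sum of independent sub-exponential terms (squares of sub-Gaussians) and invoking a Bernstein-type estimate yields, for a universal constant $c_0 > 0$ and any $\delta' \in (0,1)$,
\begin{align*}
    \norm{z_t^{(h)}} \leq c_0\,\sigma\sqrt{(\dx+\du) + \log(1/\delta')}
\end{align*}
with probability at least $1 - \delta'$. Setting $\delta' = \delta/T$ and union bounding over the $T$ timesteps then gives this inequality, with $\log(1/\delta')$ replaced by $\log(T/\delta)$, simultaneously for all $t \in [T]$ with probability at least $1 - \delta$.

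It then remains to massage the bound into the stated form. Using $\dx+\du \geq 1$ and $\log(T/\delta) \geq 1$, I would apply $(\dx+\du)+\log(T/\delta) \leq 2(\dx+\du)\log(T/\delta)$ to collapse the two terms, obtaining $\norm{z_t^{(h)}} \leq c_0\sqrt{2}\,\sigma\sqrt{(\dx+\du)\log(T/\delta)}$ for all $t$, where $c_0\sqrt{2}$ can be taken to be at most $4$. The step I expect to require the most care is precisely this constant bookkeeping: one must pin down the constant in the per-timestep sub-exponential concentration bound and confirm that, after the union bound and the coarse simplification, everything aggregates below the stated factor of $4$ (and that the mild condition $\log(T/\delta)\geq 1$ holds in the relevant regime). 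Since the statement is quoted verbatim as Lemma~13 of \citep{lee2023nonasymptotic}, the most economical route is to invoke that lemma directly; the only genuinely new content, were one to re-derive it, is the observation that $g_t^{(h)}$ being $1$-sub-Gaussian is subsumed by the $\sigma^2$-sub-Gaussianity of the $w$-coordinates, so that the combined vector inherits a single variance proxy $\sigma^2$.
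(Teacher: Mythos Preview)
Your proposal is correct, and in fact goes beyond what the paper does: the paper offers no proof of this lemma at all, simply importing it verbatim as Lemma~13 of \cite{lee2023nonasymptotic}. Your sketch---treating the stacked vector as having $\dx+\du$ independent mean-zero $\sigma^2$-sub-Gaussian coordinates (using $\sigma^2\geq 1$ to absorb the Gaussian $g_t^{(h)}$), applying a Bernstein/sub-exponential norm bound, union-bounding over $t\in[T]$, and then coarsening $(\dx+\du)+\log(T/\delta)$ into a product---is the standard derivation and recovers the stated form with the stated constant. You already identify the only place the paper's route and yours coincide exactly: simply citing the external lemma.
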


For any task $h \in [H]$, we define the empirical covariance matrix conditioned on the initial state $x^{(h)}_1$ as follows:
\begin{align}\label{eq: sysid emp cov matrices}
    \begin{split}
        \Sigma^t_h(K^{(h)},\sigma_u,x^{(h)}_1) &\triangleq \E \brac{ \frac{1}{t} \sum_{s=1}^t \bmat{x^{(h)}_s \\ u^{(h)}_s} \bmat{x^{(h)}_s \\ u^{(h)}_s}^\top \vert x^{(h)}_1} \mbox{ and }  \\
        \bar \Sigma^t_h(K^{(h)},\sigma_u,x^{(h)}_1) &\triangleq \E \brac{ \frac{1}{t} \sum_{s=1}^{t} \paren{\bmat{x^{(h)}_s \\ u^{(h)}_s} - \E \brac{\bmat{x^{(h)}_s \\ u^{(h)}_s} \vert x^{(h)}_1} }\paren{\bmat{x^{(h)}_s \\ u^{(h)}_s} - \E \brac{\bmat{x^{(h)}_s \\ u^{(h)}_s }\vert x^{(h)}_{1} }}^\top}.
    \end{split}
\end{align}
where $\bar \Sigma^t_h(K^{(h)},\sigma_u,x^{(h)}_1)$ denotes the centered empirical covariance matrix from rolling out system $h$ under control inputs $u_s^{(h)} = K^{(h)} x_s^{(h)} + \sigma_u g_s^{(h)}$ for $t$ steps starting from an arbitrary initial state $x_1^{(h)}$.

\begin{lemma}(Epoch-wise covariance bounds (Lemma 2 of \citep{lee2023nonasymptotic}))
    \label{lem: covariance facts}
    For $t \geq 2$ and task $h \in [H]$, where we denote $K^{(h)} = K$, $A^{(h)} = A$, $B^{(h)} = B$, $\Sigma^t_h(K^{(h)},\sigma_u,x^{(h)}_1) = \Sigma^t(K^{(h)},\sigma_u,x^{(h)}_1)$, and $\bar \Sigma^t_h(K^{(h)},\sigma_u,x^{(h)}_1) = \bar \Sigma^t(K^{(h)},\sigma_u,x^{(h)}_1)$ we have
    \begin{enumerate}
        \item $\bar \Sigma^t(K,\sigma_u,x_1) = \frac{1}{t}\sum_{s=0}^{t-2} \sum_{j=0}^s \bmat{I \\ K} A_{K}^j (\sigma_u^2 B^\star \paren{B^\star}^\top +  I)   \paren{A_K^j}^\top \bmat{I \\  K}^\top + \bmat{0 & 0 \\ 0 & \sigma_u^2 I_{\du}}$
        \item $ \Sigma^t(K,\sigma_u,x_1) = \bar \Sigma_k + \frac{1}{t} \sum_{s=0}^{t-1} \bmat{I \\ K} A_{K}^{s} x_1 x_1^\top \paren{A_K^{s}}^\top \bmat{I \\  K}^\top$
        \item $\Sigma^t(K,\sigma_u,x_1)  \succeq \bar \Sigma^t(K,\sigma_u,x_1)  \succeq \frac{\sigma_u^2 }{2(1+2\norm{K}^2 + \sigma_u^2)}  I$ 
    \end{enumerate}
    \begin{enumerate}\addtocounter{enumi}{3}
        \item $ \Sigma^t(K,\sigma_u,x_1)  \preceq (1+  \norm{P_K}\frac{\norm{ x_1^2}}{ t-1 }) \bar \Sigma^t(K,\sigma_u,x_1)$  
        \item $\norm{\bar \Sigma^t(K,\sigma_u,x_1)} \leq 5 \norm{P_K}^2  \Psi_{B^\star }^2$. 
    \end{enumerate}
\end{lemma}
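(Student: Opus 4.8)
The plan is to derive everything from the conditional first and second moments of the closed-loop rollout \eqref{eq: rollout under K} and then convert the resulting identities into the stated Loewner bounds. Writing the closed-loop matrix $A_K \triangleq A_\star + B_\star K$ and substituting $u_s = Kx_s + \sigma_u g_s$, the state obeys $x_{s+1} = A_K x_s + \zeta_s$ with effective noise $\zeta_s \triangleq \sigma_u B_\star g_s + w_s$. Since $g_s,w_s$ are independent, zero-mean, and $\E[g_s g_s^\top] = \E[w_s w_s^\top] = I$, the $\zeta_s$ are iid with $\E[\zeta_s\zeta_s^\top] = \sigma_u^2 B_\star B_\star^\top + I$. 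Unrolling gives $x_s = A_K^{s-1}x_1 + \sum_{i=0}^{s-2}A_K^i \zeta_{s-1-i}$, so conditioned on $x_1$ the mean is $\E[x_s \mid x_1] = A_K^{s-1}x_1$ and, writing $\tilde x_s \triangleq x_s - \E[x_s\mid x_1]$, the centered state covariance is $\E[\tilde x_s\tilde x_s^\top] = \sum_{i=0}^{s-2}A_K^i(\sigma_u^2 B_\star B_\star^\top + I)(A_K^i)^\top$. Stacking via $\bmat{x_s \\ u_s} = \bmat{I \\ K}x_s + \bmat{0 \\ \sigma_u g_s}$ and using that $g_s$ is independent of $\tilde x_s$ (so the cross term vanishes), the centered stacked covariance at time $s$ equals $\bmat{I \\ K}\E[\tilde x_s\tilde x_s^\top]\bmat{I \\ K}^\top + \bmat{0 & 0 \\ 0 & \sigma_u^2 I_{\du}}$. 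Averaging over $s=1,\dots,t$ and reindexing produces item 1 verbatim; adding back the outer products of the conditional means $\bmat{I \\ K}A_K^{s-1}x_1$ (the conditional variance decomposition $\E[YY^\top\mid x_1]$ equals covariance plus mean outer product) gives item 2.

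For item 3, the first inequality $\Sigma^t \succeq \bar\Sigma^t$ is immediate from item 2, whose extra term is a sum of outer products, hence PSD. The lower bound is more delicate. I would keep only the single-step contributions: each summand of $\bar\Sigma^t$ with $s\geq 2$ dominates $\bmat{I \\ K}(\sigma_u^2 B_\star B_\star^\top + I)\bmat{I \\ K}^\top \succeq \bmat{I \\ K}\bmat{I \\ K}^\top$, while every summand contributes the block $\bmat{0 & 0 \\ 0 & \sigma_u^2 I}$. Since at least $t-1$ of the $t$ summands are of the first type, for $t\geq 2$ this yields $\bar\Sigma^t \succeq \tfrac12\paren{\bmat{I \\ K}\bmat{I \\ K}^\top + \bmat{0 & 0 \\ 0 & \sigma_u^2 I}} =: \tfrac12 G$. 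I would then lower-bound $\lambda_{\min}(G)$ through the congruence $G = LDL^\top$ with $L = \bmat{I & 0 \\ K & I}$ and $D$ block-diagonal with blocks $I$ and $\sigma_u^2 I$, equivalently via the Schur complement of the $(1,1)$ block. A short computation (using $\sigma_u^2 \le 1$) shows $G \succeq \tfrac{\sigma_u^2}{1+2\norm{K}^2+\sigma_u^2}I$, which after the factor $\tfrac12$ is exactly the claimed constant.

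Items 4 and 5 are where I expect the real work, since both must convert stability of $A_K$ into precise powers of $\norm{P_K}$. For item 4 the goal is to show the mean-propagation term $\Sigma^t - \bar\Sigma^t = \tfrac1t\sum_{s=0}^{t-1}\bmat{I \\ K}A_K^s x_1 x_1^\top (A_K^s)^\top \bmat{I \\ K}^\top$ is negligible relative to $\bar\Sigma^t$. Bounding $x_1 x_1^\top \preceq \norm{x_1}^2 I$ reduces this to comparing the bounded transient Gramian $\sum_{s}A_K^s (A_K^s)^\top$ against the accumulated noise energy $\sum_{s}\E[\tilde x_s\tilde x_s^\top] = \Theta(t)$ that sits inside $\bar\Sigma^t$ (both under the common $\tfrac1t$). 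The key facts are that $P_K = \dlyap(A_K, Q+K^\top R K) \succeq I$ satisfies $(A_K^s)^\top P_K A_K^s \preceq P_K$, so the transient Gramian is uniformly bounded in $t$, whereas the accumulated noise energy scales linearly; their ratio is therefore $O(\norm{P_K}/t)$, and tracking the $(t-1)$ normalization yields item 4.

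For item 5 I would bound the centered state covariance $\E[\tilde x_s\tilde x_s^\top]$, which is monotone in $s$, by its limit $\dlyap(A_K, \sigma_u^2 B_\star B_\star^\top + I)$, use $\sigma_u^2 B_\star B_\star^\top + I \preceq 2\Psi_{B^\star}^2 I$ together with $\norm{\bmat{I \\ K}}^2 = 1+\norm{K}^2 \le \norm{P_K}$ (since $P_K \succeq I + K^\top K$ because $Q\succeq I$, $R=I$), and control the Lyapunov solution's norm via the geometric decay $\norm{A_K^s} \le \norm{P_K}(1-1/\norm{P_K})^s$ supplied by the mixing lemma; summing the geometric series and absorbing numerical constants gives $\norm{\bar\Sigma^t} \le 5\norm{P_K}^2\Psi_{B^\star}^2$. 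The main obstacle throughout is precisely this bookkeeping — converting the stability of $A_K$ into the stated powers of $\norm{P_K}$ and universal constants without loss — and these manipulations mirror Lemma 2 of \citep{lee2023nonasymptotic}, from which the result is quoted.
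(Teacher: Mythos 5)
There is nothing to compare against on the paper's side: this lemma is imported verbatim from Lemma 2 of \citep{lee2023nonasymptotic} with no proof given, so your reconstruction stands or falls on its own. Items 1--3 of your sketch are sound and essentially complete: the conditional mean/covariance decomposition with $\zeta_s = \sigma_u B_\star g_s + w_s$, the independence of $g_s$ from the centered state, and the reindexing give items 1 and 2 exactly, and your reduction $\bar\Sigma^t \succeq \tfrac{1}{2}G$ with $G = \bmat{I \\ K}\bmat{I \\ K}^\top + \bmat{0 & 0 \\ 0 & \sigma_u^2 I}$ is correct for $t \geq 2$. One caveat on item 3: the route $G \succeq \sigma_u^2 LL^\top \succeq \sigma_u^2 \norm{L^{-1}}^{-2} I$ only yields $\sigma_u^2/(2+\norm{K}^2)$, which is \emph{weaker} than the stated constant when $\norm{K}^2 < 1 - \sigma_u^2$; instead test $\lambda = \sigma_u^2/(1+2\norm{K}^2+\sigma_u^2)$ directly in the Schur condition $(\sigma_u^2 - \lambda)I \succeq \tfrac{\lambda}{1-\lambda} K K^\top$, which a line of algebra verifies.

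The genuine gap is in items 4 and 5 --- precisely the bookkeeping you defer --- where the powers of $\norm{P_K}$ do not come out as claimed by your stated ingredients. For item 4, the fact you invoke telescopes to control the \emph{observability-type} Gramian, $\sum_{s=0}^{t-1}(A_K^s)^\top A_K^s \preceq \sum_{s}(A_K^s)^\top (Q + K^\top R K) A_K^s \preceq P_K$; but once you bound $x_1 x_1^\top \preceq \norm{x_1}^2 I$ first, the transient term contains the \emph{transposed} Gramian $\sum_s A_K^s (A_K^s)^\top$, which that fact does not bound, and norm-summing the geometric decay only gives $\sum_s \norm{A_K^s}^2 \leq \norm{P_K}^2$ --- one power too many for the stated inequality. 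The repair is to test quadratic forms: for $u = v_1 + K^\top v_2$, Cauchy--Schwarz gives $(u^\top A_K^s x_1)^2 \leq \norm{u}^2 \norm{A_K^s x_1}^2$ so the Gramian lands on $x_1$, giving $\sum_{s=0}^{t-1}\norm{A_K^s x_1}^2 \leq x_1^\top P_K x_1 \leq \norm{P_K}\norm{x_1}^2$, while the $j=0$ slice of item 1 gives $v^\top \bar\Sigma^t v \geq \tfrac{t-1}{t}\norm{u}^2$; the ratio is exactly $\norm{P_K}\norm{x_1}^2/(t-1)$. Item 5 as sketched fails quantitatively: paying $\norm{\bmat{I \\ K}}^2 \leq \norm{P_K}$ and $\norm{\Sigma_x} \leq 2\Psi_{B^\star}^2 \sum_s \norm{A_K^s}^2$ \emph{separately} yields $2\Psi_{B^\star}^2\norm{P_K}^3$ at best (and $\norm{P_K}^4$ if you literally square the mixing lemma's $\norm{A_K^s} \leq \norm{P_K}(1-1/\norm{P_K})^s$), not $5\norm{P_K}^2\Psi_{B^\star}^2$. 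You must fold $I + K^\top K \preceq Q + K^\top R K \preceq P_K$ into a $P_K$-weighted Gramian: writing $\Sigma_x \triangleq \sum_{j \geq 0} A_K^j (\sigma_u^2 B_\star B_\star^\top + I)(A_K^j)^\top$, one has $\norm{\bmat{I \\ K}\Sigma_x \bmat{I \\ K}^\top} = \norm{\Sigma_x^{1/2}(I + K^\top K)\Sigma_x^{1/2}} \leq \norm{P_K^{1/2}\Sigma_x P_K^{1/2}}$, and since $A_K^\top P_K A_K = P_K - (Q + K^\top R K) \preceq (1 - 1/\norm{P_K})P_K$ iterates to $(A_K^j)^\top P_K A_K^j \preceq (1-1/\norm{P_K})^j P_K$, this gives $\norm{P_K^{1/2}\Sigma_x P_K^{1/2}} \leq 2\Psi_{B^\star}^2 \norm{P_K}\sum_{j\geq 0}(1-1/\norm{P_K})^j = 2\Psi_{B^\star}^2\norm{P_K}^2$, whence $\norm{\bar\Sigma^t} \leq 2\Psi_{B^\star}^2\norm{P_K}^2 + \sigma_u^2 \leq 5\norm{P_K}^2\Psi_{B^\star}^2$. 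In short, your skeleton is right, but items 4--5 require these two quadratic-form maneuvers (keeping the Gramian on the correct side, and weighting it by $P_K$ rather than multiplying norm bounds) to reach the stated constants.
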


\begin{lemma}(State bounds (Lemma 15 of \citep{lee2023nonasymptotic}))
    \label{lem: state rollout bounds}
    Consider rolling out the system $x_{s+1} = A_{\star} x_s + B_{\star} u_s +w_s$ from initial state $x^{(h)}_1$ for $t$ time-steps under the control action $u_s = K x_s + \sigma_u g_s$ where $K$ is stabilizing and $\sigma_u \leq 1$. Suppose 
    \begin{itemize}
        \item $\norm{x_1} \leq 16 \norm{P_{K_0}}^{3/2} \Psi_{B^{\star}} \max_{1\leq t \leq T} \norm{\bmat{w_t \\ g_t}}$
        \item $\norm{P_K} \leq 2 \norm{P_{K_0}}$
        \item $t \geq \log_{\paren{1 - \frac{1}{\norm{P_K}}}}\paren{\frac{1}{4\norm{P_K}}}+1$.
    \end{itemize}  Then for $s=1, \dots, t$
    \begin{align*}
        \norm{x_s} \leq 40 \norm{P_{K_0}}^2  \Psi_{B^{\star}} \max_{1 \leq t \leq T} \norm{\bmat{w_t \\ g_t}}.
    \end{align*}
    Furthermore, 
    \begin{align*}
        \norm{x_t} \leq 16 \norm{P_{K_0}}^{3/2} \Psi_{B^{\star}}\max_{1\leq t \leq T} \norm{\bmat{w_t \\ g_t}}.
    \end{align*}
\end{lemma}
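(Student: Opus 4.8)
The plan is to treat the rollout as a linear system driven by an effective disturbance and to control the state via a Lyapunov-based geometric decay estimate on the closed-loop transition operator. Writing $A_K \triangleq A_\star + B_\star K$ and substituting $u_s = Kx_s + \sigma_u g_s$, the recursion becomes $x_{s+1} = A_K x_s + \tilde w_s$ with effective noise $\tilde w_s \triangleq w_s + \sigma_u B_\star g_s$. Unrolling from $x_1$ gives $x_s = A_K^{s-1}x_1 + \sum_{j=1}^{s-1}A_K^{s-1-j}\tilde w_j$, so the triangle inequality yields $\norm{x_s} \le \norm{A_K^{s-1}}\norm{x_1} + \bigl(\sum_{m=0}^{\infty}\norm{A_K^m}\bigr)\max_{j}\norm{\tilde w_j}$. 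Setting $M \triangleq \max_{1\le t\le T}\norm{\bmat{w_t \\ g_t}}$ and writing $\tilde w_j = \bmat{I & \sigma_u B_\star}\bmat{w_j \\ g_j}$, the bounds $\sigma_u\le 1$ and $\norm{B_\star}\le \Psi_{B^\star}$ give $\norm{\tilde w_j} \le \sqrt{1+\sigma_u^2\norm{B_\star}^2}\,M \le \sqrt{2}\,\Psi_{B^\star}M$.

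The key quantitative tool is a geometric decay bound on the powers of $A_K$. Because $P_K = \dlyap(A_K, Q + K^\top R K)$ with $Q + K^\top R K \succeq I$, the Lyapunov identity $A_K^\top P_K A_K = P_K - (Q + K^\top R K) \preceq (1 - \tfrac{1}{\norm{P_K}})P_K$ iterates to $(A_K^\top)^m P_K A_K^m \preceq (1-\tfrac{1}{\norm{P_K}})^m P_K$, and combining with $I \preceq P_K \preceq \norm{P_K}I$ this yields $\norm{A_K^m} \le \sqrt{\norm{P_K}}\,(1 - \tfrac{1}{\norm{P_K}})^{m/2}$. Summing the resulting geometric series and using $1 - \sqrt{1 - 1/\norm{P_K}} \ge \tfrac{1}{2\norm{P_K}}$ gives $\sum_{m=0}^\infty \norm{A_K^m} \le 2\norm{P_K}^{3/2}$. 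I would stress that the \emph{square-root} prefactor $\sqrt{\norm{P_K}}$ (rather than $\norm{P_K}$) is what makes the final powers of $\norm{P_{K_0}}$ come out correctly.

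Combining these estimates yields the general bound. The noise contribution is at most $2\norm{P_K}^{3/2}\cdot\sqrt{2}\Psi_{B^\star} M \le 8\norm{P_{K_0}}^{3/2}\Psi_{B^\star}M$ after inserting $\norm{P_K}\le 2\norm{P_{K_0}}$, while the initial-state term is at most $\sqrt{\norm{P_K}}\,\norm{x_1} \le \sqrt{2\norm{P_{K_0}}}\cdot 16\norm{P_{K_0}}^{3/2}\Psi_{B^\star}M = 16\sqrt{2}\,\norm{P_{K_0}}^{2}\Psi_{B^\star}M$, using the hypothesis $\norm{x_1}\le 16\norm{P_{K_0}}^{3/2}\Psi_{B^\star}M$ together with $\norm{A_K^{s-1}}\le \sqrt{\norm{P_K}}$. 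Since $\norm{P_{K_0}}\ge 1$, the two terms sum to at most $40\norm{P_{K_0}}^{2}\Psi_{B^\star}M$, valid for every $s \le t$.

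For the sharper bound at $s=t$, the role of the burn-in hypothesis $t-1 \ge \log_{(1-1/\norm{P_K})}(\tfrac{1}{4\norm{P_K}})$ is to force the initial-state contribution to decay. Since $0 < 1 - 1/\norm{P_K} < 1$, this threshold gives $(1-1/\norm{P_K})^{t-1}\le \tfrac{1}{4\norm{P_K}}$, hence $\norm{A_K^{t-1}} \le \sqrt{\norm{P_K}}\,(1-1/\norm{P_K})^{(t-1)/2} \le \tfrac12$. The initial-state term at time $t$ is then at most $\tfrac12\norm{x_1}\le 8\norm{P_{K_0}}^{3/2}\Psi_{B^\star}M$, and adding the (unchanged) noise contribution $8\norm{P_{K_0}}^{3/2}\Psi_{B^\star}M$ gives $\norm{x_t}\le 16\norm{P_{K_0}}^{3/2}\Psi_{B^\star}M$, as claimed. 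The main obstacle is one of careful bookkeeping: one must track the powers of $\norm{P_{K_0}}$ and deploy the square-root Lyapunov decay so that the initial-state term remains $\calO(\norm{P_{K_0}}^2)$ for generic $s$ and collapses to $\calO(\norm{P_{K_0}}^{3/2})$ once the threshold on $t$ drives $\norm{A_K^{t-1}}$ below a constant.
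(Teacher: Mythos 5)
Your proof is correct: note that the paper itself contains no proof of this lemma, importing it verbatim as Lemma 15 of \citep{lee2023nonasymptotic}, and your Lyapunov-based argument --- iterating $A_K^\top P_K A_K \preceq (1-1/\norm{P_K})P_K$ to get $\norm{A_K^m} \le \sqrt{\norm{P_K}}\,(1-1/\norm{P_K})^{m/2}$, summing the geometric series to $2\norm{P_K}^{3/2}$ via $1-\sqrt{1-u}\ge u/2$, and using the burn-in threshold to drive $\norm{A_K^{t-1}}\le 1/2$ --- is exactly the standard route underlying the cited result. All the bookkeeping checks out ($\norm{\tilde w_j}\le \sqrt{2}\,\Psi_{B^\star}M$ since $\Psi_{B^\star}\ge 1$, $2\sqrt{2}(2\norm{P_{K_0}})^{3/2}=8\norm{P_{K_0}}^{3/2}$, $16\sqrt{2}+8\le 40$, and $8+8=16$), so nothing further is needed.
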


\begin{theorem}(Theorem 3 of \cite{simchowitz2020naive})
    \label{lem: CE closeness}
    Define $\varepsilon^{(h)} \triangleq \frac{1}{2916 \norm{P_\star^{(h)}}}$ for any task $h \in [H]$. As long as 
    $$
        \norm{\bmat{\hat A^{(h)} & \hat B^{(h)}}-\bmat{A^{(h)}_\star & B^{(h)}_\star}}_F^2 \leq \varepsilon, 
    $$
    we have that $P^{(h)}_{\hat K} \preceq \frac{21}{20} P^{(h)}_\star$, $\norm{\hat K^{(h)} - K^{(h)}_\star}\leq \frac{1}{6 \norm{P^{(h)}_\star}^{3/2}}$, and 
    \begin{align*}
        \calJ^{(h)}(\hat K^{(h)})  - \calJ^{(h)}(K^{(h)}_\star) \leq 142 \norm{P^{(h)}_\star}^8 \norm{\bmat{\hat A^{(h)} & \hat B^{(h)}}-\bmat{A^{(h)}_\star & B^{(h)}_\star}}_F^2.
    \end{align*}
\end{theorem}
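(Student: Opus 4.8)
The statement is a verbatim restatement of Theorem~3 of \citet{simchowitz2020naive} instantiated at task $h$, so within the paper the cleanest route is simply to invoke that result, noting it applies per task since each $(A_\star^{(h)}, B_\star^{(h)})$ is stabilizable by assumption. I sketch the argument one would reproduce if proving it from scratch. The plan rests on the standard observation that certainty-equivalent control suboptimality is \emph{second-order} in the model error, because the optimal gain $K_\star$ is a first-order stationary point of the cost $\calJ$. Throughout I drop the superscript $(h)$ and abbreviate $\epsilon \triangleq \norm{\bmat{\hat A & \hat B} - \bmat{A_\star & B_\star}}_F^2 \leq \varepsilon = (2916\norm{P_\star})^{-1}$.

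First I would establish perturbation bounds on the solution of the DARE. Viewing $P_\infty(A,B) = \dare(A,B,Q,R)$ as an implicit function of $(A,B)$, the Riccati operator is locally Lipschitz near $(A_\star, B_\star)$ with modulus controlled by $\norm{P_\star}$; a contraction-mapping / implicit-function argument (as in \citet{mania2019certainty}) yields $\norm{\hat P - P_\star} \lesssim \norm{P_\star}^{c}\sqrt{\epsilon}$ for an absolute power $c$. The threshold $\varepsilon = (2916\norm{P_\star})^{-1}$ is chosen small enough to keep $\hat P$ inside the neighborhood where the Riccati map is contractive, which in particular yields $P_{\hat K} \preceq \tfrac{21}{20} P_\star$ (this also certifies that $\hat K$ stabilizes the true system). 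Propagating this through the closed form $K_\infty(A,B) = -(B^\top P B + R)^{-1} B^\top P A$, whose dependence on $(A,B,P)$ is smooth, then produces the gain bound $\norm{\hat K - K_\star} \leq \tfrac{1}{6\norm{P_\star}^{3/2}}$, along with the sharper $\epsilon$-dependent estimate $\norm{\hat K - K_\star}^2 \lesssim \norm{P_\star}^{c'}\epsilon$ needed below.

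The heart of the argument is the suboptimality bound, for which I would use the exact LQR cost-difference identity: for any $K$ stabilizing $(A_\star, B_\star)$,
\begin{align*}
    \calJ(\hat K) - \calJ(K_\star) = \trace\brac{\Sigma_{\hat K}\,(\hat K - K_\star)^\top (R + B_\star^\top P_\star B_\star)(\hat K - K_\star)},
\end{align*}
where $\Sigma_{\hat K}$ is the closed-loop stationary state covariance solving $\Sigma_{\hat K} = (A_\star + B_\star\hat K)\Sigma_{\hat K}(A_\star + B_\star\hat K)^\top + I$. The crucial point is that the term linear in $(\hat K - K_\star)$ is absent precisely because $K_\star$ satisfies the stationarity condition $(R + B_\star^\top P_\star B_\star)K_\star = B_\star^\top P_\star A_\star$, so the gap is purely quadratic in the gain error. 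Bounding $\norm{\Sigma_{\hat K}} \lesssim \norm{P_{\hat K}} \leq \tfrac{21}{20}\norm{P_\star}$ and $\norm{R + B_\star^\top P_\star B_\star} \lesssim \norm{P_\star}$, and substituting the $\epsilon$-dependent gain bound from the previous step, assembles the system constants into the final $142\norm{P_\star}^8\epsilon$.

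The main obstacle is the bookkeeping of the explicit polynomial dependence on $\norm{P_\star}$ across the two-stage perturbation (DARE solution, then gain) and verifying that the threshold $(2916\norm{P_\star})^{-1}$ keeps every intermediate quantity inside the contraction region; this is where the sharp constants $\tfrac{21}{20}$, $\tfrac{1}{6}$, and $142$ originate and where the analysis is most delicate. Since these are exactly the constants of \citet{simchowitz2020naive}, in the paper I would cite their Theorem~3 directly and observe that it holds for each task $h\in[H]$ under the per-task stabilizability assumption.
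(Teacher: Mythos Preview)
Your proposal is correct and matches the paper's approach exactly: the paper does not prove this statement but simply imports it as Theorem~3 of \citet{simchowitz2020naive}, applied per task under the standing stabilizability assumption. Your accompanying sketch of the underlying perturbation argument (DARE Lipschitzness, gain perturbation, quadratic cost-difference identity) is the standard route and is consistent with how the cited result is derived, but the paper itself does not reproduce any of it.
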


Using the above lemmas and theorems, we can mirror the arguments from Appendix C of \cite{lee2023nonasymptotic} to show that the events of success $\calE_{\mathsf{success},1}$ and $\calE_{\mathsf{success},2}$ hold under high probability.

\subsection{High Probability Bound on Success Event 1 (Hard to identify parameters)}
\begin{lemma}
    \label{lem: success event bound 1}
    Running \Cref{alg: ce with exploration} with the arguments defined in \Cref{thm: regret bound naive exploration}, the event $\calE_{\mathsf{success},1} $ holds with probability at least $1-T^{-2}$.
    \begin{proof}

To show that the success event $\mathcal{E}_{\text{success},1}$ holds under probability $1 - T^{-2}$ we can use an induction approach. For this purpose, we show, with high probability, that for every epoch $k \in \left[k_{\text {fin }}\right]$, Algorithm \ref{alg: ce with exploration} does not abort, i.e., the state and controller bounds are satisfied, the least-square estimation error is maintained small and scales according to the bound in $\mathcal{E}_{\text {est},1}$, and the learned common representation contracts towards its optimal as in $\mathcal{E}_{\text {cont}}$. We begin our analysis by studying the first epoch. 

\vspace{0.2cm}

\noindent \textbf{Base case:} We consider the first epoch $k=1$ as the base case of the induction approach. For convenience we assume that $x^{(h)}_1 = 0$, for all tasks $h \in [H]$. However, it is worth noting that the proof below can be readily extended to bounded non-zero initial states.

\begin{itemize}
    \item \textbf{The bounds on $\|x^{(h)}_t\|^2$ for $t =\{0,1,\ldots,\tau_1\}$ and $K^{(h)}_0$ are not violated:} We first show that, with high probability, the state and controller bounds are not violated during the first epoch. To do so we have to bound the worst-case behavior of the process and exploratory noises, which can be accomplished by using Lemma \ref{lem: noise bound} to obtain
      \begin{align}\label{eq:noise_bound_mt}
         \max_{1\leq t \leq T} \norm{\bmat{w^{(h)}_t \\ g^{(h)}_t}} \leq 4\sigma\sqrt{3 (\dx + \du) \log (3HT)}.  
    \end{align}
with probability $1 - \frac{1}{3}T^{-2}$, for all tasks $h \in [H]$. Then, since the initial state norm (i.e., $\|x^{(h)}_1\| = 0$) satisfy 
$$\norm{x^{(h)}_1} \leq 16 (\pzeromax)^{3/2} \psibmax \max_{1\leq t \leq T} \norm{\bmat{w^{(h)}_t \\ g^{(h)}_t}},$$ 
and the initial epoch length can selected according to $\tau_1 \geq \frac{ c \log \frac{1}{\pstarmin}}{\log\paren{1- \frac{1}{ \pstarmin}}}$, for a sufficiently large constant $c$. We then may use Lemma \ref{lem: state rollout bounds} to write
\begin{align}\label{eq:state_norm_first_epoch}
        \norm{x^{(h)}_t} \leq 40 (\pzeromax)^2\psibmax \max_{1\leq t \leq T} \norm{\bmat{w^{(h)}_t \\ g^{(h)}_t}}, \;\ \forall t = \{0,1,\ldots,\tau_1\},
\end{align}
and by using \eqref{eq:noise_bound_mt} in \eqref{eq:state_norm_first_epoch} we have
\begin{align*}
        \norm{x^{(h)}_t}^2 \leq 76800 (\pzeromax)^4 (\psibmax)^2 \sigma^2 (\dx + \du)\log (3HT), \;\ \forall t = \{0,1,\ldots,\tau_1\}
\end{align*}
with probability $1- \frac{1}{3}T^{-3}$, $\forall h \in [H]$, which implies that  $\norm{x^{(h)}_t}^2 \leq x^2_b\log T$. For the controller bound, we can notice that $\|K^{(h)}_0\|^2 \leq \pzeromax \leq 2\pzeromax$, which leads to $\|K^{(h)}_0\| \leq K_b$.  Therefore, we define the event where the state and controller bounds are satisfied for the first epoch and obtain that $\mathcal{E}_{\text{bound},1}$ holds under high probability $1-\frac{1}{3}T^{-2}$.
\vspace{0.2cm}

\item \textbf{Controlling the least-square estimation error:} To control the estimation error at the first epoch, one may exploit Theorem \ref{thm: ls estimation error}. Note that a condition $\tau_{\mathsf{warm\_up}} \geq \sigma^4 \pzeromax (\psibmax)^2  (\dx+\du)$ implies that $\tau_1 \geq c \tau_{\text{ls}}(K_0, 0, \frac{1}{3}T^{-3})$, for a sufficiently large constant $c$, which satisfy the condition of Theorem \ref{thm: ls estimation error} to obtain
\begin{equation}
\begin{aligned}\label{eq:estimation_error_first_epoch}
        &\norm{\bmat{\hat A^{(h)}_1 & \hat B^{(h)}_1} - \bmat{A^{(h)}_\star & B^{(h)}_\star}}_F^2 \lesssim \frac{\dtheta \sigma^2 \log(HT)  }{\tau_1   \underset{h=1,\dots, H}{\min}\lambda_{\min}(\hat \Phi^\top_1\paren{\bar \Sigma^{\tau_1}_h(K^{(h)}_0,\sigma_1,0) \otimes I_{\dx}} \hat \Phi_1)}  \\
        &+ \paren{1+\frac{\sigma^4 (\pzeromax)^7 (\psibmax)^6 \paren{\dx+\du + \log (HT)}}{\tau_1 \underset{h=1,\dots, H}{\min}\lambda_{\min}(\hat \Phi^\top_1 \paren{\bar \Sigma^{\tau_1}_h(K^{(h)}_0,\sigma_1,0) \otimes I_{\dx}} \hat \Phi_1)^2 } } \frac{(\pzeromax)^2 (\psibmax)^2 d(\hat \Phi_1, \Phi_\star)^2 (\thetamax)^2}{ \underset{h=1,\dots, H}{\min}\lambda_{\min}(\hat \Phi^\top_1 \paren{\bar \Sigma^{\tau_1}_h(K^{(h)}_0,\sigma_1,0) \otimes I_{\dx}} \hat \Phi_1)} .
    \end{aligned}
    \end{equation}
with probability $1 - \frac{1}{3}T^{-3}$, for all tasks $h \in [H]$. We note that the rate of the decay in the estimation error is controlled by the minimum eigenvalue of the input-state covariance matrix. Then, we may use the third point of Lemma \ref{lem: covariance facts} to obtain 
\newcommand{\kzeromax}{\mathopen{}K_0^\vee\mathclose{}}
\begin{align}
        \label{eq: covariance lower bound continual exploration mt}
        \underset{h=1,\dots, H}{\min} \lambda_{\min}(\hat \Phi^\top_1 \paren{\bar \Sigma^{\tau_1}_h(K^{(h)}_0,\sigma_1,0) \otimes I_{\dx}} \hat \Phi_1) \geq \frac{\sigma_1^2}{2(2 + 2  (\kzeromax)^2)} \geq \frac{\sigma_1^2}{8 \pzeromax},
\end{align}
where $\kzeromax = \underset{h=1,\dots, H}{\max} \norm{K^{(h)}_0}$ and the final inequality follows from the fact that $2 + 2(\kzeromax)^2 \leq 2 + 2\pzeromax\leq 4 \pzeromax$ and $\pzeromax \geq 1$. We then use \eqref{eq: covariance lower bound continual exploration mt} in  \eqref{eq:estimation_error_first_epoch} to obtain
\begin{align*}
        &\norm{\bmat{\hat A^{(h)}_1 & \hat B^{(h)}_1} - \bmat{A^{(h)}_\star & B^{(h)}_\star}}_F^2 \lesssim \frac{\dtheta \sigma^2 (\pzeromax) }{\tau_1 \sigma_1^2} \log\paren{HT} \\
        &\quad+ \paren{1+ \frac{\sigma^4 (\pzeromax)^9 (\psibmax) \paren{\dx+\du + \log\paren{HT}}}{\tau_1 \sigma_1^4 } } \frac{(\pzeromax)^3 (\psibmax)^2 d(\hat \Phi_1, \Phi_\star)^2 (\thetamax)^2}{\sigma_1^2}.
    \end{align*}
and from $\sigma_1^2 \geq \tau_1^{-1/4} H^{-1/5}$ along with the fact that $\sqrt{\tau_1} \geq \log(HT)$ (by the choice of $\tau_{\mathsf{warm\,up}}$), we have that
\begin{align*}
        &\norm{\bmat{\hat A^{(h)}_1 & \hat B^{(h)}_1} - \bmat{A^{(h)}_\star & B^{(h)}_\star}}_F^2 \lesssim \frac{\dtheta \sigma^2 (\pzeromax) }{\tau_1 \sigma_1^2} \log\paren{HT} \\
        &\quad+ H^{2/5} \sigma^4 \frac{\du}{\dtheta} (\pzeromax)^{12} (\psibmax)^8 \paren{\dx+\du}(\thetamax)^2  \frac{d^2(\hat \Phi_1, \Phi_\star)}{\sigma^2_1}.
    \end{align*}
Then, by defining $\beta_1 \triangleq C_{\mathsf{bias},1}  \sigma^4 (\pzeromax)^{12} (\psibmax)^8 (\thetamax)^2 (\dx+\du)\frac{\du}{\dtheta}$ we obtain 

\begin{align*}
       &\norm{\bmat{\hat A^{(h)}_1 & \hat B^{(h)}_1} - \bmat{A^{(h)}_\star & B^{(h)}_\star}}_F^2 \leq C_{\text{est,1}}\frac{\dtheta \sigma^2 (\pzeromax) }{\tau_1 \sigma_1^2} \log\paren{HT} + \frac{\beta_1 H^{2/5} d^2(\hat \Phi_1, \Phi_\star)}{\sigma^2_1}.
    \end{align*}

Therefore, by defining the event $\mathcal{E}_{\text{ls},1}$ where the above least-square estimation error at the first epoch holds, we have that  $\mathcal{E}_{\text{ls},1}$ holds under probability $1 - \frac{1}{3}T^{-3}$, for all tasks $h \in [H]$. 

\vspace{0.2cm}

\item \textbf{Controlling the error in the learned representation:} For the first epoch, we initialize the representation as $\hat{\Phi}_0$. Then, Algorithm \ref{alg: ce with exploration} play $K^{(h)}_0$ for all tasks $h \in [H]$ to collect a multi-task dataset that is leveraged to compute $\hat\Phi_1$ via Algorithm \ref{alg: dfw}. Therefore, we can set $ \tau_1 \geq c\tau_{\mathsf{dfw}}$, for a sufficiently large constant $c$ and leverage Assumption~\ref{asmp: upper bound on representation error exp} to apply \Cref{thm: dfw bound formal} with $\delta = \frac{1}{3} T^{-3}$ and obtain
\begin{align*}
        d\paren{\hat\Phi_+, \Phi_\star} &\leq \rho^N d\paren{\hat\Phi, \Phi_\star} + \frac{\Ccontract \sqrt{N}\log(HT)}{\sqrt{H\tau_1\sigma_1^2}}.
\end{align*}
% with probability $1 -\frac{1}{3}T^{-3}$. Then, by unrolling the above expression for $N$ iterations of Algorithm \ref{alg: dfw}, we have that 
% \begin{align*}
%         d\paren{\hat\Phi_1, \Phi_\star} &\leq \rho^N d\paren{\hat\Phi_0, \Phi_\star} + \frac{\Ccontract \sqrt{N}}{1-\sqrt{2}\rho^N}\frac{\log(HT)}{\sqrt{H\tau_1\sigma_1^2}},
% \end{align*}
Denote this event by $\mathcal{E}_{\text{c},1}$. % as the event where the above bound holds under probability $1 -\frac{1}{3}T^{-3}$, for the first epoch.
\end{itemize}
\vspace{0.2cm}
\noindent \textbf{Induction step:} We now introduce an induction step to extend our analysis for every epoch. For this purpose, based on the first epoch one may establish the following inductive hypothesis:

\begin{align}\label{eq:inductive_hypotesis_state_control_bounds}
   \text{\textbf{Bounded state}:}\;\ \norm{x^{(h)}_{\tau_k}} \leq 16 (\pzeromax)^{3/2} (\psibmax) \max_{1\leq t \leq T} \norm{\bmat{w^{(h)}_t \\ g^{(h)}_t}},
\end{align}

\begin{align}\label{eq:inductive_hypotesis_estimation_error}
       \text{\textbf{Least-square error}:}\;\ \norm{\bmat{\hat A^{(h)}_k & \hat B^{(h)}_k} - \bmat{A^{(h)}_\star & B^{(h)}_\star}}_F^2 \leq C_{\text{est,1}}\frac{\dtheta \sigma^2 (\pzeromax) }{\tau_k \sigma_k^2} \log\paren{HT} + \frac{\beta_1 H^{2/5} d^2(\hat \Phi_k, \Phi_\star)}{\sigma^2_k},
\end{align}
and
\begin{align}\label{eq:induction_hypothesis_contraction}
    \text{\textbf{Representation error}:}\;\  d(\hat \Phi_{k}, \Phi_\star) \leq \rho^{k N} d(\hat \Phi_{0}, \Phi_\star) + \frac{\Ccontract \sqrt{N} \log (HT)}{1-\sqrt{2}\rho^N \sqrt{H \tau_{k} \sigma_k^2}},
\end{align}

\begin{itemize}
    \item \textbf{Controlling the least-square estimation error:} To control the estimation error along the epochs we first show that after the first epoch, the estimation error is sufficiently small. To do so, we leverage the epoch-wise bounds on the least squares error, and on the representation error. In particular, note that the contribution of the representation to the least squares error is given by $\frac{\beta_1H^{2/5} d^2(\hat \Phi_k, \Phi_\star)}{\sigma^2_k}$. This may be bounded using the representation error bound in terms of the initial representation. In particular, 
    \begin{align*}
        \frac{\beta_1 H^{2/5} d^2(\hat \Phi_k, \Phi_\star)}{\sigma^2_k} &\leq 2 \frac{\beta_1 H^{2/5}\rho^{2 k N } d(\hat\Phi_0, \Phi_\star)^2}{\sigma^2_k} + 2\frac{\beta_1 H^{2/5} \log^2(HT) \Ccontract^2 N }{(1-\sqrt{2}\rho^N)^2 H \tau_k \sigma^2_k}.
    \end{align*}
    Using the lower bound on $\sigma_k^2$ of $\rho^{(k-1)N} d(\hat \Phi_0, \Phi_\star)$, for the first term, and $\sigma_k^2 \geq \tau_k^{-1/4} H^{-1/5}$ for the second term, we find that
    \begin{align*}
        \frac{\beta_1 H^{2/5} d^2(\hat \Phi_k, \Phi_\star)}{\sigma^2_k} &\leq 2 \beta_1 H^{2/5}  \rho^{2 N + (k-1) N } d(\hat\Phi_0, \Phi_\star)+ 2\frac{\beta_1 H^{2/5}  \log^2(HT) \Ccontract^2 N }{(1-\sqrt{2}\rho^N)^2 H^{5/6} \tau_k^{3/4}} \\
        &\leq 2 \beta_1 H^{2/5} d(\hat\Phi_0, \Phi_\star)+ 2\frac{\beta_1 \log^2(HT) \Ccontract^2 N }{(1-\sqrt{2}\rho^N)^2 H^{2/5} \tau_1^{3/4}}.
    \end{align*}
We have from Assumption~\ref{asmp: upper bound on representation error exp} that $2 \beta_1 H^{2/5} \rho^{2 N } d(\hat\Phi_0, \Phi_\star) \leq \frac{1}{4}\varepsilon^{\wedge}$. Similarly, $\tau_{\mathsf{warm}}$ may be selected such that $2\frac{\beta_1\log^2(HT) \Ccontract^2 N }{(1-\sqrt{2}\rho^N)^2 H^{2/5} \tau_1^{3/4}} \leq \frac{1}{4}\varepsilon^{\wedge}$. 
% , we first need to control the variance term in \eqref{eq:induction_hypothesis_contraction} and obtain a bound on $\beta_1 \log(HT) d^2(\hat\Phi_k, \Phi_\star)/\sigma_k^2$, for all $k \in \left[k_{\text {fin }}\right]$. We can set $\tau_1 \geq 8 \frac{C_{\mathsf{contract}}^3 N^{3/2}\log^3(HT)}{(1-\sqrt{2}\rho^N)^3H^{3/4} \rho^{3 N} d(\hat \Phi_{0}, \Phi_\star)^3}$,  and use the condition on the exploratory sequence $\sigma^2_k\geq \tau_k^{-1/3}H^{-1/2}$, to obtain $d(\hat \Phi_{k}, \Phi_\star) \leq  \rho^{kN} d(\hat \Phi_{0}, \Phi_\star) + 2^{-k/3} \rho^N d(\hat \Phi_0, \Phi_\star)$. Then $\beta_1 \log(HT) d^2(\hat \Phi_k, \Phi_\star)/\sigma_k^2 \leq \beta_1 \log(HT) (\frac{2 \rho^{2(k-1) N} + 2 \times 2^{-2k/3}}{\sigma_k^2} ) \rho^{2N} d^2(\hat\Phi_0, \Phi_\star)$.
    Moreover, we can use a condition on the first epoch length such that $\tau_k \geq \tau_1 \geq c\paren{\sigma^2 (\pzeromax) \frac{\sqrt{\dtheta \du}}{\epsmin} \log (HT)}^2$, for a sufficiently large constant  $c$, along with the condition on the exploratory sequence $\sigma^2_k \geq \sqrt{\frac{\du\dtheta}{\tau_k}}$ to 
    show that $C_{\text{est,1}}\frac{\dtheta \sigma^2 (\pzeromax) }{\tau_k \sigma_k^2} \log\paren{HT} \leq \frac{1}{2} \epsmin$. Combining these facts implies that $\norm{\bmat{\hat A^{(h)}_k & \hat B^{(h)}_k} - \bmat{A_\star^{(h)} & B_\star^{(h)}}}_F^2 \leq \epsmin \leq \varepsilon^{(h)}$. Therefore, the conditions of Lemma \ref{lem: CE closeness} are satisfied and we may write 
    \begin{align*}
        \tau_{\text{ls}}(\hat K^{(h)}_{k+1}, x_b^2\log T, \frac{1}{3}T^{-3}) &\leq  2\tau_{\text{ls}}(K_\star^{(h)}, x_b^2\log T, \frac{1}{3} T^{-3}) \text{ and }\norm{P^{(h)}_{\hat K_{k+1}}}  \leq 1.05 (\pzeromax) \leq 2 (\pzeromax).
\end{align*}
where the first is true since the lower bound on $\tau_{\text{ls}}$ scales with $\|P^{(h)}_K\|$. Therefore, by selecting the initial epoch lengh according to $\tau_1 \geq c \tau_{\text{ls}}(K_\star^{(h)}, x_b^2 \log T, \frac{1}{2}T^{-3}),$ for a sufficiently large constant $c$, we can use Theorem \ref{thm: ls estimation error} to obtain, with probability $1 - \frac{1}{3}T^{-3}$, for all tasks $h \in [H]$, the following
\begin{align*}
       &\norm{\bmat{\hat A^{(h)}_{k+1} & \hat B^{(h)}_{k+1}} - \bmat{A^{(h)}_\star & B^{(h)}_\star}}_F^2 \lesssim \frac{\dtheta \sigma^2 (\pzeromax)}{\tau_{k+1} \sigma_{k+1}^2} \log\paren{HT} \\
        &\quad+ \paren{1+ \frac{\sigma^4 (\pzeromax)^9 (\psibmax) \paren{\dx+\du + \log\paren{HT}}}{\tau_{k+1} \sigma_{k+1}^4 } }  \frac{(\pzeromax)^3 (\psibmax)^2 d(\hat \Phi_{k+1}, \Phi_\star)^2 (\thetamax)^2}{\sigma_{k+1}^2},
\end{align*}
where we can use $\norm{P^{(h)}_{\hat K_{k+1}}}  \leq  2 (\pzeromax)$ and control the minimum eigenvalue of the input-state covariance matrix as follows
\begin{align*}
        \underset{h=1,\dots, H}{\min} \lambda_{\min}(\hat \Phi^\top_{k+1} \paren{\bar \Sigma^{\tau_{k+1}}(\hat K^{(h)}_{k+1},\sigma_{k+1},x^{(h)}_{k+1}) \otimes I_{\dx}} \hat \Phi_{k+1}) \geq  \frac{\sigma_{k+1}^2}{8 (\pzeromax)},
\end{align*}
which implies that from the condition $\sigma_{k+1}^2 \geq \tau_{k+1}^{-1/4} H^{-1/6}$ and the definition of $\beta_1$, we obtain
\begin{align*}
       &\norm{\bmat{\hat A^{(h)}_{k+1} & \hat B^{(h)}_{k+1}} - \bmat{A^{(h)}_\star & B^{(h)}_\star}}_F^2 \leq C_{\text{est,1}}\frac{\dtheta \sigma^2 (\pzeromax) }{\tau_{k+1} \sigma_{k+1}^2} \log\paren{HT} + \frac{\beta_1 H^{2/5} d^2(\hat \Phi_{k+1}, \Phi_\star)}{\sigma^2_{k+1}}.
    \end{align*}

Therefore, we proved that since $\mathcal{E}_{\text{ls},k}$ holds under high probability, then  $\mathcal{E}_{\text{ls},k}$ also holds under probability $1 -\frac{1}{3}T^{-3}$. By union bounding for all the epochs we have $\calE_{\mathsf{est},1} \subseteq  \calE_{\text{ls},1} \cap \dots \cap \calE_{\text{ls}, k_{\fin}}$ holds under probability of at least $1 - \frac{1}{3}T^{-2}.$
\vspace{0.2cm}

\item \textbf{The bounds on $\|x^{(h)}_t\|^2$ for $t =\{\tau_k+1,\ldots,\tau_{k+1}\}$ and $K^{(h)}_0$ are not violated:} By following our inductive hypothesis, we have  
\begin{align*}
    \norm{x^{(h)}_{\tau_k}} \leq 16 (\pzeromax)^{3/2} (\psibmax) \max_{1\leq t \leq T} \norm{\bmat{w^{(h)}_t \\ g^{(h)}_t}},
\end{align*}
which combined with $\norm{P^{(h)}_{\hat K_{k+1}}}  \leq  2 (\pzeromax)$ and $\tau_1 \geq c\frac{\log \frac{1}{\pstarmin}}{\log\paren{1- \frac{1}{ \pstarmin}}}$, for a sufficiently large constant $c$, we can exploit Lemma \ref{lem: state rollout bounds} to write
\begin{align}\label{eq:state_norm_first_epoch_1}
        \norm{x^{(h)}_t} \leq 40 (\pzeromax)^2(\psibmax) \max_{1\leq t \leq T} \norm{\bmat{w^{(h)}_t \\ g^{(h)}_t}}, \;\ \forall t = \{\tau_k+1,\ldots,\tau_{k+1}\},
\end{align}
and by using \eqref{eq:noise_bound_mt} in \eqref{eq:state_norm_first_epoch_1}, the state bound satisfies $\norm{x^{(h)}_t}^2 \leq x^2_b\log T$ with probability $1- \frac{1}{3}T^{-2}$, for all tasks $h \in [H]$. Moreover, the controller bound is satisfied since $\norm{\hat K^{(h)}_{k+1}}^2 \leq \norm{P_{\hat K_{k+1}}} \leq 2 (\pzeromax)$, which implies that $\norm{\hat K^{(h)}_{k+1}} \leq K_b$. Therefore, $\mathcal{E}_{\text{bound},k+1}$ holds under probability $1 - \frac{1}{3}T^{-2}$, which implies that  $\calE_{\mathsf{bound}}$ holds under probability of at least $1 - \frac{1}{3}T^{-2}.$

\vspace{0.2cm}
\item \textbf{Controlling the error in the learned representation:} Following our inductive hypothesis on the contraction of the learned representation from the previuos time step, we find that the conditions of Assumption~\ref{assumption: dfw burn-in} are met at the current time step for the current epoch with appropriate choice of $\tau_{\mathsf{warm}}$ which is polynomial in the system parameters stated in \Cref{thm: regret bound naive exploration}. Then we can use \Cref{thm: dfw bound formal} to obtain 
\begin{align}\label{eq:contraction_k+1}
        d\paren{\hat\Phi_{k+1}, \Phi_\star} &\leq \rho^N d\paren{\hat\Phi_{k}, \Phi_\star} + \frac{\Ccontract\sqrt{N}\log(HT)}{\sqrt{H\tau_{k+1}\sigma_{k+1}^2}},
\end{align}
with probability $1 - \frac{1}{3}T^{-3}$. Therefore, by applying  \eqref{eq:induction_hypothesis_contraction} to \eqref{eq:contraction_k+1} we have 
\begin{align*}
        d\paren{\hat\Phi_{k+1}, \Phi_\star} &\leq \rho^N \rho^{kN} d\paren{\hat\Phi_{0}, \Phi_\star} + \frac{\rho^{N}\Ccontract\sqrt{N}}{1-\sqrt{2}\rho^N}\frac{\log(HT)}{\sqrt{H\tau_{k}\sigma_{k}^2}} + \frac{\Ccontract\sqrt{N}\log(HT)}{\sqrt{H\tau_{k+1}\sigma_{k+1}^2}}\\
        &\stackrel{(i)}{\leq} \rho^{(k+1)N} d\paren{\hat\Phi_{0}, \Phi_\star} + \frac{\sqrt{2}\rho^N \Ccontract\sqrt{N}}{1-\sqrt{2}\rho^N}\frac{\log(HT)}{\sqrt{H\tau_{k+1}\sigma_{k+1}^2}} + \frac{\Ccontract \sqrt{N}\log(HT)}{\sqrt{H\tau_{k+1}\sigma_{k+1}^2}}\\
        &= \rho^{(k+1)N} d\paren{\hat\Phi_{0}, \Phi_\star} + \left(1 + \frac{\sqrt{2}\rho^N}{1-\sqrt{2}\rho^N}\right)\frac{\Ccontract \sqrt{N}\log(HT)}{\sqrt{H\tau_{k+1}\sigma_{k+1}^2}}\\
        &= \rho^{(k+1)N} d\paren{\hat\Phi_{0}, \Phi_\star} + \frac{\Ccontract \sqrt{N}}{1-\sqrt{2}\rho^N}\frac{\log(HT)}{\sqrt{H\tau_{k+1}\sigma_{k+1}^2}},
\end{align*}
where $(i)$ follows from the fact that $\tau_{k}\sigma^2_k \geq \frac{1}{2}\tau_{k+1}\sigma^2_{k+1}$. Therefore, we conclude that since $\mathcal{E}_{\text{c},k}$ holds under probability $1 - \frac{1}{3}T^{-3}$, then $\mathcal{E}_{\text{c},k+1}$ also holds under at least the same probability. Then, by union bounding for all the epochs, we have that $\calE_{\mathsf{cont}} \subseteq  \calE_{\text{c},1} \cap \dots \cap \calE_{\text{c}, k_{\fin}}$ holds under probability of at least $1 - \frac{1}{3}T^{-2}.$
\end{itemize}

We complete the proof by union bounding the events $\calE_{\mathsf{bound}}$, $\calE_{\mathsf{est},1}$, and $\calE_{\mathsf{cont}}$. We then have that $\mathcal{E}_{\text{success},1} \subseteq \calE_{\mathsf{bound}} \cap \calE_{\mathsf{est},1} \cap \calE_{\mathsf{cont}}$ holds under probability of at least $1- T^{-2}$.

\end{proof}
\end{lemma}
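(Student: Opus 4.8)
The plan is to prove that each of the three constituent events $\calE_{\mathsf{bound}}$, $\calE_{\mathsf{est},1}$, and $\calE_{\mathsf{cont}}$ holds by a single induction over the epochs $k = 1, \dots, k_{\fin}$, carrying all three hypotheses \emph{simultaneously}, since they are tightly coupled: the estimation-error bound at epoch $k$ presupposes a stabilizing certainty-equivalent controller $\hat K_k^{(h)}$, whose existence requires the representation error $d(\hat\Phi_k,\Phi_\star)$ to have already contracted, while the state bound requires control of the closed-loop value matrix $P_{\hat K_k}^{(h)}$. For the base case $k=1$, I would first invoke the noise bound (\Cref{lem: noise bound}) with failure probability $\tfrac{1}{3}T^{-2}$ to uniformly bound $\snorm{\bmat{w_t^{(h)} \\ g_t^{(h)}}}$; combined with the zero initial state and the state rollout bound (\Cref{lem: state rollout bounds}), this gives $\snorm{x_t^{(h)}}^2 \leq x_b^2 \log T$ and $\snorm{K_0^{(h)}} \leq K_b$ across the first epoch, establishing $\calE_{\mathsf{bound}}$ on $[\,0,\tau_1]$. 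I would then apply the misspecified least-squares theorem (\Cref{thm: ls estimation error}), lower-bounding the excitation $\lambda_{\min}(\hat\Phi_1^\top(\bar\Sigma \otimes I_{\dx})\hat\Phi_1) \gtrsim \sigma_1^2/\pzeromax$ via \Cref{lem: covariance facts}, to reach the variance-plus-bias form of $\calE_{\mathsf{est},1}$; finally, \Cref{asmp: upper bound on representation error exp} supplies the burn-in needed to invoke the \texttt{DFW} guarantee (\Cref{thm: dfw bound formal}) for $\calE_{\mathsf{cont}}$.

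For the inductive step, I would assume the three bounds at epoch $k$ and propagate them to $k+1$. The crux is controlling the estimation error: the misspecification contribution $\beta_1 H^{2/5} d(\hat\Phi_k,\Phi_\star)^2/\sigma_k^2$ must be shown to fall below $\epsmin$. I would substitute the contraction bound from $\calE_{\mathsf{cont}}$ into this term and split it into a purely geometric piece, controlled by the exploration lower bound $\sigma_k^2 \geq \rho^{(k-1)N} d(\Phi_0,\Phi_\star)$ together with \Cref{asmp: upper bound on representation error exp} (which forces $2\beta_1 H^{2/5}\rho^{2N} d(\Phi_0,\Phi_\star) \leq \tfrac14\epsmin$), and a variance piece, controlled by $\sigma_k^2 \geq \tau_k^{-1/4} H^{-1/5}$ and the choice of $\tau_{\mathsf{warm}}$. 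Paired with $\sigma_k^2 \geq \sqrt{\dtheta/(\du\tau_k)}$ forcing the least-squares variance term below $\tfrac12\epsmin$, this yields total error $\leq \epsmin \leq \varepsilon^{(h)}$, so the certainty-equivalence closeness lemma (\Cref{lem: CE closeness}) applies and delivers a stabilizing controller with $\snorm{P_{\hat K_{k+1}}^{(h)}} \leq 2\pzeromax$. This value-matrix bound is precisely what I need to re-run both the state rollout bound (propagating $\calE_{\mathsf{bound}}$ onto $[\tau_k,\tau_{k+1}]$, using the inductive state bound at $\tau_k$) and the least-squares theorem (propagating $\calE_{\mathsf{est},1}$ to epoch $k+1$).

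For the contraction event at $k+1$, I would verify that the \texttt{DFW} burn-in (\Cref{assumption: dfw burn-in}) is met under the inductive hypothesis and a $\tau_{\mathsf{warm}}$ polynomial in the system parameters, apply \Cref{thm: dfw bound formal} to obtain the single-epoch contraction $d(\hat\Phi_{k+1},\Phi_\star) \leq \rho^N d(\hat\Phi_k,\Phi_\star) + \Ccontract\sqrt{N}\log(HT)/\sqrt{H\tau_{k+1}\sigma_{k+1}^2}$, and then unroll it against the inductive bound, collapsing the resulting geometric series of variance terms using $\tau_k\sigma_k^2 \geq \tfrac12\tau_{k+1}\sigma_{k+1}^2$ to recover exactly the form in $\calE_{\mathsf{cont}}$. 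Setting each epoch's failure probability to $\tfrac13 T^{-3}$ and union bounding over the $k_{\fin} \leq \log_2 T$ epochs and the three event families then gives total failure probability at most $T^{-2}$. I expect the main obstacle to be the estimation-error step: the misspecification term entangles the representation error, the exploration level, and $\epsmin$ in a way that closes only because the schedule $\sigma_k^2$ is lower-bounded by all three competing quantities at once, and confirming that a single $\tau_{\mathsf{warm}}$ simultaneously dominates every burn-in arising across the least-squares, covariance, rollout, and \texttt{DFW} lemmas requires careful bookkeeping.
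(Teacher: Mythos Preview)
Your proposal is correct and follows essentially the same approach as the paper: a simultaneous induction over epochs carrying the state/controller bound, the least-squares error bound, and the representation-contraction bound, with the base case and inductive step invoking exactly the lemmas you name (\Cref{lem: noise bound}, \Cref{lem: state rollout bounds}, \Cref{thm: ls estimation error}, \Cref{lem: covariance facts}, \Cref{lem: CE closeness}, \Cref{thm: dfw bound formal}) and the same key manipulations (splitting the misspecification term via the three exploration lower bounds, the geometric collapse via $\tau_k\sigma_k^2 \geq \tfrac12\tau_{k+1}\sigma_{k+1}^2$, and the $\tfrac13 T^{-3}$ per-epoch union bound). Your identification of the estimation-error step as the crux, and the reason the exploration schedule must simultaneously lower-bound all three quantities, matches the paper's argument precisely.
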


\subsection{High Probability Bound on Success Event 2 (Easy to identify parameters)}
\begin{lemma}
    \label{lem: success event bound 2}
    Running \Cref{alg: ce with exploration} with the arguments defined in \Cref{thm: regret bound no exploration}, the event $\calE_{\mathsf{success},2} $ holds with probability at least $1-T^{-2}$.
    \begin{proof}

Analogous to the probability of success event $\mathcal{E}_{\text{success}}$, we show that $\mathcal{E}_{\text{success},2}$ holds with probability $1 - T^{-2}$ by induction. To do so, we show, with high probability, that for every epoch $k \in \left[k_{\text {fin }}\right]$, Algorithm \ref{alg: ce with exploration} does not abort, i.e., the state and controller bounds are satisfied, the least-square estimation error is maintained small and scales according to the bound in $\mathcal{E}_{\text {est},2}$, and the learned common representation contracts towards its optimal as in $\mathcal{E}_{\text {cont}}$. We begin our analysis by studying the first epoch.

\vspace{0.5cm}

\noindent \textbf{Base case:} We consider the first epoch $k=1$ as the base case of the induction approach. For convenience we assume that $x^{(h)}_1 = 0$, for all tasks $h \in [H]$. However, it is worth noting that our proofs can be readily extended to bounded non-zero initial states.

\begin{itemize}
    \item \textbf{The bounds on $\|x^{(h)}_t\|^2$ for $t =\{0,1,\ldots,\tau_1\}$ and $K^{(h)}_0$ are not violated:} We begin our the analysis, by showing with high probability that the state and controller bounds are not violated. In order to ensure that the bounds on the state and controller are not violated, we first bound the worst-case behavior of the process and exploratory noises. For this purpose, we use Lemma \ref{lem: noise bound} to obtain
      \begin{align}\label{eq:noise_bound_mt_success_2}
         \max_{1\leq t \leq T} \norm{\bmat{w^{(h)}_t \\ g^{(h)}_t}} \leq 4\sigma\sqrt{3 (\dx + \du) \log (3HT)}.  
    \end{align}
with probability $1 - \frac{1}{3}T^{-2}$, for all tasks $h \in [H]$. Therefore, since the initial state satisfy 
$$\norm{x^{(h)}_1} \leq 16 (\pzeromax)^{3/2} (\psibmax) \max_{1\leq t \leq T} \norm{\bmat{w^{(h)}_t \\ g^{(h)}_t}},$$ 
and the initial epoch length can be selected such that $\tau_1 \geq \frac{c\log \frac{1}{8 \sqrt{\pstarmin}}}{\log\paren{1- \frac{1}{2 \pstarmin}}}$, for a sufficiently large constant $c$, respectively. We use Lemma \ref{lem: state rollout bounds} to write 
\begin{align}\label{eq:state_norm_first_epoch_success_2}
        \norm{x^{(h)}_t} \leq 40 (\pzeromax)^2 (\psibmax) \max_{1\leq t \leq T} \norm{\bmat{w^{(h)}_t \\ g^{(h)}_t}}, \;\ \forall t = \{0,1,\ldots,\tau_1\}
\end{align}

Therefore, by using \eqref{eq:noise_bound_mt_success_2} in \eqref{eq:state_norm_first_epoch_success_2} we have
\begin{align*}
        \norm{x^{(h)}_t}^2 \leq 76800 (\pzeromax)^4 (\psibmax)^2 \sigma^2 (\dx+ \du)\log (3HT), \;\ \forall t = \{0,1,\ldots,\tau_1\}
\end{align*}
with probability $1- \frac{1}{3}T^{-2}$, for all tasks $h \in [H]$. This implies that the state bound is satisfied, i.e., $\norm{x^{(h)}_t}^2 \leq x^2_b\log T$. On the other hand, for the controller bound, we note that $\|K^{(h)}_0\|^2 \leq \pzeromax \leq 2\pzeromax$, which implies that $\|K^{(h)}_0\| \leq K_b$. Therefore, $\mathcal{E}_{\text{bound},1}$ (i.e., the event where the state and controller bounds are satisfied at the first epoch) holds with probability $1-\frac{1}{3}T^{-2}$.
\vspace{0.2cm}

\item \textbf{Controlling the least-square estimation error:} To control the estimation error at the first epoch, we can use Theorem \ref{thm: ls estimation error}. In addition, a condition $\tau_{\mathsf{warm\_ up }} \geq \sigma^4 (\pzeromax)^3(\psibmax)^2  (\dx+\du)$ implies that $\tau_1 \geq c \tau_{\text{ls}}(K_0, 0, \frac{1}{3}T^{-3})$, for a sufficiently large constant $c$. Then,  from Theorem \ref{thm: ls estimation error}, we have

\begin{align*}
        &\norm{\bmat{\hat A^{(h)}_1 & \hat B^{(h)}_1} - \bmat{A^{(h)}_\star & B^{(h)}_\star}}_F^2 \lesssim \frac{\dtheta \sigma^2 \log (HT)  }{\tau_1   \underset{h=1,\dots, H}{\min}\lambda_{\min}(\hat \Phi^\top_1\paren{\bar \Sigma^{\tau_1}_h(K^{(h)}_0,\sigma^2_1,0) \otimes I_{\dx}} \hat \Phi_1)}  \\
        &+ \paren{1+\frac{\sigma^4 (\pzeromax)^7 (\psibmax)^6 \paren{\dx+\du + \log (HT)}}{\tau_1 \underset{h=1,\dots, H}{\min}\lambda_{\min}(\hat \Phi^\top_1 \paren{\bar \Sigma^{\tau_1}_h(K^{(h)}_0,\sigma^2_1,0) \otimes I_{\dx}} \hat \Phi_1)^2 } } \frac{(\pzeromax)^2 (\psibmax)^2 d(\hat \Phi_1, \Phi_\star)^2 (\thetamax)^2}{ \underset{h=1,\dots, H}{\min}\lambda_{\min}(\hat \Phi^\top_1 \paren{\bar \Sigma^{\tau_1}_h(K^{(h)}_0,\sigma^2_1,0) \otimes I_{\dx}} \hat \Phi_1)} .
    \end{align*}
with probability $1 - \frac{1}{3}T^{-3}$, for all tasks $h \in [H]$. The rate of the decay in the estimation error is controlled by the minimum eigenvalue of the input-state covariance matrix. The main difference between this proof to the one for $\mathcal{E}_{\text{success},1}$ is on the lower bound of minimum eigenvalue of the input-state covariance matrix. Here, we note that for any unit vector $v$,
\begin{equation*}
    \begin{aligned}
        %\label{eq: excitation reduction}
        \underset{h=1,\dots, H}{\min} v^\top \hat \Phi^\top_1 \paren{\bar \Sigma^{\tau_1}_h(K^{(h)}_0,\sigma^2_1,0) \otimes I_{\dx}} \hat \Phi_1 v &\geq \underset{h=1,\dots, H}{\min} \frac{1}{2}  v \hat \Phi^\top_1 \paren{\bmat{I \\ K^{(h)}_0} \bmat{I \\ K^{(h)}_0}^\top \otimes I_{\dx}} \hat \Phi_1 v \\
        &=\underset{h=1,\dots, H}{\min} \frac{1}{2} \norm{G \hat \Phi_1^\top v}^2
\end{aligned}
\end{equation*}
where $G \triangleq \paren{\bmat{I \\ K^{(h)}_0} \bmat{I \\ K^{(h)}_0}^\top \otimes I_{\dx}}^{1/2}$. Using the fact that $\Phi_\star \Phi_\star^\top + \Phi_{\star, \perp} \Phi_{\star,\perp}^\top$, we note that for any unit vector $v$
\begin{equation}
\label{eq: cov lower bound w/ rep error}
\begin{aligned}
\norm{G \hat \Phi_1^\top v}^2 &= \norm{G (\Phi_\star \Phi_\star^\top + \Phi_{\star, \perp} \Phi_{\star,\perp}^\top) \hat \Phi_1^\top v}^2 \\&\geq \frac{1}{2} \norm{G \Phi_\star \Phi_\star^\top  \hat \Phi_1^\top v}^2 - \norm{G \Phi_{\star, \perp} \Phi_{\star,\perp}^\top \hat \Phi_1^\top v}^2 \\
    &\geq \frac{1}{2}  \lambda_{\min}(\Phi_\star^\top G^2 \Phi_\star) \norm{\Phi_\star^\top \hat \Phi_1 v}^2 - \norm{G}^2 d(\hat \Phi_1, \Phi_\star)^2 ,
\end{aligned}
\end{equation}
where $\tilde v = \frac{\Phi_\star^\top \hat \Phi_1 v}{\norm{\Phi_\star^\top \hat \Phi_1 v}}$. By Assumption~\ref{asmpt: persisent excitation}, $\lambda_{\min}(\Phi_\star^\top G^2 \Phi_\star) \geq \alpha^2$. Additionally, note that
\begin{align*}
    \norm{\Phi_\star^\top \hat \Phi_1 v}^2  &= v^\top \hat \Phi_1^\top \Phi_\star \Phi_\star^\top \hat\Phi_1 v \\
    &= v^\top \hat\Phi_1^\top \hat \Phi_1 v - v^\top \hat \Phi_1^\top \Phi_{\star,\perp} \Phi_{\star,\perp}^\top \hat\Phi_1 v \geq 1 - d(\hat \Phi_1, \Phi_\star)^2. 
\end{align*}
By the assumptions on the initial representation, and the size of the initial epoch, we ensure that $d(\hat\Phi_1, \Phi_\star)^2 \leq \min\curly{\frac{1}{2}, \frac{\alpha^2}{64 (1 + P_0^\wedge)}}$. Therefore, by combining the above sequence of inequalities, we have that 
\begin{equation*}%\label{eq:minimum_eigenvalue_no_exp_mt}
   \underset{h=1,\dots, H}{\min} \lambda_{\min}(\hat \Phi^\top_1 \paren{\bar \Sigma^{\tau_1}_h(K^{(h)},0,0) \otimes I_{\dx}} \hat \Phi_1) \geq \frac{\alpha^2}{16},
\end{equation*}
which implies that
\begin{align*}
        &\norm{\bmat{\hat A^{(h)}_1 & \hat B^{(h)}_1} - \bmat{A^{(h)}_\star & B^{(h)}_\star}}_F^2 \lesssim \frac{\dtheta \sigma^2 }{\tau_1 \alpha^2} \log\paren{HT} \\
        &\quad+ \paren{1+ \frac{\sigma^4 (\pzeromax)^7 (\psibmax)^6 \paren{\dx+\du + \log\paren{HT}}}{\tau_1 \alpha^4 } } \frac{(\pzeromax)^2 (\psibmax)^2 d(\hat \Phi_1, \Phi_\star)^2 (\thetamax)^2}{\alpha^2}.
    \end{align*}
and by using a condition on the initial epoch length $\tau_1 \geq \frac{c\sigma^2 \dtheta \log(HT)}{2 \epsmin \alpha^2}$, for a sufficiently large constant $c$, we have 
\begin{align*}
        &\norm{\bmat{\hat A^{(h)}_1 & \hat B^{(h)}_1} - \bmat{A^{(h)}_\star & B^{(h)}_\star}}_F^2 \leq C_{\mathsf{est},2} \frac{\sigma^2 \dtheta \log (HT)}{\tau_1 \alpha^2} + \beta_2 d(\hat \Phi_1, \Phi_\star)^2,
    \end{align*}
where $\beta_2 \triangleq C_{\mathsf{bias},2} \frac{\sigma^2\epsmin (\psibmax)^8 (\thetamax)^2 (\dx+\du)}{\dtheta \min\curly{\alpha^2, \alpha^4}}$. Then, by defining the event $\mathcal{E}_{\text{ls},1}$ where the above estimation error bound holds for the first epoch, we have that  $\mathcal{E}_{\text{ls},1}$ holds with probability $1 - \frac{1}{3}T^{-3}$ for all tasks $h \in [H]$. 

\vspace{0.2cm}
\item \textbf{Controlling the contraction in the learned representation:} For the first epoch, we initialize the representation as $\hat{\Phi}_0$. Then, Algorithm \ref{alg: ce with exploration} play $K^{(h)}_0$ for all tasks $h \in [H]$ to collect a multi-task dataset that is leveraged to update the representation $\hat\Phi_1$ with $N$ iterations of Algorithm \ref{alg: dfw}. By the bound on the initial representation error from Assumption~\ref{asmp: upper bound on representation error no exp} and the length of the initial epoch determined by $\tau_{\mathsf{warm\,up}}$, we can use \Cref{thm: dfw bound formal} to obtain with probability $1 -\frac{1}{3}T^{-3}$ that
\begin{align*}
        d\paren{\hat\Phi_1, \Phi_\star} &\leq \rho^N d\paren{\hat\Phi_0, \Phi_\star} + \frac{C_{\mathsf{contract}}\sqrt{N}\log(HT)}{\sqrt{H\tau_1\sigma_1^2}}.
\end{align*}
Denote this event by $\mathcal{E}_{\text{c},1}$.
\end{itemize}
\vspace{0.2cm}
\noindent \textbf{Induction step:} We now use an induction step with the following inductive hypothesis:

\begin{align}\label{eq:inductive_hypotesis_state_control_bounds_success_2}
   \text{\textbf{Bounded state}:}\;\ \norm{x^{(h)}_{\tau_k}} \leq 16 (\pzeromax)^{3/2} (\psibmax) \max_{1\leq t \leq T} \norm{\bmat{w^{(h)}_t \\ g^{(h)}_t}},
\end{align}

\begin{align}\label{eq:inductive_hypotesis_estimation_error_success_2}
       \text{\textbf{Least-square error}:} \norm{\bmat{\hat A^{(h)}_k  \hat B^{(h)}_k} - \bmat{A^{(h)}_\star  B^{(h)}_\star}}_F^2 \leq C_{\mathsf{est},2} \frac{\sigma^2 \dtheta \log (HT)}{\tau_k \alpha^2} + \beta_2 d(\hat \Phi_k, \Phi_\star)^2,
\end{align}
and
\begin{align}\label{eq:induction_hypothesis_contraction_success_2}
    \text{\textbf{Representation error}:}\;\ d\paren{\hat\Phi_k, \Phi_\star} &\leq \rho^{kN} d\paren{\hat\Phi_0, \Phi_\star} + \frac{C_{\mathsf{contract}}\sqrt{N}}{1-\sqrt{2}\rho^N}\frac{\log(HT)}{\sqrt{H\tau_k\sigma^2_k}},
\end{align}

\begin{itemize}
    \item \textbf{Controlling the least-square estimation error:} To control the estimation error throughout the epochs, we first note that $\sigma_k^2 \geq \tau_k^{-1/2} H^{-1/2}$, we have that $\tau_k \sigma_k^2 \geq \tau_k^{1/2} H^{-1/2} \geq \tau_1^{1/2} H^{-1/2}$. Using the condition $\tau_1 \geq 8 \frac{C_{\mathsf{contract}}^4 N^2 \log^4(HT)}{(1-\sqrt{2}\rho^N)^4H^{1/2}d(\hat \Phi_{0}, \Phi_\star)^4}$, we then have $d\paren{\hat\Phi_k, \Phi_\star} \leq  d\paren{\hat\Phi_0, \Phi_\star}$ for all $k \in \left[k_{\text {fin }}\right]$. Moreover, we can select the first epoch length as $\tau_1 \geq \frac{c\sigma^2 \dtheta \log(HT)}{2 \epsmin \alpha^2}$, for a sufficiently large constant $c$, along with the initial representation error $d(\hat \Phi_0, \Phi_\star) \leq \sqrt{\frac{\epsmin}{2 \beta_2}}$ to obtain $\norm{\bmat{\hat A^{(h)}_k & \hat B^{(h)}_k} - \bmat{A_\star^{(h)} & B_\star^{(h)}}}_F^2 \leq \epsmin \leq \varepsilon^{(h)}$. Therefore, the conditions of Lemma \ref{lem: CE closeness} are satisfied and we can write
    \begin{align*}
        \tau_{\text{ls}}(\hat K^{(h)}_{k+1}, x_b^2\log T, \frac{1}{3}T^{-3}) &\leq  2\tau_{\text{ls}}(K^{(h)}_\star, x_b^2\log T, \frac{1}{3} T^{-3}) \text{ and }  \norm{P^{(h)}_{\hat K_{k+1}}}  \leq 1.05 (\pzeromax) \leq 2 (\pzeromax).
\end{align*}
where the first is due to the fact that the lower bound on $\tau_{\text{ls}}$ scales with $\|P^{(h)}_K\|$. Therefore, by setting the first epoch length such that $\tau_1 \geq c \tau_{\text{ls}}(K^{(h)}_\star, x_b^2 \log T, \frac{1}{2}T^{-3}),$ for a sufficiently large universal constant $c$, we use Theorem \ref{thm: ls estimation error} to obtain
\begin{align*}
       &\norm{\bmat{\hat A^{(h)}_{k+1} & \hat B^{(h)}_{k+1}} - \bmat{A^{(h)}_\star & B^{(h)}_\star}}_F^2 \lesssim \frac{\dtheta \sigma^2 \log (HT)  }{\tau_{k+1}   \underset{h=1,\dots, H}{\min}\lambda_{\min}\left(\hat \Phi^\top_{k+1}\paren{\bar \Sigma^{\tau_{k+1}}(K^{(h)}_0,\sigma^2_{k+1},x^{(h)}_{\tau_{k+1}}) \otimes I_{\dx}} \hat \Phi_{k+1}\right)}  \\
        &+ \paren{1+\frac{\sigma^4 (\pzeromax)^7 (\psibmax)^6 \paren{\dx+\du + \log (HT)}}{\tau_{k+1} \underset{h=1,\dots, H}{\min}\lambda_{\min}\left(\hat \Phi^\top_{k+1} \paren{\bar \Sigma^{\tau_{k+1}}(K^{(h)}_0,\sigma^2_{k+1},x^{(h)}_{\tau_{k+1}}) \otimes I_{\dx}} \hat \Phi_{k+1}\right)^2 } }\\
        &\times \frac{(\pzeromax)^2 (\psibmax)^2 d(\hat \Phi_1, \Phi_\star)^2 (\thetamax)^2}{ \underset{h=1,\dots, H}{\min}\lambda_{\min}\left(\hat \Phi^\top_{k+1} \paren{\bar \Sigma^{\tau_{k+1}}(K^{(h)}_0,\sigma^2_{k+1},x^{(h)}_{\tau_{k+1}}) \otimes I_{\dx}} \hat \Phi_{k+1}\right)},
\end{align*}
with probability $1 - \frac{1}{3}T^{-3}$ for all tasks $h \in [H]$. In the above expression we also use $\norm{P^{(h)}_{\hat K_{k+1}}}  \leq  2 (\pzeromax)$. We control the minimum eigenvalue of the input-state covariance matrix as follows:
since $\lambda_{\min}(\Phi_\star^\top \paren{\bar \Sigma^{\tau_{k+1}} 
\paren{\bmat{I \\ \hat K_{k+1}^{(h)}} \bmat{I \\ \hat K_{k+1}^{(h)}}^\top \otimes I_{\dx}}} \Phi_\star)  =\inf_v \norm{\sum_{i=1}^{d_{\theta}} v_i (\Phi_{\star,i}^A + \Phi_{\star, i}^B \hat K^{(h)}_{k+1})}_F^2$ and 
\begin{align*}
       \underset{h=1,\dots, H}{\min} \norm{\sum_{i=1}^{\dtheta} v_i \paren{ \Phi^{A}_{\star,i} + \Phi^{B}_{\star, i} K^{(h)\star} } +\sum_{i=1}^{\dtheta} v_i \Phi^B_{\star, i} (\hat K^{(h)}_{k+1} - K^{(h)\star}) }_F &\geq \alpha - \norm{\hat K^{(h)}_{k+1} - K^{(h)\star}}\\
       &\geq \alpha - \frac{1}{6(\pzeromax)^{3/2}} \geq \frac{\alpha}{2}.
    \end{align*}
it holds from \eqref{eq: cov lower bound w/ rep error} along with the condition on the initial representation error that
\begin{align*}
    \lambda_{\min}\paren{\hat \Phi^\top_{k+1} \paren{\bar \Sigma^{\tau_1}_h(K^{(h)}_k+1,\sigma^2_{k+1},x_{\tau_{k+1}}^{(h)}) \otimes I_{\dx}} \hat \Phi_{k+1} } \geq \frac{\alpha^2}{64}. 
\end{align*}

\begin{align*}
       &\norm{\bmat{\hat A^{(h)}_{k+1} & \hat B^{(h)}_{k+1}} - \bmat{A^{(h)}_\star & B^{(h)}_\star}}_F^2 \leq C_{\mathsf{est},2} \frac{\sigma^2 \dtheta \log (HT)}{\tau_{k+1} \alpha^2} + \beta_2 d(\hat \Phi_{k+1}, \Phi_\star)^2,
\end{align*}

Therefore, since $\mathcal{E}_{\text{ls},k}$ holds with high probability, then  $\mathcal{E}_{\text{ls},k}$ also holds with probability $1 -\frac{1}{3}T^{-3}$. This implies that $\calE_{\mathsf{est},2} \subseteq  \calE_{\text{ls},1} \cap \dots \cap \calE_{\text{ls}, k_{\fin}}$ holds with probability of at least $1 - \frac{1}{3}T^{-2}$ for all tasks $h \in [H]$.
\vspace{0.2cm}

\item \textbf{The bounds on $\|x^{(h)}_t\|^2$ for $t =\{\tau_k + 1,\ldots,\tau_{k+1}\}$ and $K^{(h)}_0$ are not violated:} By following our inductive hypothesis, we have  
\begin{align*}
    \norm{x^{(h)}_{\tau_k}} \leq 16 (\pzeromax)^{3/2} (\psibmax) \max_{1\leq t \leq T} \norm{\bmat{w^{(h)}_t \\ g^{(h)}_t}},
\end{align*}
which combined with $\norm{P^{(h)}_{\hat K_{k+1}}}  \leq  2 (\pzeromax)$ and $\tau_1 \geq c\frac{\log \frac{1}{8 \sqrt{\pstarmin}}}{\log\paren{1- \frac{1}{2 \pstarmin}}}$, for a sufficiently large constant $c$, we can use Lemma \ref{lem: state rollout bounds} to write
\begin{align}\label{eq:state_norm_first_epoch_2}
        \norm{x^{(h)}_t} \leq 40 (\pzeromax)^2 (\psibmax) \max_{1\leq t \leq T} \norm{\bmat{w^{(h)}_t \\ g^{(h)}_t}}, \;\ \forall t = \{\tau_k+1,\ldots,\tau_{k+1}\},
\end{align}
and by using \eqref{eq:noise_bound_mt_success_2} in \eqref{eq:state_norm_first_epoch_2}, the state bound is satisfied , i.e., $\norm{x^{(h)}_t}^2 \leq x^2_b\log T$, with probability $1- \frac{1}{3}T^{-2}$, for all tasks $h \in [H]$. Moreover, the controller bound is verified since $\norm{\hat K^{(h)}_{k+1}}^2 \leq \norm{P_{\hat K^{(h)}_{k+1}}} \leq 2 (\pzeromax)$, which implies that $\norm{\hat K^{(h)}_{k+1}} \leq K_b$. Then, $\mathcal{E}_{\text{bound},k+1}$ holds with probability $1 - \frac{1}{3}T^{-2}$, which implies that $\calE_{\mathsf{bound}}$ holds with probability of at least $1 - \frac{1}{3}T^{-2}$, for all tasks $h \in [H]$.

\vspace{0.2cm}
\item \textbf{Controlling the error in the learned representation:} Following our inductive hypothesis on the contraction of the learned representation, we find that the conditions of Assumption~\ref{assumption: dfw burn-in} are met at the current epoch. We may therefore apply \Cref{thm: dfw bound formal} to obtain 

\begin{align}\label{eq:contraction_k+1_success_2}
        d\paren{\hat\Phi_{k+1}, \Phi_\star} &\leq \rho^N d\paren{\hat\Phi_{k}, \Phi_\star} + C_{\mathsf{contract}}\frac{\sqrt{N}\log(HT)}{\sqrt{H\tau_{k+1}\sigma_{k+1}^2}},
\end{align}
with probability $1 - \frac{1}{3}T^{-3}$. Therefore, by applying  \eqref{eq:induction_hypothesis_contraction_success_2} to \eqref{eq:contraction_k+1_success_2} we have 
\begin{align*}
        d\paren{\hat\Phi_{k+1}, \Phi_\star} &\leq \rho^N \rho^{kN} d\paren{\hat\Phi_{0}, \Phi_\star} + \frac{\rho^{N}C_{\mathsf{contract}}\sqrt{N}}{1-\sqrt{2}\rho^N}\frac{\log(HT)}{\sqrt{H\tau_{k}\sigma_{k}^2}} + C_{\mathsf{contract}}\frac{\sqrt{N}\log(HT)}{\sqrt{H\tau_{k+1}\sigma_{k+1}^2}}\\
        &\stackrel{(i)}{\leq} \rho^{(k+1)N} d\paren{\hat\Phi_{0}, \Phi_\star} + \frac{\sqrt{2}\rho^N C_{\mathsf{contract}}\sqrt{N}}{1-\sqrt{2}\rho^N}\frac{\log(HT)}{\sqrt{H\tau_{k+1}\sigma_{k+1}^2}} + C_{\mathsf{contract}}\frac{\sqrt{N}\log(HT)}{\sqrt{H\tau_{k+1}\sigma_{k+1}^2}}\\
        &= \rho^{(k+1)N} d\paren{\hat\Phi_{0}, \Phi_\star} + \left(1 + \frac{\sqrt{2}\rho^N}{1-\sqrt{2}\rho^N}\right)C_{\mathsf{contract}}\frac{\sqrt{N}\log(HT)}{\sqrt{H\tau_{k+1}\sigma_{k+1}^2}}\\
        &= \rho^{(k+1)N} d\paren{\hat\Phi_{0}, \Phi_\star} + \frac{C_{\mathsf{contract}}}{1-\sqrt{2}\rho^N}\frac{\sqrt{N}\log(HT)}{\sqrt{H\tau_{k+1}\sigma_{k+1}^2}},
\end{align*}
where $(i)$ follows from the fact that $\tau_{k}\sigma^2_k \geq \frac{1}{2}\tau_{k+1}\sigma^2_{k+1}$. Therefore, we conclude that since $\mathcal{E}_{\text{c},k}$ holds with probability $1 - \frac{1}{3}T^{-3}$, then $\mathcal{E}_{\text{c},k+1}$ also holds with at least the same probability. Then, by union bounding for all the epochs, we have that $\calE_{\mathsf{cont}} \subseteq  \calE_{\text{c},1} \cap \dots \cap \calE_{\text{c}, k_{\fin}}$ holds under probability of at least $1 - \frac{1}{3}T^{-2}.$
\end{itemize}

We complete the proof by union bounding the events $\calE_{\mathsf{bound}}$, $\calE_{\mathsf{est},1}$, and $\calE_{\mathsf{cont}}$. Then, we have that $\mathcal{E}_{\text{success},2} \subseteq \calE_{\mathsf{bound}} \cap \calE_{\mathsf{est},2} \cap \calE_{\mathsf{cont}}$ holds under probability of at least $1- T^{-2}$.

\end{proof}
\end{lemma}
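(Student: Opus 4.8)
The plan is to show that the three constituent events $\calE_{\mathsf{bound}}$, $\calE_{\mathsf{est},2}$, and $\calE_{\mathsf{cont}}$ each hold with high probability by induction over the epochs $k = 1, \dots, k_{\fin}$, and then conclude via a union bound. At each epoch I would carry an inductive hypothesis asserting that (i) the terminal state $\snorm{x_{\tau_k}^{(h)}}$ is controlled in terms of the worst-case noise, (ii) the least-squares estimation error obeys the $\calE_{\mathsf{est},2}$ bound with the persistent-excitation denominator $\alpha^2$, and (iii) the subspace distance $d(\hat\Phi_k, \Phi_\star)$ satisfies the geometric-contraction-plus-variance bound of $\calE_{\mathsf{cont}}$. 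The supporting tools are the noise bound (\Cref{lem: noise bound}), the state rollout bounds (\Cref{lem: state rollout bounds}), the misspecified LS error bound (\Cref{thm: ls estimation error}), the certainty-equivalence closeness lemma (\Cref{lem: CE closeness}), and the \texttt{DFW} contraction guarantee (\Cref{thm: dfw bound formal}), each invoked with failure probability $\tfrac{1}{3}T^{-3}$ per epoch so that the per-epoch budgets sum to $T^{-2}$ after union bounding over $k\in[k_{\fin}]$ and $h\in[H]$.

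For the base case I would first invoke \Cref{lem: noise bound} to obtain $\max_t \snorm{\bmat{w_t^{(h)} \\ g_t^{(h)}}} \lesssim \sigma \sqrt{(\dx+\du)\log(HT)}$ uniformly in $h$, then apply \Cref{lem: state rollout bounds} under $K_0^{(h)}$, using $\norm{P_{K_0}^{(h)}} \leq \pzeromax$ and a sufficiently long $\tau_1$, to certify $\snorm{x_t^{(h)}}^2 \leq x_b^2 \log T$ and $\snorm{K_0^{(h)}} \leq K_b$, giving $\calE_{\mathsf{bound},1}$. The distinguishing step, relative to the hard-to-identify case, is to lower bound the minimum eigenvalue of $\hat\Phi_1^\top(\bar\Sigma^{\tau_1}_h(K_0^{(h)},\sigma_1^2,0) \otimes I_{\dx})\hat\Phi_1$ not by the exploration level but by the excitation constant $\alpha^2$ of \Cref{asmpt: persisent excitation}. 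Setting $G \triangleq (\bmat{I \\ K_0^{(h)}}\bmat{I \\ K_0^{(h)}}^\top \otimes I_{\dx})^{1/2}$, inserting $\Phi_\star\Phi_\star^\top + \Phi_{\star,\perp}\Phi_{\star,\perp}^\top = I$, and using $\snorm{G\hat\Phi_1^\top v}^2 \geq \tfrac12 \lambda_{\min}(\Phi_\star^\top G^2 \Phi_\star)\snorm{\Phi_\star^\top \hat\Phi_1 v}^2 - \snorm{G}^2 d(\hat\Phi_1, \Phi_\star)^2$ together with $\snorm{\Phi_\star^\top \hat\Phi_1 v}^2 \geq 1 - d(\hat\Phi_1,\Phi_\star)^2$, I peel off an $\alpha^2$ term minus a representation-error penalty. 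Provided the initial subspace distance is small enough (\Cref{asmp: upper bound on representation error no exp}), this yields a uniform lower bound of order $\alpha^2$, which I substitute into \Cref{thm: ls estimation error} to get the $\calE_{\mathsf{est},2}$ bound $\lesssim C_{\mathsf{est},2}\sigma^2 \dtheta \log(HT)/(\tau_1 \alpha^2) + \beta_2 d(\hat\Phi_1, \Phi_\star)^2$. Finally I apply \Cref{thm: dfw bound formal} with $\delta = \tfrac13 T^{-3}$, whose burn-in (\Cref{assumption: dfw burn-in}) is met for $\tau_1 = \tau_{\mathsf{warm}}\log^4 T$, to establish the base contraction.

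For the induction step I would use the inductive LS bound together with the exploration schedule $\sigma_k^2 = \max\{\tau_k^{-1/2}H^{-1/2}, \rho^{(k-1)N}d(\Phi_0, \Phi_\star)\}$ and the cap $d(\Phi_0, \Phi_\star) \leq \sqrt{\epsmin/(2\beta_2)}$ to certify $\norm{\bmat{\hat A_k^{(h)} & \hat B_k^{(h)}}-\bmat{A_\star^{(h)} & B_\star^{(h)}}}_F^2 \leq \epsmin \leq \varepsilon^{(h)}$, which triggers \Cref{lem: CE closeness} and simultaneously delivers $\norm{P_{\hat K_{k+1}}^{(h)}} \leq 2\pzeromax$ and $\norm{\hat K_{k+1}^{(h)} - K_\star^{(h)}} \leq \tfrac{1}{6\norm{P_\star^{(h)}}^{3/2}}$. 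The former lets me re-run the rollout argument to extend $\calE_{\mathsf{bound}}$ to epoch $k+1$ and re-derive the LS bound with the new controller; the latter preserves excitation, since $\lambda_{\min}^{1/2}(\Phi_\star^\top(\bmat{I \\ K_\star^{(h)}}\bmat{I \\ K_\star^{(h)}}^\top \otimes I_{\dx})\Phi_\star) \geq \alpha$ and a triangle inequality gives the excited directions under $\hat K_{k+1}^{(h)}$ a margin of at least $\alpha - \tfrac{1}{6\norm{P_\star^{(h)}}^{3/2}} \geq \alpha/2$, so the covariance computation carries over to $\lambda_{\min} \geq \alpha^2/64$ and hence the $\calE_{\mathsf{est},2}$ bound. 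For $\calE_{\mathsf{cont}}$ I apply \Cref{thm: dfw bound formal} once more and fold the single-epoch contraction into the inductive bound, using $\tau_k \sigma_k^2 \geq \tfrac12 \tau_{k+1}\sigma_{k+1}^2$ to telescope the geometric sum and reproduce the $\calE_{\mathsf{cont}}$ expression with factor $\rho^{(k+1)N}$ and prefactor $1/(1-\sqrt2\rho^N)$.

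The main obstacle I anticipate is the propagation of persistent excitation across epochs: I must ensure that the certainty-equivalent controller $\hat K_{k+1}^{(h)}$, which depends on the noisy model and hence on both the LS error and the representation error, stays within the $\tfrac{1}{6\norm{P_\star^{(h)}}^{3/2}}$ ball around $K_\star^{(h)}$ so the $\alpha/2$ margin never collapses, while simultaneously keeping the penalty $\snorm{G}^2 d(\hat\Phi_k, \Phi_\star)^2$ subtracted in the covariance lower bound dominated by the $\alpha^2$ term. This is a circular coupling, the excitation feeds the LS error, the LS error feeds the controller error through \Cref{lem: CE closeness}, and the controller error feeds back into the excitation, so the delicacy lies in choosing $\tau_{\mathsf{warm}}$ large enough and the caps in \Cref{asmp: upper bound on representation error no exp} small enough that the induction closes with room to spare, all while keeping the accumulated failure probability at most $T^{-2}$.
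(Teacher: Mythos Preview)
Your proposal is correct and follows essentially the same approach as the paper's proof: the same induction over epochs, the same covariance lower-bound trick via $\Phi_\star\Phi_\star^\top + \Phi_{\star,\perp}\Phi_{\star,\perp}^\top = I$ to extract the $\alpha^2$ excitation floor, the same preservation-of-excitation argument $\alpha - \tfrac{1}{6\norm{P_\star^{(h)}}^{3/2}} \geq \alpha/2$ via \Cref{lem: CE closeness}, and the same telescoping of the \texttt{DFW} contraction using $\tau_k\sigma_k^2 \geq \tfrac12 \tau_{k+1}\sigma_{k+1}^2$. Your identification of the circular coupling between excitation, LS error, and controller error as the delicate point is exactly what the paper's induction is designed to close.
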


\section{Synthesizing the regret bounds}
\label{s: synthesizing the bounds}  

We use the success events to decompose the expected regret as in \cite{cassel2020logarithmic}: 
% \begin{align*}
$
    \E \brac{\calR_T^{(h)}} = R_1^{(h)} + R_2^{(h)}+ R_3^{(h)} - T\calJ(K_\star^{(h)}), %\underbrace{\sum_{t=1}^{\epoch} x_t^\top Q x_t  + u_t^\top R u_t - \epoch \calJ^\star}_{\mbox{First round regret}} + \underbrace{\sum_{t=\epoch+1}^T \paren{ x_t^\top Q x_t  + u_t^\top R u_t} - (T-\epoch) \calJ^\star.}_{\mbox{Subsequent round regret}}
$
%\end{align*}
where for $\calE_{\mathsf {success}} = \calE_{\mathsf{success},1}$ or $\calE_{\mathsf {success}} =\calE_{\mathsf{success}, 2}$,
\begin{align}
    \label{eq: regret decomposition}
    R_1^{(h)} = \E \brac{\mathbf{1}(\calE_{\mathsf{success}}) \sum_{k=2}^{k_\fin} J_k^{(h)}}, \quad R_2^{(h)} =  \E \brac{\mathbf{1}(\calE_{\mathsf{success}}^c) \sum_{t=\tau_1 + 1}^{T} c_t^{(h)}}, \quad  \mbox{ and } \quad R_3^{(h)} = \E \brac{\sum_{t=1}^{\tau_1} c_t^{(h)}}.
\end{align}
Here, $J_k^{(h)} = \sum_{t=\tau_{k}}^{\tau_{k+1}-1} c_t^{(h)}$ is the epoch cost 
and $c_t^{(h)}= x_t^{(h) \top} Q x_t^{(h)} + u_t^{(h) \top} R u_t^{(h)}$ is the stage cost. 

The terms $R_2^{(h)}$ and $R_3^{(h)}$ may be bounded directly by invoking Lemmas 20 and 22 of \cite{lee2023nonasymptotic} along with the high probability bounds of \Cref{lem: success event bound 1} and \Cref{lem: success event bound 2}. It therefore remains to bound $R_1^{(h)} - T \calJ^{(h)}(K_\star^{(h)})$. This is done for the settings of \Cref{thm: regret bound naive exploration} and \Cref{thm: regret bound no exploration} in the following two lemmas. 

\begin{lemma}\label{lem: success event regret bound 1}
In the setting of \Cref{thm: regret bound naive exploration}, we have
\begin{align*}
    R_1^{(h)} - T \calJ^{(h)}(K_\star^{(h)}) &\leq  \texttt{poly}\paren{\dx, \du,\pzeromax, \psibmax, \tau_{\mathsf{warm\,up}}, x_b, \frac{1}{1-\sqrt{2}\rho^N}, d(\hat \Phi_0,\Phi_\star)} \log^2 T \\
        &+ \texttt{poly}\paren{\pzeromax, \psibmax, \sigma}\sqrt{\dtheta \du} \sqrt{T} \log T \\
        &+ \texttt{poly}\paren{\dx, \du, \dtheta, \pzeromax, \psibmax, \thetamax, \sigma, \frac{1}{1-\sqrt{2}\rho^N}, N}\frac{T^{3/4}}{H^{1/5}} \log^2(TH).
\end{align*}
\end{lemma}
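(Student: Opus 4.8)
The plan is to bound, for each epoch $k\ge 2$, the quantity $\E[\mathbf 1(\calE_{\mathsf{success},1})J_k^{(h)}] - (\tau_{k+1}-\tau_k)\calJ^{(h)}(K_\star^{(h)})$ and then sum over the geometric schedule $\tau_{k+1}-\tau_k=\tau_k$. First I would invoke a single-task epoch-cost decomposition, mirroring the $R_2,R_3$ analysis of \citet{lee2023nonasymptotic}: on $\calE_{\mathsf{success},1}$ the played controller $\hat K_k^{(h)}$ is stabilizing with $\norm{P_{\hat K_k}^{(h)}}\le 2\pzeromax$, so
\[
\E[\mathbf 1(\calE_{\mathsf{success},1})J_k^{(h)}] \le (\tau_{k+1}-\tau_k)\calJ^{(h)}(\hat K_k^{(h)}) + (\tau_{k+1}-\tau_k)\,\sigma_k^2\,\texttt{poly}(\pzeromax,\psibmax,\du) + \xi_k,
\]
where the middle term is the cost of injecting the exploratory input $\sigma_k g_t^{(h)}$ (scaling with $\trace(R + B_\star^{(h)\top}P_{\hat K_k}^{(h)}B_\star^{(h)})$) and $\xi_k=\texttt{poly}(\cdots)\log T$ collects the burn-in cost of the non-stationary portion of the rollout. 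Subtracting the baseline and applying the certainty-equivalence perturbation bound (\Cref{lem: CE closeness}, applicable since $\calE_{\mathsf{est},1}$ keeps the estimation error below $\epsmin\le\varepsilon^{(h)}$) gives
\[
\calJ^{(h)}(\hat K_k^{(h)})-\calJ^{(h)}(K_\star^{(h)}) \le 142\,\norm{P_\star^{(h)}}^8\,\norm{\bmat{\hat A_{k-1}^{(h)} & \hat B_{k-1}^{(h)}} - \bmat{A_\star^{(h)} & B_\star^{(h)}}}_F^2,
\]
where the index shift reflects that $\hat K_k^{(h)}$ is synthesized from the epoch-$(k-1)$ estimate.

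Next I would substitute the bound of $\calE_{\mathsf{est},1}$, splitting it into a variance piece $\tfrac{\sigma^2\dtheta\pzeromax}{\tau_k\sigma_k^2}\log(HT)$ and a misspecification piece $\tfrac{\beta_1 H^{2/5}d(\hat\Phi_k,\Phi_\star)^2}{\sigma_k^2}$, and in the latter substitute the contraction bound of $\calE_{\mathsf{cont}}$, namely $d(\hat\Phi_k,\Phi_\star)^2\lesssim \rho^{2kN}d(\hat\Phi_0,\Phi_\star)^2 + \tfrac{\Ccontract^2 N\log^2(HT)}{(1-\sqrt2\rho^N)^2 H\tau_k\sigma_k^2}$. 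Multiplying by the epoch length $\tau_k$ and summing produces four sums, each controlled with the schedule \eqref{eq: exploration, hard to learn}: (i) the variance sum $\sum_k \tfrac{\dtheta}{\sigma_k^2}$, bounded via $\sigma_k^2\ge\sqrt{\dtheta/(\du\tau_k)}$ by $\sum_k\sqrt{\dtheta\du\tau_k}\lesssim\sqrt{\dtheta\du}\sqrt T$, giving the $c_1\sqrt{\dtheta\du}\sqrt T\log T$ term; (ii) the exploration sum $\sum_k\tau_k\sigma_k^2$, bounded termwise (since $\max\le$ sum) so that the $\tau_k^{-1/4}H^{-1/5}$ branch gives $\sum_k\tau_k^{3/4}H^{-1/5}\lesssim T^{3/4}/H^{1/5}$, the $\sqrt{\dtheta/(\du\tau_k)}$ branch gives $\sqrt{\dtheta/\du}\sqrt T$ which—multiplied by the $\du$ inside the exploration-cost factor—folds into the $\sqrt{\dtheta\du}\sqrt T$ term, and the $\rho^{(k-1)N}d(\hat\Phi_0,\Phi_\star)$ branch is a convergent geometric series contributing only a $\texttt{poly}\log$ transient; (iii) the representation-variance sum, where $\sigma_k^4\ge\tau_k^{-1/2}H^{-2/5}$ yields $\sum_k H^{-1/5}\tau_k^{1/2}\lesssim T^{1/2}/H^{1/5}$, dominated by $T^{3/4}/H^{1/5}$; and (iv) the representation-contraction sum $\sum_k \tfrac{\tau_k\beta_1 H^{2/5}\rho^{2kN}d(\hat\Phi_0,\Phi_\star)^2}{\sigma_k^2}$, where $\sigma_k^2\ge\rho^{(k-1)N}d(\hat\Phi_0,\Phi_\star)$ and $2\rho^N\le1$ collapse the factor $(2\rho^N)^k$, while Assumption~\ref{asmp: upper bound on representation error exp} ($\beta_1 H^{2/5}d(\hat\Phi_0,\Phi_\star)\le\epsmin/4$) makes it another $\texttt{poly}\log$ transient.

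Finally I would collect all transient contributions—the $\xi_k$ burn-in costs, the first-epoch overhead absorbed through $\tau_1=\tau_{\mathsf{warm}}\log^9 T$, and the two geometric tails above—into the $c_0\log^9 T$ term, tracking that the dimension factors hidden in $\beta_1$ and $\Ccontract$ land in $c_0$ and $c_2$ while $c_1=\texttt{poly}(\pzeromax,\psibmax,\sigma)$ carries no explicit dimension. The main obstacle is the very first step: cleanly converting the finite-length, non-stationary, exploration-perturbed epoch cost into (infinite-horizon cost of $\hat K_k^{(h)}$) $+$ (exploration cost) $+$ (controlled transient) while only conditioning on the success event. The rollout within an epoch does not begin at stationarity, and the exploratory input both adds a direct input cost and inflates the state covariance; ensuring the resulting transient is uniform in $k$ (so the tails accumulate to at most $\texttt{poly}\log T$ rather than growing with the horizon) is the delicate part, and it is exactly where the single-task epoch-cost machinery of \citet{lee2023nonasymptotic} must be re-derived for the per-task rollouts of \Cref{alg: ce with exploration}.
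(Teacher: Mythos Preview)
Your proposal is correct and follows essentially the same route as the paper: the paper invokes Lemma~22 of \citet{lee2023nonasymptotic} to obtain precisely the epoch-cost decomposition you describe (CE suboptimality $+$ exploration cost $+$ bounded transient), then substitutes the $\calE_{\mathsf{est},1}$ and $\calE_{\mathsf{cont}}$ bounds and controls the resulting sums using the three branches of $\sigma_k^2$ in \eqref{eq: exploration, hard to learn} exactly as you outline in (i)--(iv). The only cosmetic difference is that the paper lumps your sum (iii) together with the $\tau_k^{-1/4}H^{-1/5}$ branch of the exploration cost into a single $\tau_{k-1}^{3/4}/H^{1/5}$ term rather than noting (as you do) that (iii) is actually $O(\sqrt{T}/H^{1/5})$ and hence dominated.
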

\begin{proof}
We may invoke Lemma 22 of \cite{lee2023nonasymptotic} to show that 
    \begin{equation}
        \label{eq: R_1 bound}
        \begin{aligned}
        R_1^{(h)} - T \calJ^{(h)}(K_\star^{(h)})& \leq \sum_{k=2}^{k_{\fin}} 142 \norm{P_\star^{(h)}}^8 (\tau_k - \tau_{k-1}) \E\brac{\mathbf{1}\brac{E_k^{(h)}} \norm{\bmat{\hat A_{k-1}^{(h)} & \hat B_{k-1}^{(h)}} - \bmat{A_\star^{(h)} & B_\star^{(h)}}}^2} \\&+ (\tau_k - \tau_{k-1}) \calJ^{(h)}(K_\star^{(h)})  
        + 4(\tau_k - \tau_{k-1}) \du \norm{P_{\star}^{(h)}} \sigma_k^2 \Psi_{B_\star^{(h)}}^2 + 2 x_b \log T \norm{P_\star^{(h)}}. 
    \end{aligned}
    \end{equation}
    
where 
\begin{align*}
   E_k^{(h)} &=  \curly{d(\hat \Phi_{k-1}, \Phi_\star) \leq \rho^{(k-1) N} d(\hat \Phi_{0}, \Phi_\star) + \frac{C_{\mathsf{contract} }\sqrt{\pzeromax} \sqrt{N} \log (HT)}{(1-\sqrt{2}\rho^N) \sqrt{H \tau_{k} \sigma_k^2} }} \\
   &\cap \curly{\norm{\bmat{\hat A_{k-1}^{(h)} & \hat B_{k-1}^{(h)}} - \bmat{A_\star^{(h)} & B_\star^{(h)}}}_F^2 \leq C_{\mathsf{est},1} \frac{\sigma^2 \dtheta \norm{P_{K_0}^{(h)}}}{\tau_{k-1} \sigma_{k-1}^2} \log T + \frac{\beta_1 H^{2/5} d(\hat \Phi_{k-1}, \Phi_\star)^2 }{\sigma_k^2}}
\end{align*}
is the event bounding the norm of the dynamics error in terms of the misspecification as well as the misspecification in terms of the amount of data. Under the event $E_k^{(h)}$, we have
\begin{align*}
    &\E\brac{\mathbf{1}\brac{E_k^{(h)}}  \norm{\bmat{\hat A_{k-1}^{(h)} & \hat B_{k-1}^{(h)}} - \bmat{A_\star^{ (h)} & B_\star{ (h)}}}^2} \\
    &\leq C_{\mathsf{est},1} \frac{\sigma^2 \dtheta \norm{P_{K_0}^{(h)}}}{\tau_{k-1} \sigma_{k-1}^2} \log T + \frac{2 \beta_1 \rho^{2(k-1)N}  d(\hat \Phi_0, \Phi_\star)^2}{\sigma_{k-1}^2}  + 2 \beta_1 \frac{C_{\mathsf{contract} }^2 \pzeromax N \log^2 (HT)}{(1-\sqrt{2}\rho^N)^2 H^{3/5} \tau_{k-1} \sigma_{k-1}^4}.
\end{align*}
Substituting the above inequality into \eqref{eq: R_1 bound}, we find
        \begin{align*}
            &R_1^{(h)} - T \calJ^{(h)}(K_\star^{(h)}) \\
            &\lesssim \sum_{k=2}^{k_{\fin}} \left(\norm{P_\star^{(h)}}^8  \tau_{k-1}\paren{\frac{\sigma^2 \dtheta \norm{P_{K_0}^{(h)}}}{\tau_{k-1} \sigma_{k-1}^2} \log T + \frac{\beta_1 \rho^{2(k-1)N}  d(\hat \Phi_0, \Phi_\star)^2}{\sigma_{k-1}^2}  +  \beta_1 \frac{C_{\mathsf{contract} }^2 \pzeromax N \log^2 (HT)}{(1-\sqrt{2}\rho^N)^2 H^{3/5} \tau_{k-1} \sigma_{k-1}^4}}\right) \\&
        + \tau_{k-1} \du \norm{P_{\star}^{(h)}} \sigma_k^2 \Psi_{B_\star^{(h)}}^2 +  x_b \log T \norm{P_\star^{(h)}} .
    \end{align*}
    Substituting in our choice of $\sigma_k^2$ from \eqref{eq: exploration, hard to learn}, we find
    \begin{align*}
        &R_1^{(h)} - T \calJ^{(h)}(K_\star^{(h)}) \\
        &\lesssim \sum_{k=2}^{k_{\fin}} \sigma^2 \sqrt{d_{\theta} \du} \norm{P_{K_0^{(h)}}^{(h)}}^9 \Psi_{B_\star^{(h)}}^2 \sqrt{\tau_{k-1}} \log T + \frac{ \paren{\norm{P_{K_0^{(h)}}^{(h)}}^8\beta_1 C_{\mathsf{contract}}^2 \pzeromax N \log^2(HT) + \du \norm{P_{K_0^{(h)}}^{(h)}} \Psi_{B_\star^{(h)}}^2}  }{(1-\sqrt{2}\rho^N) H^{1/6} } \tau_{k-1}^{3/4}\\
        & +  \paren{\beta_1 \norm{P_{K_0^{(h)}}^{(h)}}^8 + \du \norm{P_{K_0^{(h)}}^{(h)}} \Psi_{B_\star^{(h)}}^2} \tau_{k-1} \rho^{(k-1) N}d(\Phi_0, \Phi_\star)+ x_b \log T \norm{P_\star^{(h)}} \\
        & \lesssim \sigma^2 \sqrt{d_{\theta} \du} \norm{P_{K_0^{(h)}}^{(h)}}^9 \Psi_{B_\star^{(h)}}^2  \sqrt{T} \log T + \frac{ \paren{\beta_1 \norm{P_{K_0^{(h)}}^{(h)}}^8 C_{\mathsf{contract}}^2 \pzeromax N \log^2(HT) + \du \norm{P_{K_0^{(h)}}^{(h)}} \Psi_{B_\star^{(h)}}^2}  }{(1-\sqrt{2}\rho^N)  } \frac{T^{3/4}}{H^{1/6}} \\
        & +\frac{\paren{\beta_1 \norm{P_{K_0^{(h)}}^{(h)}}^8 + \du \norm{P_{K_0^{(h)}}^{(h)}} \Psi_{B_\star^{(h)}}^2} \tau_1}{1-\sqrt{2}\rho^N} d(\Phi_0, \Phi_\star) + x_b \log^2 T \norm{P_{K_0^{(h)}}^{(h)}}.
        \end{align*}
        The result now follows by substituting in the definition of $\tau_1$ from \Cref{thm: regret bound naive exploration}, of $\beta_1$ from Assumption~\ref{asmp: upper bound on representation error exp}, and of $C_{\mathsf{contract}}$ from \Cref{thm: dfw bound formal}.
\end{proof}

\begin{lemma}
    \label{lem: success event regret bound 2}
    In the setting of \Cref{thm: regret bound no exploration}, we have
    \begin{align*}
        R_1^{(h)} 
         &\leq \texttt{poly}\paren{\sigma, \dtheta, \du, \dx, \frac{1}{\alpha}, \frac{1}{1-\sqrt{2}\rho^N}, \psibmax, \pzeromax, x_b, \tau_{\mathsf{warm\,up}}, d(\hat \Phi_0, \Phi_\star)} \log^2 T \\
         & + \texttt{poly}\paren{\sigma, \dtheta, \du, \dx, \frac{1}{\alpha}, \frac{1}{1-\sqrt{2}\rho^N}, \psibmax, \pzeromax, x_b, N} \frac{\sqrt{T}}{\sqrt{H}} \log^2(TH).
    \end{align*}
\end{lemma}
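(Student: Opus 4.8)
The plan is to mirror the proof of \Cref{lem: success event regret bound 1}, exploiting the simplification afforded by the persistent-excitation Assumption~\ref{asmpt: persisent excitation}. First I would invoke Lemma 22 of \citep{lee2023nonasymptotic} to produce, for the quantity $R_1^{(h)} - T\calJ^{(h)}(K_\star^{(h)})$ appearing in the decomposition \eqref{eq: regret decomposition}, the epoch-wise bound
\begin{align*}
    R_1^{(h)} - T\calJ^{(h)}(K_\star^{(h)}) &\leq \sum_{k=2}^{k_{\fin}} \Big( 142\norm{P_\star^{(h)}}^8 (\tau_k - \tau_{k-1})\, \E\brac{\mathbf{1}[E_k^{(h)}]\, \norm{\bmat{\hat A_{k-1}^{(h)} & \hat B_{k-1}^{(h)}} - \bmat{A_\star^{(h)} & B_\star^{(h)}}}^2} \\
    &\quad + 4(\tau_k-\tau_{k-1})\du\norm{P_\star^{(h)}}\sigma_k^2 \Psi_{B_\star^{(h)}}^2 + 2 x_b \log T\, \norm{P_\star^{(h)}}\Big),
\end{align*}
where $E_k^{(h)}$ is the event that the contraction bound of $\calE_{\mathsf{cont}}$ holds together with the $\calE_{\mathsf{est},2}$-style estimate $C_{\mathsf{est},2}\tfrac{\sigma^2 \dtheta}{\tau_{k-1}\alpha^2}\log(HT) + \beta_2\, d(\hat\Phi_{k-1},\Phi_\star)^2$, and the baseline cost terms $(\tau_k-\tau_{k-1})\calJ^{(h)}(K_\star^{(h)})$ telescope against $-T\calJ^{(h)}(K_\star^{(h)})$. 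Validity of the $\calE_{\mathsf{est},2}$ estimate here rests on the covariance lower bound $\lambda_{\min}(\hat\Phi_{k-1}^\top(\bar\Sigma^{\tau_{k-1}}\otimes I_{\dx})\hat\Phi_{k-1}) \gtrsim \alpha^2$, obtained from \eqref{eq: cov lower bound w/ rep error} and Assumption~\ref{asmpt: persisent excitation} once $d(\hat\Phi_{k-1},\Phi_\star)$ is small, as guaranteed by the contraction event established in \Cref{lem: success event bound 2}.

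Next I would substitute the contraction bound for $d(\hat\Phi_{k-1},\Phi_\star)^2$, splitting it into a geometric \emph{bias} piece $\rho^{2(k-1)N} d(\hat\Phi_0,\Phi_\star)^2$ and a \emph{variance} piece, giving
\begin{align*}
    \E\brac{\mathbf{1}[E_k^{(h)}]\,\norm{\cdots}^2} \lesssim \frac{\sigma^2\dtheta}{\tau_{k-1}\alpha^2}\log(HT) + \beta_2\rho^{2(k-1)N}d(\hat\Phi_0,\Phi_\star)^2 + \frac{\beta_2\,\Ccontract^2 \pzeromax N\log^2(HT)}{(1-\sqrt{2}\rho^N)^2 H\tau_{k-1}\sigma_{k-1}^2}.
\end{align*}
Then I would insert the schedule \eqref{eq: exploration easy to learn}, using $\sigma_{k-1}^2 \geq \tau_{k-1}^{-1/2}H^{-1/2}$ to control the denominators and $\sigma_k^2$ to control the exploration cost. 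Three geometric sums over $k = 2,\dots,k_{\fin}$ emerge: the least-squares term contributes $\tau_{k-1}\cdot\tfrac{\sigma^2\dtheta}{\tau_{k-1}\alpha^2}=\tfrac{\sigma^2\dtheta}{\alpha^2}$ per epoch, i.e.\ a $\log T$ factor after multiplying by $k_{\fin}$; the bias term satisfies $\tau_{k-1}\rho^{2(k-1)N}\leq \tau_1 2^{-(k-1)}$ since $\rho^N\leq \tfrac12$, summing to $O(\tau_1)$; and both the exploration cost $\tau_k\sigma_k^2\simeq \sqrt{\tau_k}/\sqrt{H}$ and the representation-variance term $\simeq \sqrt{\tau_{k-1}}/\sqrt H$ sum geometrically to $O(\sqrt{T/H})$. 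Collecting these with the boundary term $x_b\log T\norm{P_\star^{(h)}}$ into the two stated groups and substituting the definitions of $\tau_1$, $\beta_2$, and $\Ccontract$ (from \Cref{thm: dfw bound formal}) yields the claimed bound.

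The main obstacle — and the entire reason this setting escapes the $T^{3/4}/H^{1/5}$ term of \Cref{lem: success event regret bound 1} — lies in how the representation error enters the dynamics estimate. In the hard-to-identify case the misspecification contributed a term $\beta_1 H^{2/5} d(\hat\Phi_{k-1},\Phi_\star)^2/\sigma_k^2$, whose $1/\sigma_k^2$ forced a large exploration floor $\sigma_k^2\sim\tau_k^{-1/4}H^{-1/5}$; here Assumption~\ref{asmpt: persisent excitation} supplies excitation of order $\alpha^2$ for \emph{free} under both $K_0^{(h)}$ and $K_\star^{(h)}$, so the representation error enters only through the constant $\beta_2$ with no $1/\sigma_k^2$ and no $H^{2/5}$, permitting the smaller floor $\sigma_k^2\sim \tau_k^{-1/2}H^{-1/2}$ that converts every non-logarithmic term into the clean $\sqrt{T/H}$ rate. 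The delicate verifications are therefore (i) that the $\alpha^2$ covariance lower bound persists across epochs, which requires $\norm{\hat K_{k+1}^{(h)}-K_\star^{(h)}}$ to stay small enough (via \Cref{lem: CE closeness}) that the excitation under the synthesized controller remains comparable to that under $K_\star^{(h)}$, and (ii) that $\sigma_{k-1}^2\geq \tau_{k-1}^{-1/2}H^{-1/2}$ genuinely dominates the denominators so each geometric sum collapses to the advertised rate.
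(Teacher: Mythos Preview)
Your proposal is correct and follows essentially the same route as the paper: invoke Lemma~22 of \cite{lee2023nonasymptotic} to get the epoch-wise decomposition, substitute the $\calE_{\mathsf{est},2}$ estimate together with the contraction bound on $d(\hat\Phi_{k-1},\Phi_\star)$, plug in the schedule \eqref{eq: exploration easy to learn}, and sum the resulting geometric series. The only cosmetic difference is in how the representation-bias piece is dispatched: you use $\rho^N \le \tfrac12$ directly to get $\tau_{k-1}\rho^{2(k-1)N}\lesssim 2^{-(k-1)}\tau_1$, whereas the paper pairs this term with the exploration-cost term via the floor $\sigma_k^2 \ge \rho^{(k-1)N}d(\Phi_0,\Phi_\star)$, obtaining $\tau_{k-1}\rho^{(k-1)N}d(\hat\Phi_0,\Phi_\star)$ and summing that to $O(\tau_1)$; both collapse into the poly-log constant.
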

\begin{proof}
    We again invoke Lemma 22 of \cite{lee2023nonasymptotic} to show that \eqref{eq: R_1 bound} holds in this setting, where the event bounding the norm of the dynamics error is now given by 
    \begin{align*}
     E_k^{(h)} &=  \curly{d(\hat \Phi_{k-1}, \Phi_\star) \leq \rho^{(k-1) N} d(\hat \Phi_{0}, \Phi_\star) + \frac{C_{\mathsf{contract} } \sqrt{\pzeromax} \sqrt{N} \log (HT)}{(1-\sqrt{2}\rho^N) \sqrt{H \tau_{k} \sigma_k^2} }} \\
    &\cap \curly{
   \norm{\bmat{\hat A_k^{(h)} & \hat B_k^{(h)}} - \bmat{A_\star^{(h)} & B_\star^{(h)}}}_F^2 \leq C_{\mathsf{est},2} \frac{\sigma^2  \dtheta   }{ \tau_{k} \alpha^2 }\log T+ \beta_2 d(\hat \Phi_k, \Phi_\star)^2
    }.
\end{align*}
    Under this event, we have 
        \begin{align*}
            &\E\brac{\mathbf{1}\brac{E_k^{(h)}}  \norm{\bmat{\hat A_{k-1}^{(h)} & \hat B_{k-1}^{(h)}} - \bmat{A_\star^{ (h)} & B_\star{ (h)}}}^2} \\
            &\leq C_{\mathsf{est},2} \frac{\sigma^2  \dtheta   }{ \tau_{k-1} \alpha^2 }\log T + \frac{2 \beta_2 \rho^{2(k-1)N}  d(\hat \Phi_0, \Phi_\star)^2}{\sigma_{k-1}^2}  + 2 \beta_2 \frac{C_{\mathsf{contract}}^2 \pzeromax N \log^2 (HT)}{(1-\sqrt{2}\rho^N)^2 H \tau_{k-1} \sigma_{k-1}^2}.
        \end{align*}
    Substituting the above inequality into \eqref{eq: R_1 bound}, we have 
   \begin{align*}
        &R_1^{(h)} - T \calJ^{(h)}(K_\star^{(h)}) \\&\lesssim\sum_{k=2}^{k_{\fin}}  \norm{P_\star^{(h)}}^8 \tau_{k-1} \paren{ \frac{\sigma^2  \dtheta   }{ \tau_{k-1} \alpha^2 }\log T + \frac{\beta_2 \rho^{2(k-1)N}  d(\hat \Phi_0, \Phi_\star)^2}{\sigma_{k-1}^2}  + \beta_2 \frac{C_{\mathsf{contract}}^2 \pzeromax N \log^2 (HT)}{(1-\sqrt{2}\rho^N)^2 H \tau_{k-1} \sigma_{k-1}^2}} \\&  
        + \tau_{k-1} \du \norm{P_{\star}^{(h)}} \sigma_k^2 \Psi_{B_\star^{(h)}}^2 + x_b \log T \norm{P_\star^{(h)}}. 
    \end{align*}
    Substituting our choice of $\sigma_k^2$ from \eqref{eq: exploration easy to learn}, we have
    \begin{align*}
        R_1^{(h)} - T \calJ(K_\star^{(h)}) 
        &\lesssim
        \sum_{k=2}^{k_{\fin}}  \frac{ \norm{P_\star^{(h)}}^8 \sigma^2  \dtheta   }{  \alpha^2 }\log T + \paren{\beta_2 \norm{P_\star^{(h)}}^8 + \du \norm{P_{\star}^{(h)}} \Psi_{B_\star^{(h)}}^2} \tau_{k-1} \rho^{(k-1)N}  d(\hat \Phi_0, \Phi_\star) \\
        & + \paren{\beta_2 \norm{P_\star^{(h)}}^8 \frac{C_{\mathsf{contract}}^2 \pzeromax N \log^2 (HT)}{(1-\sqrt{2}\rho^N)^2 \sqrt{H} } + \frac{\du \norm{P_{\star}^{(h)}} \Psi_{B_\star^{(h)}}^2}{\sqrt{H}} }\sqrt{\tau_{k-1} } \\&  + x_b \log T \norm{P_\star^{(h)}}\\
         &\lesssim \frac{ \norm{P_\star^{(h)}}^8 \sigma^2  \dtheta   }{  \alpha^2 }\log^2 T + \frac{\tau_1 \paren{\beta_2 \norm{P_\star^{(h)}}^8 + \du \norm{P_{\star}^{(h)}} \Psi_{B_\star^{(h)}}^2} d(\hat \Phi_0, \Phi_\star)}{1-\sqrt{2}\rho^N} +  x_b \log^2 T \norm{P_\star^{(h)}} \\
         & + \paren{\beta_2 \norm{P_\star^{(h)}}^8 \frac{C_{\mathsf{contract}}^2 \pzeromax N \log^2 (HT)}{(1-\sqrt{2}\rho^N)^2 }+ \du \norm{P_{\star}^{(h)}} \Psi_{B_\star^{(h)}}^2 }\frac{\sqrt{T}}{\sqrt{H}}.
    \end{align*}
    We conclude by substituting $\beta_2$ from Assumption~\ref{asmp: upper bound on representation error no exp}, $C_{\mathsf{contract}}$ from \Cref{thm: dfw bound formal}, and $\tau_1$ from \Cref{thm: regret bound no exploration}. 
\end{proof}

With these lemmas in hand, we are now ready to prove the main results. 

\subsubsection{Proof of \Cref{thm: regret bound naive exploration}}

\begin{proof}
It follows from Lemma 19 of \cite{lee2023nonasymptotic} that 
    \begin{align}
        \label{eq: R3 bound}
        R_3^{(h)} \leq 3 \tau_1 \max\curly{\dx, \du} \norm{P_{K_0^{(h)}}} \Psi_{B_\star^{(h)}}^2.
    \end{align}
The second term, $R_2^{(h)}$ may be bounded by using the fact that the state is bounded up until a failure situation is reached, and after that failure situation, the initial stabilizing controller is played. In the  probability $1-T^{-2}$, we have from Lemma 20 of \cite{lee2023nonasymptotic} that
\begin{align}
    \label{eq: R2 bound}
        R_2^{(h)} \leq T^{-1} \log(T) \texttt{poly}\paren{\sigma, \dx, \du, \dtheta, x_b, K_b, \norm{Q}, \thetamax, \pzeromax, \psibmax} + \sum_{k=1}^{k_{\mathsf{fin}}} 2(\tau_k - \tau_{k-1}) \du \sigma_k^2.
\end{align}
By substituting the choice of $\sigma_k^2$ from \Cref{thm: regret bound naive exploration} into the above inequality, and invoking \Cref{lem: success event regret bound 1}, we find that 
\begin{align*}
    \calR_T^{(h)} &= R_1^{(h)} - T\calJ^{(h)}(K_\star^{(h)}) + R_2^{(h)} + R_3^{(h)} \\
    & \leq \texttt{poly}\paren{\sigma, \dx, \du, \dtheta, x_b, K_b, \norm{Q}, \thetamax, \pzeromax, \psibmax, \tau_{\mathsf{warm\,up}}, x_b, \frac{1}{1-\sqrt{2}\rho^N}, d(\hat \Phi_0,\Phi_\star), \log H} \log^9 T \\
        &+ \texttt{poly}\paren{\pzeromax, \psibmax, \sigma}\sqrt{\dtheta \du} \sqrt{T} \log^2 T \\
        &+ \texttt{poly}\paren{\dx, \du, \dtheta, \pzeromax, \psibmax, \thetamax, \sigma, \frac{1}{1-\sqrt{2}\rho^N}, N}\frac{T^{3/4}}{H^{1/5}} \log^2(TH).
\end{align*}
\end{proof}

\subsubsection{Proof of \Cref{thm: regret bound no exploration}}

\begin{proof}

We may again invoke Lemma 19 and 20 of \cite{lee2023nonasymptotic} to show that \eqref{eq: R2 bound} and \eqref{eq: R3 bound} hold. Substituting the choice of $\sigma_k^2$ from \Cref{thm: regret bound no exploration} into \eqref{eq: R2 bound}, and invoking  \Cref{lem: success event regret bound 2}, we find
\begin{align*}
    \calR_T^{(h)} &= R_1^{(h)} - T\calJ^{(h)}(K_\star^{(h)}) + R_2^{(h)} + R_3^{(h)} \\
    & \leq \texttt{poly}\paren{\sigma, \dtheta, \du, \dx, \frac{1}{\alpha}, \frac{1}{1-\sqrt{2}\rho^N}, \psibmax, \pzeromax, x_b, K_b, \thetamax, \norm{Q}, \tau_{\mathsf{warm\,up}}, d(\hat \Phi_0, \Phi_\star)} \log^4 T \\
         & + \texttt{poly}\paren{\sigma, \dtheta, \du, \dx, \frac{1}{\alpha}, \frac{1}{1-\sqrt{2}\rho^N}, \psibmax, \pzeromax, x_b,  N} \frac{\sqrt{T}}{\sqrt{H}} \log^2(TH).
\end{align*}
\end{proof}

\end{document}